\documentclass[11pt,letterpaper]{article}

\usepackage[margin=1in]{geometry}
\usepackage{amsmath,amssymb,amsthm,mathtools}
\usepackage{enumitem}
\usepackage{booktabs}
\usepackage{array}
\usepackage{tabularx}
\usepackage{aliascnt}
\usepackage{microtype}
\usepackage[numbers,sort&compress]{natbib}
\usepackage[colorlinks=true,linkcolor=blue,citecolor=blue,urlcolor=blue]{hyperref}
\usepackage[nameinlink,noabbrev]{cleveref}
\newtheorem{theorem}{Theorem}[section]

\newaliascnt{proposition}{theorem}
\newtheorem{proposition}[proposition]{Proposition}
\aliascntresetthe{proposition}

\newaliascnt{lemma}{theorem}
\newtheorem{lemma}[lemma]{Lemma}
\aliascntresetthe{lemma}

\newaliascnt{corollary}{theorem}

\aliascntresetthe{corollary}

\newaliascnt{assumption}{theorem}
\newtheorem{assumption}[assumption]{Assumption}
\aliascntresetthe{assumption}

\theoremstyle{definition}
\newaliascnt{definition}{theorem}

\aliascntresetthe{definition}

\theoremstyle{remark}
\newaliascnt{remark}{theorem}
\newtheorem{remark}[remark]{Remark}
\aliascntresetthe{remark}

\numberwithin{equation}{section}

\crefname{theorem}{Theorem}{Theorems}
\Crefname{theorem}{Theorem}{Theorems}
\crefname{lemma}{Lemma}{Lemmas}
\Crefname{lemma}{Lemma}{Lemmas}
\crefname{assumption}{Assumption}{Assumptions}
\Crefname{assumption}{Assumption}{Assumptions}
\crefname{definition}{Definition}{Definitions}
\Crefname{definition}{Definition}{Definitions}
\crefname{remark}{Remark}{Remarks}
\Crefname{remark}{Remark}{Remarks}
\crefname{proposition}{Proposition}{Propositions}
\Crefname{proposition}{Proposition}{Propositions}
\crefname{corollary}{Corollary}{Corollaries}
\Crefname{corollary}{Corollary}{Corollaries}
\crefname{appendix}{Appendix}{Appendices}
\Crefname{appendix}{Appendix}{Appendices}
\crefname{section}{Section}{Sections}
\Crefname{section}{Section}{Sections}
\crefname{subsection}{Section}{Sections}
\Crefname{subsection}{Section}{Sections}

\newcommand{\R}{\mathbb R}
\newcommand{\E}{\mathbb E}
\newcommand{\dd}{\,\mathrm d}
\newcommand{\dx}{\,\mathrm dx}
\newcommand{\norm}[1]{\left\lVert#1\right\rVert}
\newcommand{\ip}[2]{\left\langle#1,#2\right\rangle}
\DeclareMathOperator{\tr}{tr}
\DeclareMathOperator{\Div}{div}
\DeclareMathOperator{\prox}{prox}
\DeclareMathOperator{\Ent}{Ent}
\newcommand{\heat}{\mathsf H}
\newcommand{\gen}{\mathcal L_\lambda}
\newcommand{\inv}{\widehat\pi_{\lambda,h}}

\newcommand{\drift}{\nabla U_\lambda}
\newcommand{\hess}{\nabla^2U_\lambda}
\newcommand{\Lp}[2]{\left\lVert#1\right\rVert_{L^{#2}(\pi_\lambda)}}
\newcommand{\Nminus}[1]{\left\lVert#1\right\rVert_{\dot H^{-1}(\pi_\lambda)}}
\allowdisplaybreaks[1]
\hypersetup{pdftitle={Near-Linear Accuracy Bounds for Moreau--Yosida Unadjusted Langevin Sampling},pdfauthor={},linktoc=page}

\title{Near-Linear Accuracy Bounds for\\
Moreau--Yosida Unadjusted Langevin Sampling}
\author{Yuchen Xin\textsuperscript{*},
Zhihua Zhang\textsuperscript{$\dagger$}}
\date{}

\begin{document}
\maketitle

\begingroup
\renewcommand{\thefootnote}{\fnsymbol{footnote}}
\footnotetext[1]{School of Mathematical Sciences, Peking University; email: \texttt{2301110087@pku.edu.cn}}
\footnotetext[2]{School of Mathematical Sciences, Peking University; email: \texttt{zhzhang@math.pku.edu.cn}}
\endgroup

\begin{abstract}
We establish near-linear accuracy bounds for the classical Moreau--Yosida unadjusted Langevin algorithm (MYULA).
The target is $\pi\propto e^{-f-g}$, where $f\in C^2(\R^d)$ is $m$-strongly convex with Lipschitz gradient and $g$ is convex and globally Lipschitz. Under an explicit parameter-dependent step-size condition, we bound the invariant-measure bias relative to the Moreau-smoothed target by $\widetilde O(h)$, with only logarithmic dependence on the inverse smoothing parameter in the error coefficient. Combining this estimate with the Moreau approximation bias and Wasserstein contraction gives $\widetilde O(\varepsilon^{-1})$ iterations to make the $N$th-iterate law $\mu_N$ satisfy $\sqrt m\,W_2(\mu_N,\pi)\le\varepsilon$, for fixed model parameters and initialization. 
We bound the stationary error directly, without assuming third derivatives or a Lipschitz Hessian. Each iteration uses one gradient evaluation and one exact proximal evaluation. The key idea in our analysis is to convert a second-order stationary residual into a Wasserstein bound using a Poisson-based estimate.  
\end{abstract}
\setcounter{tocdepth}{2}
\tableofcontents

\section{Introduction}\label{sec:introduction}

We study a sampling problem from composite log-concave distributions
\[
 \pi(\mathrm dx)\propto e^{-f(x)-g(x)}\dx,
 \qquad x\in\R^d,
\]
where $f\in C^2(\R^d)$ is $m$-strongly convex with $L_f$-Lipschitz gradient, and $g$ is convex and globally $G$-Lipschitz. The nonsmooth term can represent penalties such as $\ell_1$ regularization. The Moreau--Yosida unadjusted Langevin algorithm (MYULA) of \citet{DMP2018} replaces $g$ by its Moreau envelope $g_\lambda$ and uses the update
\[
 \widehat X_{k+1}
 =\widehat X_k-h\nabla(f+g_\lambda)(\widehat X_k)
   +\sqrt{2h}\,\xi_{k+1},
 \qquad \xi_{k+1}\stackrel{\mathrm{i.i.d.}}\sim N(0,I_d).
\]
The identity $\nabla g_\lambda=\lambda^{-1}(I-\prox_{\lambda g})$ allows each iteration to use one evaluation of $\nabla f$ and one exact proximal evaluation of $g$, without an acceptance step.

The central issue is how smoothing affects discretization accuracy. Reducing $\lambda$ improves the approximation of $\pi$ by $\pi_\lambda\propto e^{-f-g_\lambda}$, but increases the global smoothness bound $L_\lambda=L_f+\lambda^{-1}$. Thus, a small-step bias estimate for a fixed $\lambda$ does not by itself give a sharp complexity bound for the original target: its dependence on $\lambda$ must also be controlled.

\subsection{Contribution}
In this work, we would address the above issue and present  error bounds for MYULA.  
We denote
\[
 \tau_f:=\sup_x\tr[\nabla^2f(x)],
 \qquad R:=\frac{G+\sqrt{G^2+4\tau_f}}2,
\]
and let $\ell_{h,\lambda}:=1+\log(e+(mh)^{-1}+L_\lambda/m)$.
Under the step-size condition \eqref{eq:bias-step}, \cref{thm:bias} proves that the MYULA invariant law satisfies
\[
 \sqrt m\,W_2(\widehat\pi_{\lambda,h},\pi_\lambda)
 \le ChR^2\ell_{h,\lambda}^2.
\]
The error coefficient depends on $\lambda^{-1}$ only through a logarithm, although the sufficient step-size restriction is stronger than $hL_\lambda\le c$.

For $0<\varepsilon\le1$, choose $\lambda=\varepsilon/G^2$ and the fixed step size in \cref{thm:complexity}. Combining the invariant-measure estimate with the Moreau bias bound of \citet[Proposition~5.9]{ActiveTrace} gives the sufficient iteration complexity
\[
 N=\widetilde O\!\left[
 \frac{L_f\tau_f}{m^2}+\frac{L_fR}{m^{3/2}}
 +\frac1\varepsilon\left(
       \frac{R^2}{m}+\frac{G^3R}{m^2}
   \right)\right]
\]
for $\sqrt m\,W_2(\mu_N,\pi)\le\varepsilon$, where $\mu_N=\mathcal L(\widehat X_N)$ and $\widetilde O$ suppresses logarithmic factors. For fixed model parameters and initialization, this improves the $\widetilde O(\varepsilon^{-4/3})$ guarantee of \citet{XinZhang} to $\widetilde O(\varepsilon^{-1})$. For fixed $L_f/m$ and $G/\sqrt m$, the bound also gives $\widetilde O(d/\varepsilon)$ iterations.

\paragraph{Proof strategy.}
The core strategy of our analysis is a Poisson-based estimate that converts a second-order stationary residual into a Wasserstein bound (\cref{sec:residual-lemma}).
By testing the residual against a Poisson solution, we place derivatives on the test function rather than on the residual fields.
Weighted energy estimates control the resulting terms while retaining the Hessian as a weight, instead of replacing it by its global bound. In particular, 
for MYULA, exact drift--heat identities express the residual through vector and matrix fluxes (\cref{sec:flux}).
Finite-order density-ratio estimates and short-time heat estimates control these fluxes (\cref{sec:flux-proofs}), allowing the curvature contribution to be bounded through its average under $\pi_\lambda$.
Together, these steps yield a near-linear invariant-measure bias with only logarithmic dependence on $\lambda^{-1}$ in its coefficient, without additional higher-order smoothness assumptions (\cref{sec:bias-proof}).
Adding the smoothing and convergence errors gives the complexity bound in \cref{sec:complexity-proof}.

\subsection{Related Work}\label{sec:related}

\paragraph{MYULA and averaged curvature.}
\citet{DMP2018} introduced MYULA and established asymptotic and nonasymptotic convergence guarantees for nonsmooth log-concave targets. More recently, \citet{ActiveTrace} analyzed the weak Moreau Hessian through its average along a reference heat path. For the structured penalties treated there, curvature localization yields $\widetilde O(\varepsilon^{-2})$ end-to-end complexity. The closest result is \citet{XinZhang}, who consider the same assumptions and MYULA transition as this paper. Their discrete Poisson-corrector analysis gives an invariant-measure bias of $O(h)+\widetilde O(h^{3/4})$, with only logarithmic smoothing dependence in its coefficients, and consequently $\widetilde O(\varepsilon^{-4/3})$ complexity. Our stationary-density analysis replaces this bias bound by $\widetilde O(h)$ under a stronger step-size restriction. The improvement concerns the fixed-parameter accuracy exponent; it does not assert uniformly better dependence on every model parameter.

\paragraph{First-order ULA bounds and the smoothing parameter.}
For a smooth, $m$-strongly convex potential with $L$-Lipschitz gradient, \citet[Theorem~1]{PedrottiWhalley2026} prove
\[
 \sqrt m\,W_2(\widehat\pi_h,\pi)\le6hL\sqrt d,
 \qquad 0<hL\le1.
\]
Applying this estimate to $f+g_\lambda$, with the $C^{1,1}$ case obtained by mollification, and choosing $\lambda\asymp\varepsilon/G^2$ gives the sufficient MYULA complexity
\[
 \widetilde O\!\left[
 \frac{\sqrt d}{m}\left(
 \frac{L_f}{\varepsilon}+\frac{G^2}{\varepsilon^2}
 \right)\right].
\]
Thus, first-order bias for a fixed smooth target does not directly imply near-linear accuracy complexity after Moreau smoothing.
The gain is in the joint smoothing--discretization analysis, not in establishing first-order ULA bias at fixed smoothness.

\paragraph{Other proximal samplers.}
Different transitions can achieve higher accuracy without a fixed Moreau approximation. For example, \citet[Proposition~5]{FanYuanChen2023} obtain polylogarithmic accuracy dependence in $W_2$ for strongly convex composite targets, under their initialization and subroutine assumptions. Their method uses a restricted Gaussian sampling step implemented by approximate rejection sampling, rather than a deterministic proximal update. These guarantees concern a different algorithm and computational model. Our result is an iteration bound for classical MYULA with one exact proximal evaluation per step.

\section{Preliminaries}\label{sec:preliminaries}

All measures are defined on $\R^d$, and $\mathcal P_2(\R^d)$ denotes the probability measures with finite second moment.
For $\mu,\nu\in\mathcal P_2(\R^d)$,
\begin{equation}\label{eq:W2}
 W_2^2(\mu,\nu):=
 \inf_{\mathcal L(X)=\mu,\,\mathcal L(Y)=\nu}\E\norm{X-Y}^2.
\end{equation}
The same symbol denotes a measure and its Lebesgue density when a density exists.
We use the Euclidean norm and inner product for vectors, the Frobenius norm
$\norm M_F^2=\sum_{i,j}M_{ij}^2$ for matrices, and $\ip{M}{N} = \mathrm{tr}(M^\top N)=\sum_{i,j}M_{ij}N_{ij}$.
For symmetric matrices, $M\preceq N$ means $v^\top Mv\le v^\top Nv$ for every $v$.
For a scalar, vector, or matrix field $F$, define
\begin{equation}\label{eq:Lp-def}
 \norm{F}_{L^s(\nu)}
 :=\left(\int\norm{F(x)}^s\nu(\mathrm dx)\right)^{1/s},
 \qquad 1\le s<\infty.
\end{equation}
The pointwise norm in \eqref{eq:Lp-def} is the absolute value for scalars, the Euclidean norm for vectors, and the Frobenius norm for matrices. Thus, for a matrix field $F$,
\[
 \norm{F}_{L^2(\nu)}^2
 =\int\norm{F(x)}_F^2\nu(\mathrm dx)
 =\sum_{i,j}\int|F_{ij}(x)|^2\nu(\mathrm dx).
\]
In particular, for a positive density $\nu$ and a scalar, vector, or matrix density $F$,
\[
 \norm{F/\nu}_{L^s(\nu)}^s
   =\int\norm{F(x)}^s\nu(x)^{1-s}\dx.
\]
For $v,w\in\R^d$, the outer product is the matrix
\begin{equation}\label{eq:outer}
 (v\otimes w)_{ij}:=v_iw_j,
 \qquad \norm{v\otimes w}_F=\norm v\norm w.
\end{equation}

\subsection{Weak derivatives}\label{sec:notation}

For a locally integrable function $u$, its distributional derivative is defined by
\[
 \langle\partial_i u,\phi\rangle=-\int u\,\partial_i\phi\dx,
 \qquad \phi\in C_c^\infty(\R^d),
\]
where $C_c^\infty$ denotes smooth functions with compact support.
If this derivative is represented by a locally integrable function $v$, then $v$ is the weak derivative of $u$.
Lipschitz functions have almost-everywhere derivatives, meaning their weak derivatives.
An equality of distributions implies an equality against every such test function.
When both sides are locally integrable functions, this is equivalent to equality almost everywhere; it does not assert the existence of classical derivatives at every point.

For a vector field $H$ and a matrix field $E$, our divergence convention is
\begin{equation}\label{eq:div}
 \Div H=\sum_i\partial_iH_i,
 \qquad (\Div E)_i=\sum_j\partial_jE_{ij},
 \qquad \Div\Div E=\sum_{i,j}\partial_i\partial_jE_{ij}.
\end{equation}
In particular,
\begin{equation}\label{eq:weak-div}
 \langle\Div H,\phi\rangle=-\int\ip{\nabla\phi}{H}\dx,
 \qquad
 \langle\Div\Div E,\phi\rangle=\int \langle E,\nabla^2 \phi \rangle \dx.
\end{equation}
These formulas require only local integrability of the fields.
They allow density equations to be used through integration by parts without differentiating the densities classically.
Hessians are written explicitly as $\nabla^2 f$, $\nabla^2g_\lambda$, or $\nabla^2U_\lambda$; the letter $H$ denotes a vector flux.
Constants $c,C>0$ are universal and may change from line to line.
The notation $\widetilde O$ suppresses fixed powers of logarithms, but no polynomial parameter factors.

\subsection{Moreau regularization}\label{sec:moreau}
For a convex, globally $G$-Lipschitz function $g:\R^d\to\R$ and $\lambda>0$, let
\begin{equation}\label{eq:moreau-def}
 g_\lambda(x):=\inf_{y\in\R^d}
 \left\{g(y)+\frac{\norm{x-y}^2}{2\lambda}\right\}.
\end{equation}
The unique minimizer is denoted by $\prox_{\lambda g}(x)$.
The Moreau properties used below are
\begin{equation}\label{eq:moreau-properties}
 \nabla g_\lambda(x)=\lambda^{-1}\bigl(x-\prox_{\lambda g}(x)\bigr),
 \quad \norm{\nabla g_\lambda(x)}\le G,
 \quad 0\preceq\nabla^2g_\lambda\preceq\lambda^{-1}I\quad\text{a.e.}
\end{equation}
In particular, $g_\lambda\in C^{1,1}$, meaning that its gradient is globally Lipschitz.
The last inequality concerns its weak Hessian, not a continuous second derivative.
We use these facts in the form recalled in \citet[Section~3.2 and Appendix~A]{XinZhang}.

\subsection{Drift maps, pushforwards, and density evolution}\label{sec:operators-def}
The potential used for MYULA will be specified in \cref{sec:setup}.
For the remainder of this section, we use only its regularity and convexity properties: fix $\lambda>0$ and let $U_\lambda\in C^{1,1}(\R^d)$ be $m$-strongly convex with $L_\lambda$-Lipschitz gradient, where $0<m\le L_\lambda<\infty$.
Write $\pi_\lambda$ for the probability density proportional to $e^{-U_\lambda}$.
Its weak Hessian satisfies
\begin{equation}\label{eq:potential-hessian-bounds}
 mI\preceq\hess\preceq L_\lambda I
 \qquad\text{almost everywhere}.
\end{equation}
The deterministic drift step is
\begin{equation}\label{eq:T}
 T_t(x):=x-t\drift(x).
\end{equation}
For an integrable scalar, vector, or matrix density $F$, the pushforward of $F\dx$ assigns to each Borel set the integral of $F$ over its inverse image under $T_t$.
This defines a unique weighted measure, componentwise.
If it has a Lebesgue density, we denote that density by $(T_t)_\#F$; equivalently,
\begin{equation}\label{eq:push-def}
 \int\phi(y)(T_t)_\#F(y)\dd y
 =\int\phi(T_tx)F(x)\dx
 \quad\text{for every }\phi\in C_c^\infty(\R^d).
\end{equation}
For a probability density, the pushforward is the law of the transformed random variable $T_tX$.
For vector and matrix densities, the pushforward is defined componentwise.

\begin{proposition}[Drift pushforward]\label{prop:pushforward}
If $0\le tL_\lambda<1$, then $T_t$ is a global bi-Lipschitz bijection and
\[
 \operatorname{Lip}(T_t^{-1})\le(1-tL_\lambda)^{-1}.
\]
For every integrable scalar, vector, or matrix density $F$, its pushforward has the density
\begin{equation}\label{eq:push-density}
 (T_t)_\#F(y)
 =\frac{F(T_t^{-1}y)}
 {\det\!\left(I-t\nabla^2U_\lambda(T_t^{-1}y)\right)}
 \quad\text{for a.e. }y.
\end{equation}
In particular, $\norm{(T_t)_\#F}_{L^1(\R^d)}=\norm F_{L^1(\R^d)}$.
\end{proposition}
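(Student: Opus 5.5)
The plan has two parts: first show that $T_t$ is a bi-Lipschitz bijection, then obtain \eqref{eq:push-density} from the area formula for Lipschitz maps.

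\textbf{Bijectivity and the Lipschitz bound.} Write $T_t=\nabla\phi_t$ with $\phi_t(x):=\tfrac12\norm x^2-tU_\lambda(x)$. Because $\drift$ is $L_\lambda$-Lipschitz and $U_\lambda$ is convex, co-coercivity (Baillon--Haddad) gives
\[
 \ip{\drift(x)-\drift(y)}{x-y}\ge L_\lambda^{-1}\norm{\drift(x)-\drift(y)}^2 .
\]
Set $a:=x-y$ and $b:=\drift(x)-\drift(y)$. Then
\[
 \norm{T_tx-T_ty}^2=\norm a^2-2t\ip ab+t^2\norm b^2 .
\]
An equivalent and simpler route is to note that $\phi_t$ is $(1-tL_\lambda)$-strongly convex, since its weak Hessian is $I-t\hess\succeq(1-tL_\lambda)I$ a.e. by \eqref{eq:potential-hessian-bounds}. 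The gradient of a strongly convex $C^{1}$ function is strongly monotone:
\[
 \ip{T_tx-T_ty}{x-y}\ge(1-tL_\lambda)\norm{x-y}^2 .
\]
Cauchy--Schwarz then gives $\norm{T_tx-T_ty}\ge(1-tL_\lambda)\norm{x-y}$, so $T_t$ is injective and its inverse has Lipschitz constant at most $(1-tL_\lambda)^{-1}$. The forward map is $(1+tL_\lambda)$-Lipschitz. For surjectivity, fix $y$ and minimize $x\mapsto\phi_t(x)-\ip yx$. This function is strongly convex and continuous, so it has a minimizer, and the first-order condition at the minimizer is exactly $T_tx=y$. (Alternatively, $T_t$ is continuous, injective and proper, hence surjective by invariance of domain.)

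\textbf{Density formula via the area formula.} $T_t$ is Lipschitz, so by Rademacher it is differentiable a.e. with $DT_t=I-t\hess$. By \eqref{eq:potential-hessian-bounds} its determinant lies in $[(1-tL_\lambda)^d,(1-tm)^d]$, which is positive when $tm\le tL_\lambda<1$. The area formula for injective Lipschitz maps states that, for nonnegative measurable $\psi$,
\[
 \int\psi(T_tx)\,\det DT_t(x)\dx=\int\psi(y)\dd y .
\]
Apply it with
\[
 \psi(y)=\phi(y)F(T_t^{-1}y)/\det\bigl(I-t\hess(T_t^{-1}y)\bigr),
\]
first separately for the positive and negative parts of each component of $F$. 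This yields \eqref{eq:push-def} with the claimed density.

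\textbf{Main obstacle: measurability of the composition.} The weak Hessian $\hess$ is defined only a.e., so one must check that $\hess\circ T_t^{-1}$ is well defined a.e. in $y$. This holds because $T_t$ is bi-Lipschitz: it maps Lebesgue-null sets to null sets in both directions (Lipschitz maps preserve null sets). So the right-hand side of \eqref{eq:push-density} is a well-defined a.e. function, independent of the representative chosen. We also need the a.e. derivative of $T_t$ to coincide with $I-t$ times the weak Hessian, which is standard for $W^{1,\infty}_{\mathrm{loc}}$ vector fields.

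\textbf{Mass preservation.} The $L^1$ identity follows by the same change of variables with $\psi=\norm{(T_t)_\#F}$. Indeed,
\[
 \int\norm{F(T_t^{-1}y)}/\det(\cdot)\dd y=\int\norm{F(x)}\dx ,
\]
where the Jacobian factor cancels; this is applied with the pointwise norm, componentwise as needed.
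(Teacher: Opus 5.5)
Your proof is correct and is essentially the paper's argument. Both obtain the density from the area formula for the Lipschitz bijection $T_t$ with a.e.\ Jacobian $\det(I-t\nabla^2U_\lambda)\ge(1-tL_\lambda)^d>0$, applied to positive and negative parts componentwise, with null-set preservation handling the a.e.-defined Hessian and the $L^1$ identity following by the same change of variables. The only difference is the bijectivity step: you use strong monotonicity of $T_t=\nabla\phi_t$ plus a strongly convex minimization for surjectivity, whereas the paper obtains $T_t^{-1}y$ as the fixed point of the contraction $x\mapsto y+t\nabla U_\lambda(x)$; both give $\operatorname{Lip}(T_t^{-1})\le(1-tL_\lambda)^{-1}$. (Strong monotonicity follows directly from $\ip{\nabla U_\lambda(x)-\nabla U_\lambda(y)}{x-y}\le L_\lambda\norm{x-y}^2$, so you need neither the weak-Hessian argument nor the unfinished co-coercivity computation.)
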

\begin{proof}
For each $y$, the map $x\mapsto y+t\drift(x)$ is a contraction on $\R^d$.
Its unique fixed point is $T_t^{-1}y$.
The fixed-point equations for two values of $y$ give the stated inverse Lipschitz bound, and $T_t$ is Lipschitz by definition.
Its derivative is $DT_t=I-t\hess$ almost everywhere, with
\[
 \det(I-t\hess)\ge(1-tL_\lambda)^d>0.
\]
The weighted area formula \citep[Chapter~2, Theorem~3.3 and Remark~3.4(2)]{SimonGMT} applied to the Lipschitz bijection $T_t$ gives
\[
 \int q(x)\det(I-t\nabla^2U_\lambda(x))\dx
 =\int q(T_t^{-1}y)\dd y
\]
for every nonnegative measurable scalar function $q$.
Applying this formula with the integrand divided by the positive Jacobian, and then taking positive and negative parts componentwise,
yields
\[
 \int\phi(T_tx)F(x)\dx
 =\int\phi(y)
 \frac{F(T_t^{-1}y)}
 {\det(I-t\nabla^2U_\lambda(T_t^{-1}y))}\dd y.
\]
Both $T_t$ and $T_t^{-1}$ preserve Lebesgue null sets because they are Lipschitz, so the almost-everywhere derivatives suffice for these formulas. The last identity proves \eqref{eq:push-density}.
Changing variables in its pointwise norm gives the stated $L^1$ identity.
\end{proof}

The free heat semigroup is
\begin{equation}\label{eq:heat-def}
 \heat_tF(x):=(4\pi t)^{-d/2}
 \int e^{-\norm{x-y}^2/(4t)}F(y)\dd y,
 \qquad t>0,\qquad \heat_0F=F.
\end{equation}
It acts componentwise and adds an independent $N(0,2tI_d)$ increment to a probability density.
For $F\in L^1(\R^d)$, it preserves the integral and satisfies $\norm{\heat_tF}_{L^1}\le\norm F_{L^1}$.
Moreover, $\heat_tF\to F$ in $L^1$ as $t\downarrow0$, since the Gaussian kernels form an approximate identity.

We consider a gradient bound and an integrated form of the heat equation.
For scalar $F\in L^1(\R^d)$, differentiation of the Gaussian kernel, the triangle inequality, and Tonelli's theorem give
\begin{align}\label{eq:heat-basic}
 \norm{\nabla\heat_tF}_{L^1(\R^d)}
 &\le \norm F_{L^1(\R^d)}
       \int_{\R^d}\frac{\norm z}{2t}
       (4\pi t)^{-d/2}e^{-\norm z^2/(4t)}\dd z\notag\\
 &\le \frac{C_d}{\sqrt t}\norm F_{L^1(\R^d)},
 \qquad C_d:=\sqrt{d/2}.
\end{align}
The second inequality is Cauchy--Schwarz and the second moment $2td$ of this Gaussian density.
In particular, $t\norm{\nabla\heat_tF}_{L^1}$ is integrable near $t=0$.

For $\phi\in C_c^\infty(\R^d)$, symmetry of the Gaussian kernel gives
\[
 \int\phi\heat_tF\dx=\int F\heat_t\phi\dx.
\]
For $t>0$, differentiation of the kernel and integration by parts against the smooth function $\phi$ yield $\partial_t\heat_t\phi=\heat_t\Delta\phi$.
Since
\[
 \left|F(x)\partial_t\heat_t\phi(x)\right|
 \le |F(x)|\norm{\Delta\phi}_\infty,
\]
differentiation under the integral is justified by dominated convergence. Hence
\begin{equation}\label{eq:heat-weak-time}
 \frac\dd{\dd t}\int\phi\heat_tF\dx
 =\int(\Delta\phi)\heat_tF\dx,
 \qquad t>0.
\end{equation}
The derivative is bounded by $\norm F_{L^1}\norm{\Delta\phi}_\infty$.
Integrating first over a positive-time interval and then using $\heat_tF\to F$ in $L^1$ at the left endpoint gives
\begin{equation}\label{eq:heat-weak-integrated}
 \int\phi(\heat_bF-\heat_aF)\dx
 =\int_a^b\int(\Delta\phi)\heat_tF\dx\dd t,
 \qquad 0\le a\le b<\infty.
\end{equation}
Thus, for $F\in L^1(\R^d)$ and $\phi\in C_c^\infty(\R^d)$, the map $t\mapsto\int\phi\heat_tF\dx$ is absolutely continuous on $[0,T]$ for every $T>0$.

The Langevin generator and its formal adjoint with respect to Lebesgue measure are defined by
\begin{equation}\label{eq:gen}
 \gen\phi=\Delta\phi-\ip{\drift}{\nabla\phi},
 \qquad
 \gen^*\rho=\Delta\rho+\Div(\rho\drift).
\end{equation}
Here $\gen^*\rho$ is understood distributionally: $\langle\gen^*\rho,\phi\rangle=\int\rho\gen\phi\dx$ for $\phi\in C_c^\infty(\R^d)$.
Because $\nabla\pi_\lambda=-\pi_\lambda\drift$, we have $\gen^*\pi_\lambda=0$.

\subsection{Poisson equations and transport}\label{sec:poisson-background}

The Poisson equation will be used to convert a residual in the stationary density equation into a difference of expectations.
The weighted Sobolev space $H^1(\pi_\lambda)$ consists of functions whose value and weak gradient belong to $L^2(\pi_\lambda)$.
Strong convexity gives the Poincar\'e inequality
\begin{equation}\label{eq:Poincare}
 \int\left|v-\int v\dd\pi_\lambda\right|^2\dd\pi_\lambda
 \le\frac1m\int\norm{\nabla v}^2\dd\pi_\lambda;
\end{equation}
see, for example, \citet{Chewi}.
For $\phi\in L^2(\pi_\lambda)$, the mean-zero Poisson equation is
\begin{equation}\label{eq:poisson-def}
 \gen\psi=\phi-\int\phi\dd\pi_\lambda,
 \qquad \int\psi\dd\pi_\lambda=0.
\end{equation}
A weak solution is a function $\psi\in H^1(\pi_\lambda)$ satisfying
\begin{equation}\label{eq:poisson-weak}
 \int\ip{\nabla\psi}{\nabla v}\dd\pi_\lambda
 =-\int\left(\phi-\int\phi\dd\pi_\lambda\right)v\dd\pi_\lambda,
 \qquad v\in H^1(\pi_\lambda).
\end{equation}
Here $\phi$ is prescribed and $\psi$ is the unknown function; the zero-mean condition removes the freedom to add a constant.
For $k=1,2$, the unweighted Sobolev space $H^k(\R^d)$ consists of functions whose weak derivatives of order at most $k$ belong to $L^2(\R^d)$ with respect to Lebesgue measure. We write $u\in H^2_{\mathrm{loc}}(\R^d)$ if $\eta u\in H^2(\R^d)$ for every $\eta\in C_c^\infty(\R^d)$.
The next lemma shows the regularity and approximation needed below.

\begin{lemma}[Poisson regularity and approximation]\label{lem:poisson-energy}
Let $U_\lambda\in C^{1,1}(\R^d)$ be $m$-strongly convex with $L_\lambda$-Lipschitz gradient, where $0<m\le L_\lambda<\infty$. Let $\pi_\lambda\propto e^{-U_\lambda}$ and let $\gen$ be the generator in \eqref{eq:gen}.
For every $\phi\in L^2(\pi_\lambda)$, the Poisson equation \eqref{eq:poisson-def} has a unique mean-zero weak solution $\psi\in H^1(\pi_\lambda)$. Moreover, $\psi\in H^2_{\mathrm{loc}}(\R^d)$.
If $\phi\in C_c^\infty(\R^d)$, then $\nabla^2\psi\in L^2(\pi_\lambda)$ and
\begin{align}\label{eq:bochner-main}
 &\int\norm{\nabla^2\psi}_F^2\dd\pi_\lambda
   +\int\nabla\psi^\top(\hess)\nabla\psi\dd\pi_\lambda\notag\\
 &\qquad\le\int\left|\phi-\int\phi\dd\pi_\lambda\right|^2
                 \dd\pi_\lambda
 \le\frac1m\int\norm{\nabla\phi}^2\dd\pi_\lambda.
\end{align}
For such $\phi$, there exist $\psi_n\in C_c^\infty(\R^d)$ such that
\begin{equation}\label{eq:poisson-approximation}
 \begin{split}
 \Lp{\gen\psi_n-\gen\psi}{2}
  +\Lp{\nabla^2\psi_n-\nabla^2\psi}{2}
 +\Lp{\nabla\psi_n-\nabla\psi}{2}\longrightarrow0.
 \end{split}
\end{equation}
\end{lemma}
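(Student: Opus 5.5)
Existence and uniqueness come first, via Lax--Milgram on the closed subspace $H^1_0:=\{v\in H^1(\pi_\lambda):\int v\dd\pi_\lambda=0\}$. The bilinear form $a(\psi,v)=\int\ip{\nabla\psi}{\nabla v}\dd\pi_\lambda$ is coercive there by the Poincar\'e inequality \eqref{eq:Poincare}. The functional $v\mapsto-\int(\phi-\int\phi\dd\pi_\lambda)v\dd\pi_\lambda$ is bounded because $\phi\in L^2(\pi_\lambda)$. The weak formulation \eqref{eq:poisson-weak} then holds for all $v\in H^1(\pi_\lambda)$, since both sides vanish on constants. Uniqueness follows from coercivity.

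For local regularity, write $w=\psi$ and note that the equation reads $\Delta\psi=\ip{\drift}{\nabla\psi}+\bar\phi$ weakly on every ball, where $\bar\phi:=\phi-\int\phi\dd\pi_\lambda$. Here $\drift$ is Lipschitz, hence locally bounded, and $e^{-U_\lambda}$ is locally bounded above and below. Thus $\psi\in H^1_{\mathrm{loc}}$ with right side in $L^2_{\mathrm{loc}}$, and standard interior elliptic regularity (difference quotients) gives $\psi\in H^2_{\mathrm{loc}}$.

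For the Bochner estimate with $\phi\in C_c^\infty$, differentiate the equation in $x_k$. Because $\drift\in W^{1,\infty}_{\mathrm{loc}}$, the difference-quotient argument gives $\partial_k\psi$ satisfying $\gen\partial_k\psi=\sum_j(\partial_k\partial_jU_\lambda)\partial_j\psi+\partial_k\phi$ weakly, at least locally. Testing this against $\eta_R^2\partial_k\psi\,\pi_\lambda$ with a cutoff $\eta_R$ and summing over $k$ yields
\[
\int\eta_R^2\norm{\nabla^2\psi}_F^2\dd\pi_\lambda+\int\eta_R^2\nabla\psi^\top\hess\nabla\psi\dd\pi_\lambda
=-\int\eta_R^2\ip{\nabla\phi}{\nabla\psi}\dd\pi_\lambda+\text{cutoff errors}.
\]
Strong convexity makes the left side dominate $m\int\eta_R^2\norm{\nabla\psi}^2$. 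The cutoff errors are bounded by $\frac{C}{R}\int|\nabla\psi||\nabla^2\psi|\eta_R\dd\pi_\lambda$ and are absorbed; letting $R\to\infty$ gives $\nabla^2\psi\in L^2(\pi_\lambda)$. I expect this step, with its justification under only $C^{1,1}$ regularity, to be the main technical obstacle. If the direct route fails, I would mollify $U_\lambda$ (keeping the bounds $m,L_\lambda$), prove the estimate for smooth potentials, and pass to the limit via uniform bounds and uniqueness.

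The cleaner form of the right side is $-\int\ip{\nabla\phi}{\nabla\psi}\dd\pi_\lambda=\int\bar\phi\,\gen\psi\dd\pi_\lambda=\int\bar\phi^2\dd\pi_\lambda$, obtained by integrating by parts once in the global identity $\int\norm{\nabla^2\psi}_F^2+\nabla\psi^\top\hess\nabla\psi=\int(\gen\psi)^2$. This is Bochner's formula $\int(\gen\psi)^2\dd\pi_\lambda=\int\norm{\nabla^2\psi}_F^2+\nabla\psi^\top\hess\nabla\psi\dd\pi_\lambda$, which I will prove for $C_c^\infty$ functions by integration by parts and extend through the cutoff limit. Its right side equals $\int\bar\phi^2$, which gives the first inequality in \eqref{eq:bochner-main} (in fact with equality). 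The second inequality is the Poincar\'e inequality applied to $\phi$.

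For the approximation \eqref{eq:poisson-approximation}, set $\psi_n=\eta_{R_n}\,(\psi*\rho_{\delta_n})$, a cutoff of a mollification. The cutoff terms are controlled by $R^{-1}\norm{\nabla\psi}$ and $R^{-2}|\psi|$ in $L^2(\pi_\lambda)$, together with $R^{-1}\norm{\drift}\,|\psi|$ in the $\gen$ term. That last product is handled on the annulus $R\le|x|\le2R$ using $\norm{\drift(x)}\le C(1+|x|)$, so its contribution is bounded by $C\norm{\psi}_{L^2(\pi_\lambda;\,R\le|x|\le2R)}\to0$. On each fixed compact set, mollification converges in $H^2$, and $\pi_\lambda$ is comparable to Lebesgue measure there. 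A diagonal choice of $\delta_n$ for each $R_n$ completes the proof.
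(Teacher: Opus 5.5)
Your proposal is correct and follows essentially the same route as the paper: Riesz/Lax--Milgram with the Poincar\'e inequality for existence and uniqueness, interior elliptic regularity for $H^2_{\mathrm{loc}}$, the weighted Bochner identity with spatial cutoffs for the Hessian bound, and cutoff plus mollification for the approximating sequence. The one real difference is how you get $\nabla^2\psi\in L^2(\pi_\lambda)$: you differentiate the equation and absorb the cutoff cross term with Young's inequality, whereas the paper extends the Bochner identity to compactly supported $H^2$ functions, applies it to $\chi_n\psi$, and uses Fatou together with $\gen(\chi_n\psi)\to\phi-\int\phi\dd\pi_\lambda$ in $L^2(\pi_\lambda)$ (so it never differentiates the equation); your version also gives equality in the first inequality.
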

The proof is given in \cref{app:poisson}.
The estimate \eqref{eq:bochner-main} controls the weak derivatives of the Poisson solution, whereas \eqref{eq:poisson-approximation} allows it to be used in residual identities initially stated only for compactly supported smooth tests.
Neither assertion requires a continuous Hessian or a third derivative of $U_\lambda$.

For an integrable signed density $\sigma$ with $\int\sigma\dx=0$,
define
\begin{equation}\label{eq:Hminus-def}
 \Nminus{\sigma}
 :=
 \sup_{\substack{\phi\in C_c^\infty(\R^d)\\
                  \Lp{\nabla\phi}{2}\le1}}
 \left|\int\phi\sigma\dx\right|.
\end{equation}
Here $\sigma$ represents the signed measure with Lebesgue density $\sigma$, not a density relative to $\pi_\lambda$.
In particular, the function--measure pairing is
\begin{equation}\label{eq:Hminus-measure-pairing}
 \langle\phi,\sigma(x)\dx\rangle
 =\int\phi\sigma\dx
 =\int\phi\,\frac{\sigma}{\pi_\lambda}\dd\pi_\lambda.
\end{equation}

If $\sigma/\pi_\lambda\in L^2(\pi_\lambda)$, the same norm is obtained using either weighted Sobolev tests or $C^1$ tests:
\begin{align}\label{eq:Hminus-test-classes}
 \Nminus{\sigma}
 =
 \sup_{\substack{\phi\in H^1(\pi_\lambda)\\
                  \Lp{\nabla\phi}{2}\le1}}
 \left|\int\phi\sigma\dx\right|
 =
 \sup_{\substack{\phi\in C^1(\R^d)\\
                  \Lp{\nabla\phi}{2}\le1}}
 \left|\int\phi\sigma\dx\right|.
\end{align}
The approximation and integrability needed for these equalities are proved in \cref{app:negative-norm}. Consequently, \eqref{eq:Hminus-def} agrees, in this setting, with the norm of the signed measure $\sigma(x)\dx$ in \citet[Section~1.1]{Peyre}.

For a probability density $\nu\in\mathcal P_2(\R^d)$ with $\nu/\pi_\lambda\in L^2(\pi_\lambda)$, apply \citet[Theorem~1]{Peyre} to the measures $\pi_\lambda(x)\dx$ and $\nu(x)\dx$. By \eqref{eq:Hminus-measure-pairing} and \eqref{eq:Hminus-test-classes}, this gives
\begin{equation}\label{eq:transport-comparison}
 W_2(\nu,\pi_\lambda)\le2\Nminus{\nu-\pi_\lambda}.
\end{equation}

\subsection{Entropy and density-ratio moments}\label{sec:entropy}
For $F\ge0$, write
\[
 \Ent_\nu F:=\int F\log F\dd\nu
       -\left(\int F\dd\nu\right)\log\int F\dd\nu.
\]
Strong convexity also gives the log-Sobolev inequality \citep{Chewi}
\begin{equation}\label{eq:LSI}
 \Ent_{\pi_\lambda}(w^2)
 \le\frac2m\int\norm{\nabla w}^2\dd\pi_\lambda.
\end{equation}
For $\alpha>1$ and a probability density $\nu$, the R\'enyi divergence is
\begin{equation}\label{eq:renyi-def}
 \mathcal R_\alpha(\nu\Vert\pi_\lambda)
 :=\frac1{\alpha-1}\log\int
              \left(\frac\nu{\pi_\lambda}\right)^\alpha\dd\pi_\lambda.
\end{equation}
Equivalently,
$\norm{\nu/\pi_\lambda}_{L^\alpha(\pi_\lambda)}
=\exp\{(\alpha-1)\mathcal R_\alpha(\nu\Vert\pi_\lambda)/\alpha\}$.

\section{The Moreau–Yosida Unadjusted Langevin Sampling} \label{sec:myula}

In this section we study theoretical analysis for error bounds of the Moreau–Yosida unadjusted Langevin algorithm (MYULA). We first give some assumptions with the problem description, and then present our main theoretical results.

\subsection{Problem formulation and assumptions}\label{sec:setup}

We consider the probability distribution
\begin{equation}\label{eq:target}
 \pi(\mathrm dx)=Z^{-1}\exp\{-f(x)-g(x)\}\dx,
 \qquad x\in\R^d.
\end{equation}
The assumptions and normalization below follow those of \citet{XinZhang}.

\begin{assumption}[Smooth component]\label{ass:f}
The function $f\in C^2(\R^d)$ satisfies
\begin{equation}\label{eq:f-ass}
 mI\preceq\nabla^2 f(x)\preceq L_fI,
 \qquad x\in\R^d,
 \qquad 0<m\le L_f<\infty.
\end{equation}
Write
\begin{equation}\label{eq:trace-f}
 \tau_f:=\sup_{x\in\R^d}\tr [\nabla^2f(x)].
\end{equation}
In particular, $md\le\tau_f\le dL_f$.
\end{assumption}

\begin{assumption}[Nonsmooth component]\label{ass:g}
The function $g\colon \R^d\to\R$ is convex and globally $G$-Lipschitz, for some $G>0$.
\end{assumption}

For the Moreau envelope in \eqref{eq:moreau-def}, set
\begin{equation}\label{eq:smoothed}
 U_\lambda:=f+g_\lambda,\qquad
 \pi_\lambda(\mathrm dx):=Z_\lambda^{-1}e^{-U_\lambda(x)}\dx,
 \qquad L_\lambda:=L_f+\lambda^{-1}.
\end{equation}
By \cref{ass:f,ass:g} and \eqref{eq:moreau-properties}, this choice of $U_\lambda$ belongs to $C^{1,1}(\R^d)$ and satisfies \eqref{eq:potential-hessian-bounds}.
Thus the operator constructions and Poisson results in \cref{sec:operators-def,sec:poisson-background} apply.

MYULA is the fixed-step recursion
\begin{equation}\label{eq:MYULA}
 \widehat X_{k+1}=\widehat X_k-h\drift(\widehat X_k)
                      +\sqrt{2h}\,\xi_{k+1},
 \qquad \xi_{k+1}\stackrel{\mathrm{i.i.d.}}\sim N(0,I_d).
\end{equation}
Its transition kernel is $Q_{\lambda,h}$, and $\mu_k:=\mathcal L(\widehat X_k)$.
Under the step-size condition below, its invariant law is denoted by $\inv$.
The initial law $\mu_0$ belongs to $\mathcal P_2(\R^d)$.
We measure accuracy by $\sqrt m\,W_2(\mu_k,\pi)$.
Each iteration uses one evaluation of $\nabla f$ and one exact proximal evaluation of $g$.

Let
\begin{equation}\label{eq:R}
 R:=\frac{G+\sqrt{G^2+4\tau_f}}2.
\end{equation}
Thus, $R^2=\tau_f+GR$, $R\ge G$, and $R^2\ge\tau_f\ge md$.

\begin{proposition}[Invariant law and contraction]\label{prop:invariant}
Under \cref{ass:f,ass:g}, if $0<hL_\lambda<1$, then $Q_{\lambda,h}$ has a unique invariant law in $\mathcal P_2(\R^d)$, with a smooth strictly positive density $\inv$.
For $\mu,\nu\in\mathcal P_2(\R^d)$,
\begin{equation}\label{eq:kernel-contraction}
 W_2(\mu Q_{\lambda,h},\nu Q_{\lambda,h})
 \le(1-mh)W_2(\mu,\nu).
\end{equation}
Moreover,
\begin{equation}\label{eq:stationarity}
 \inv=\heat_h((T_h)_\#\inv).
\end{equation}
\end{proposition}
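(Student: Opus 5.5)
We first prove the contraction \eqref{eq:kernel-contraction} by synchronous coupling, then obtain existence and uniqueness of the invariant law from Banach's fixed-point theorem on $(\mathcal P_2(\R^d),W_2)$, and finally read off the stationarity identity and the regularity of $\inv$ from the factorisation of $Q_{\lambda,h}$ into the drift map $T_h$ followed by Gaussian convolution.

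\textbf{Contraction.} Take an optimal coupling $(X,Y)$ of $\mu,\nu$ and use the same noise $\xi$ for both chains: $X'=T_hX+\sqrt{2h}\,\xi$ and $Y'=T_hY+\sqrt{2h}\,\xi$. Then $X'-Y'=T_hX-T_hY$. By the fundamental theorem of calculus along the segment, which is valid for the $C^{1,1}$ gradient,
\[
 T_hx-T_hy=\Bigl(I-h\int_0^1\hess(y+s(x-y))\dd s\Bigr)(x-y).
\]
A subtlety is that \eqref{eq:potential-hessian-bounds} holds only a.e., and a segment may lie inside a null set. I would avoid this by using co-coercivity and strong monotonicity of $\drift$ instead, which are standard consequences of $m$-strong convexity and $L_\lambda$-smoothness:
\[
 \norm{x-y-h(\nabla U_\lambda(x)-\nabla U_\lambda(y))}^2
 \le (1-2hm+h^2m L_\lambda)\norm{x-y}^2
 \quad\text{or a similar bound.}
\]
More simply, average the Jacobian over a mollified potential $U_\lambda*\rho_\epsilon$, which has the same Hessian bounds everywhere, and let $\epsilon\to0$. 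For the mollified potential the averaged Hessian $A$ satisfies $mI\preceq A\preceq L_\lambda I$. Since $hL_\lambda<1$, the symmetric matrix $I-hA$ has spectrum in $[0,1-hm]$, so $\norm{T_hx-T_hy}\le(1-mh)\norm{x-y}$. Squaring and taking expectations gives \eqref{eq:kernel-contraction}. The kernel maps $\mathcal P_2$ into $\mathcal P_2$ because $\drift$ has linear growth.

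\textbf{Invariant law.} $(\mathcal P_2,W_2)$ is complete and $1-mh<1$, so Banach's fixed-point theorem gives a unique fixed point $\inv\in\mathcal P_2$ of $\mu\mapsto\mu Q_{\lambda,h}$. Uniqueness among invariant laws in $\mathcal P_2$ follows from the same contraction.

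\textbf{Density and stationarity.} The one-step law factors as $\mu Q_{\lambda,h}=\mathcal L(T_hX)*N(0,2hI)$. For any law $\mu$, convolution with the Gaussian gives a smooth, strictly positive density, since the Gaussian kernel is smooth and positive and its derivatives are bounded. Hence $\inv=\inv Q_{\lambda,h}$ has such a density. In particular $\inv$ is integrable, so \cref{prop:pushforward} applies (because $hL_\lambda<1$) and shows that $\mathcal L(T_hX)$ has density $(T_h)_\#\inv$. By \eqref{eq:heat-def}, convolution with $N(0,2hI)$ is exactly $\heat_h$, which gives \eqref{eq:stationarity}.

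\textbf{Main obstacle.} The only delicate point is the a.e. nature of the Hessian in the contraction step. The mollification argument is the cleanest fix. Note that the first displayed inequality above, obtained from co-coercivity alone, only yields $\sqrt{1-hm(2-hL_\lambda)}\le 1-mh\cdot(\ldots)$, which is weaker than $1-mh$; this is why the averaged-Jacobian bound is needed to get the exact factor $1-mh$.
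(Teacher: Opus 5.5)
Your proposal is correct and matches the paper's proof essentially step for step. Both use synchronous coupling with $\norm{T_hx-T_hy}\le(1-mh)\norm{x-y}$ obtained by mollifying the $C^{1,1}$ potential, Banach's fixed-point theorem on $(\mathcal P_2,W_2)$, the factorisation of one Euler step as $(T_h)_\#$ followed by $\heat_h$, and Gaussian convolution for smoothness and positivity.

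One correction to your closing remark: co-coercivity does give the exact factor $1-mh$ if you apply it to the convex, $(L_\lambda-m)$-smooth function $U_\lambda-\frac m2\norm{\cdot}^2$ instead of to $U_\lambda$. The cross term then has coefficient $2-hm-hL_\lambda>0$, so mollification is convenient but not necessary.
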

\begin{proof}
For a smooth potential, integration of the Hessian along a segment gives $\norm{T_hx-T_hy}\le(1-mh)\norm{x-y}$. Mollification preserves the Hessian bounds and gives the same inequality for $U_\lambda\in C^{1,1}$ by uniform convergence of its gradient. Using the same Gaussian increment for two Euler updates proves \eqref{eq:kernel-contraction}.
The linear growth of the drift implies that the kernel maps $\mathcal P_2$ into itself. The contraction theorem on the complete metric space $(\mathcal P_2,W_2)$ therefore gives the unique invariant law in that space.
An Euler update consists of the pushforward by $T_h$ followed by the heat operator $\heat_h$, which proves \eqref{eq:stationarity}.
Convolution of a probability measure with the Gaussian kernel has a smooth strictly positive density.
\end{proof}

\subsection{Main results}\label{sec:results}

The first theorem controls the invariant-measure bias at fixed smoothing; the second gives a single step size for the original target. Set
\begin{align}
 \ell_{h,\lambda}&:=1+\log\left(e+\frac1{mh}+\frac{L_\lambda}{m}\right),
                                                        \label{eq:log-h}\\
 \ell_\varepsilon&:=1+\log\left(e+\frac{L_f}{m}+\frac{\tau_f}{m}
                              +\frac{G^2}{m}+\frac1\varepsilon\right).
                                                        \label{eq:log-eps}
\end{align}

\begin{theorem}[Invariant-measure bias at fixed smoothing]\label{thm:bias}
Under \cref{ass:f,ass:g}, there exist universal constants $c,C>0$ such that, for every $\lambda,h>0$ satisfying
\begin{equation}\label{eq:bias-step}
 h\ell_{h,\lambda}^{\,2}
 \left[\frac{L_f\tau_f}{m}+\frac{L_fR}{\sqrt m}
       +\frac{GR}{m\lambda}+R^2+\frac1\lambda\right]\le c,
\end{equation}
the kernel $Q_{\lambda,h}$ has a unique invariant law $\inv\in\mathcal P_2(\R^d)$, and
\begin{equation}\label{eq:bias}
 \sqrt m\,W_2(\inv,\pi_\lambda)
 \le ChR^2\ell_{h,\lambda}^{\,2}.
\end{equation}
\end{theorem}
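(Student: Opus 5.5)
The plan follows the strategy announced in the introduction. We convert the stationary equation \eqref{eq:stationarity} into a residual identity against $\gen$, bound that residual in the negative norm $\Nminus{\cdot}$ through the Poisson equation, and then apply the transport comparison \eqref{eq:transport-comparison}.

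\emph{Step 1: existence and uniqueness.} Condition \eqref{eq:bias-step} with $c$ small forces $hL_\lambda\le h(L_f\tau_f/m+1/\lambda)\le c<1$, using $\tau_f\ge m$. So \cref{prop:invariant} applies and gives existence and uniqueness of $\inv$.

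\emph{Step 2: reduction to a negative norm.} Set $\sigma:=\inv-\pi_\lambda$. We must show $\inv/\pi_\lambda\in L^2(\pi_\lambda)$, which needs a finite-order R\'enyi bound for $\inv$ obtained from the step-size condition. Then \eqref{eq:transport-comparison} reduces the claim to
\[
 \sqrt m\,\Nminus{\sigma}\le ChR^2\ell_{h,\lambda}^2 .
\]

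\emph{Step 3: residual identity via drift--heat interpolation.} For $\phi\in C_c^\infty$ we write
\[
 0=\int\phi\,\bigl(\heat_h(T_h)_\#\inv-\inv\bigr)\dx .
\]
Interpolating $t\mapsto\heat_t((T_t)_\#\inv)$, and using \eqref{eq:heat-weak-integrated} together with the analogous derivative of the pushforward in $t$, gives
\[
 \int_0^h\!\int \gen\phi\,\heat_t(T_t)_\#\inv\dx\dd t=0,
\]
up to a correction coming from $\nabla U_\lambda(T_t^{-1}y)-\nabla U_\lambda(y)$. Subtracting $h\int\gen\phi\,\inv\dx$ leaves
\[
 h\int\gen\phi\,\inv\dx=\int\ip{\nabla\phi}{H}\dx+\int\ip{\nabla^2\phi}{E}\dx ,
\]
with a vector flux $H$ and a matrix flux $E$. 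These are built from $\heat_t(T_t)_\#\inv-\inv$ and from the drift mismatch, and each is formally of size $h^2$ times curvature weights such as $\nabla^2U_\lambda\,\inv$ and $\nabla U_\lambda\otimes\nabla U_\lambda\,\inv$.

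\emph{Step 4: testing against the Poisson solution.} Given $\phi$ with $\Lp{\nabla\phi}{2}\le1$, let $\psi$ solve \eqref{eq:poisson-def}. By the approximation \eqref{eq:poisson-approximation} we may use $\psi$ as the test function, so that
\[
 \int\phi\sigma\dx=\int\gen\psi\,\inv\dx .
\]
We then bound the right-hand side by $h^{-1}\bigl(\Lp{\nabla\psi}{2}\norm{H/\pi_\lambda}+\norm{\nabla^2\psi}\,\norm{E/\pi_\lambda}\bigr)$ in suitable weighted norms. By \eqref{eq:bochner-main}, both $\norm{\nabla^2\psi}_{L^2(\pi_\lambda)}$ and the weighted quantity $\int\nabla\psi^\top\hess\nabla\psi\dd\pi_\lambda$ are at most $m^{-1/2}$. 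The point of keeping $\hess$ as a weight is that the curvature part of $H$ can be paired with $\hess^{1/2}\nabla\psi$, rather than estimated by $L_\lambda$.

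\emph{Step 5: flux bounds.} The remaining task is to show that $\norm{E/\pi_\lambda}$ and the weighted norm of $H$ are $O(h^2R^2\ell^2)$. Three ingredients enter:
\begin{itemize}[nosep]
 \item the moment bounds $\int\norm{\nabla U_\lambda}^2\dd\pi_\lambda\lesssim R^2$ and $\int\tr\hess\dd\pi_\lambda\le\tau_f+GR=R^2$, the latter by integrating by parts against $\pi_\lambda$ with $|\nabla g_\lambda|\le G$;
 \item density-ratio moments $\int(\inv/\pi_\lambda)^\alpha\dd\pi_\lambda$ for $\alpha$ of order $\ell$, transferred via H\"older so that expectations under $\inv$ are replaced by expectations under $\pi_\lambda$ at logarithmic cost;
 \item the short-time heat estimate \eqref{eq:heat-basic}, used on small $t$ together with a dyadic split in $t$ to produce the $\ell$ factors.
\end{itemize}

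\emph{Main obstacle.} I expect the hardest step to be the R\'enyi and density-ratio control of $\inv$ with only logarithmic dependence on $L_\lambda$. It is this step that dictates the awkward step-size condition \eqref{eq:bias-step}. I would first attempt it by a one-step Rényi contraction for the Euler kernel under LSI \eqref{eq:LSI}, combined with a bootstrap in $\alpha$.
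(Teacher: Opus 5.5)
Your architecture matches the paper's: the Poisson equation, the negative Sobolev norm with \eqref{eq:transport-comparison}, a Hessian-weighted pairing, the average $\int\tr(\hess)\,\dd\pi_\lambda\le R^2$, and density-ratio moments of logarithmic order. The gap is in Step~3, where the proof actually lives; it does not go through as described. For the Euler interpolation $\nu_t=\heat_t((T_t)_\#\inv)$ one has
\begin{multline*}
 \frac{\dd}{\dd t}\int\phi\,\nu_t\dx-\int\gen\phi\,\nu_t\dx
 =\int\bigl[\heat_t(\ip{\drift}{\nabla\phi})-\ip{\drift}{\heat_t\nabla\phi}\bigr](T_t)_\#\inv\dx\\
 +\int\ip{\drift-\drift\circ T_t^{-1}}{\heat_t\nabla\phi}(T_t)_\#\inv\dx.
\end{multline*}
So besides the mismatch you mention, there is a heat--drift commutator, pointwise $\E\bigl[\ip{\drift(y+\sqrt{2t}Z)-\drift(y)}{\nabla\phi(y+\sqrt{2t}Z)}\bigr]$.

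\emph{The commutator is only $O(\sqrt t)$.} Its increment has mean square of order $tGR/\lambda$ (compare the Moreau term in \eqref{eq:split-drift}). For a smooth drift one gains the missing $\sqrt t$ by linearizing and integrating by parts in $Z$. For a $C^{1,1}$ drift with curvature up to $\lambda^{-1}$, the linearization error is not small without a third derivative of $U_\lambda$. Integrating by parts onto the test function instead produces $\nabla^3\psi$, which \eqref{eq:bochner-main} does not control. Bounded directly as in your Step~4, the commutator contributes order $\sqrt{hGR/(m\lambda)}$ to $\sqrt m\,W_2$, and \eqref{eq:bias-step} only makes this $O(\ell_{h,\lambda}^{-1})$.

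\emph{$\hess$ cannot sit in the matrix flux.} The Taylor heuristic puts it there, e.g.\ through $\nabla^3\phi\cdot\drift$ or the linearized commutator $2t\ip{\hess}{\nabla^2\phi}$. Pairing that with $\nabla^2\psi$ in $L^2(\pi_\lambda)$ costs $\Lp{\hess}{2}$, which is of order $\lambda^{-1/2}$ for $g=G|x_1|$. The same happens if you move two derivatives onto the density to handle the heat part $\int\Delta\psi\,(\inv-\nu_t)\dx$, since $\nabla^2\pi_\lambda/\pi_\lambda=\drift\otimes\drift-\hess$. So fluxes ``formally of size $h^2$ times curvature weights'' are not available for this potential. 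Your Step~5 also contains no estimate of $\nabla\inv$, which any valid flux identity needs.

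The missing idea is the paper's decomposition. Freeze the full drift step and interpolate only the heat: $\rho_t=\heat_t((T_h)_\#\inv)$ and $\bar\rho=h^{-1}\int_0^h\rho_t\,\dd t$. The exact identities $\inv-\bar\rho=\Div H$ with $H=h^{-1}\int_0^ht\nabla\rho_t\,\dd t$, $\drift\,\inv-J=\Div E$, and $\Delta\bar\rho+\Div J=0$ give $\gen^*(\bar\rho-\pi_\lambda)=\Div\Div(E-\drift\otimes H)+\Div((\hess)H)$.
\begin{itemize}
\item The derivative lands on the density through $H$, and the matrix flux contains only pushforwards of $(\drift\otimes\drift)\inv$ and $\drift\otimes H$.
\item $\hess$ appears only in the first-order term. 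There your Hessian-weighted pairing applies, together with a H\"older step with exponent $r\asymp\log(L_\lambda/m)$ against $\int\tr(\hess)\,\dd\pi_\lambda\le R^2$.
\item The Poisson equation is used only for $\bar\rho$, while $\inv-\bar\rho=\Div H$ is tested directly against $\phi$ (equivalently, one uses $\nabla\gen\psi=\nabla\phi$). This avoids $\nabla\Delta\psi$ and the unweighted factor $\Lp{\nabla\psi}{2}\le m^{-1}\Lp{\nabla\phi}{2}$ in your Step~4.
\end{itemize}
Bounding $H$ then requires $\Lp{\nabla\inv/\pi_\lambda}{2r}\lesssim rR\log(e/(mh))$. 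The paper gets this from the finite expansion $\inv=\heat_{nh}\inv+h\sum_{j=1}^n\Div\heat_{jh}J$ with $nh\asymp(rR)^{-2}$, the $\pi_\lambda$-weighted heat and pushforward bounds of \cref{lem:operators}, and the harmonic sum $\sum_{j\le n}j^{-1}$. The unweighted, dimension-dependent bound \eqref{eq:heat-basic} cannot supply this.

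Finally, on the density ratios, which you flag as the main obstacle: a generic R\'enyi analysis of the Euler step bounds the drift increment by roughly $tL_\lambda^2d$, which \eqref{eq:bias-step} does not make small. The paper instead closes a R\'enyi dissipation inequality along the Euler interpolation using $\nu_0=\nu_h$. It bounds the Moreau increment by $t/\lambda$ times averaged quantities, via an integration by parts for the density-ratio-weighted mean of $\Delta g_\lambda$.
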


The proof of \cref{thm:bias} is given in \cref{sec:bias-proof}.
The sufficient restriction \eqref{eq:bias-step} is stronger than $hL_\lambda\le c$.

\begin{theorem}[End-to-end complexity]\label{thm:complexity}
Under \cref{ass:f,ass:g}, let $0<\varepsilon\le1$ and choose
\begin{align}
 \lambda_\varepsilon&:=\frac{\varepsilon}{G^2},\label{eq:lambda-eps}\\
 h_\varepsilon&:=\frac{c}{\ell_\varepsilon^2}
 \left[\frac{L_f\tau_f}{m}+\frac{L_fR}{\sqrt m}
             +\frac{R^2+G^3R/m}{\varepsilon}\right]^{-1},\label{eq:h-eps}
\end{align}
where $c>0$ is a sufficiently small universal constant.
\begin{samepage}
For every $\mu_0\in\mathcal P_2(\R^d)$, if
\begin{equation}\label{eq:N-choice}
 N\ge\frac{C}{mh_\varepsilon}
       \log\left(2+\frac{\sqrt m\,W_2(\mu_0,\pi)}{\varepsilon}\right),
\end{equation}
then
\begin{equation}\label{eq:accuracy}
 \sqrt m\,W_2\bigl(\mu_0Q_{\lambda_\varepsilon,h_\varepsilon}^N,\pi\bigr)
 \le\varepsilon.
\end{equation}
\end{samepage}
Moreover, the smallest integer satisfying \eqref{eq:N-choice} obeys
\begin{align}\label{eq:N-explicit}
 N\le 1+\frac{C\ell_\varepsilon^2}{m}
 \left[\frac{L_f\tau_f}{m}+\frac{L_fR}{\sqrt m}
       +\frac{R^2+G^3R/m}{\varepsilon}\right]
 \log\left(2+\frac{\sqrt m\,W_2(\mu_0,\pi)}{\varepsilon}\right).
\end{align}
In particular, for fixed model parameters and fixed initialization,
$N(\varepsilon)=\widetilde O(\varepsilon^{-1})$.
\end{theorem}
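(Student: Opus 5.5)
The proof of \cref{thm:complexity} combines three estimates through the triangle inequality:
\[
 \sqrt m\,W_2(\mu_N,\pi)\le \sqrt m\,W_2(\mu_N,\inv)+\sqrt m\,W_2(\inv,\pi_\lambda)+\sqrt m\,W_2(\pi_\lambda,\pi),
\]
with $\lambda=\lambda_\varepsilon$ and $h=h_\varepsilon$. We make each term at most $\varepsilon/3$. The last term is the Moreau approximation bias. By \citet[Proposition~5.9]{ActiveTrace} we take it in the form $\sqrt m\,W_2(\pi_\lambda,\pi)\le C\lambda G^2$, possibly with a logarithmic factor. The choice $\lambda_\varepsilon=\varepsilon/G^2$ then bounds it by $C\varepsilon$. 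A smaller universal multiple in $\lambda_\varepsilon$ may be needed, and it would be absorbed into the constants and into $\ell_\varepsilon$.

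The middle term is handled by \cref{thm:bias}, and the main work is verifying \eqref{eq:bias-step} for $(\lambda_\varepsilon,h_\varepsilon)$. Here $1/\lambda_\varepsilon=G^2/\varepsilon$ and $GR/(m\lambda_\varepsilon)=G^3R/(m\varepsilon)$. Since $R\ge G$, we have $G^2\le R^2\le R^2/\varepsilon$ for $\varepsilon\le1$. Also $R^2\le R^2/\varepsilon$. So the bracket in \eqref{eq:bias-step} is at most $2$ times the bracket in \eqref{eq:h-eps}.

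It remains to compare $\ell_{h,\lambda}$ with $\ell_\varepsilon$. We have $L_\lambda/m=L_f/m+G^2/(m\varepsilon)$. Using $m\le L_f$, $md\le\tau_f$ and $R^2\le 2(G^2+\tau_f)$, the inverse step satisfies
\[
 1/(mh_\varepsilon)\le C\ell_\varepsilon^2\cdot\mathrm{poly}\bigl(L_f/m,\;\tau_f/m,\;G^2/m,\;1/\varepsilon\bigr).
\]
Taking logarithms gives $\ell_{h,\lambda}\le C\ell_\varepsilon$, because $\log\ell_\varepsilon\lesssim\ell_\varepsilon$. Hence the left side of \eqref{eq:bias-step} is at most $C'c$, which is below the threshold of \cref{thm:bias} once $c$ is small. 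Then
\[
 \sqrt m\,W_2(\inv,\pi_\lambda)\le Ch_\varepsilon R^2\ell_\varepsilon^2\le Cc\,\varepsilon,
\]
since $h_\varepsilon\le c\,\varepsilon/(\ell_\varepsilon^2R^2)$. This is at most $\varepsilon/3$ for small $c$.

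The first term is controlled by the contraction \eqref{eq:kernel-contraction}, which applies because \eqref{eq:bias-step} forces $hL_\lambda<1$. We get
\[
 W_2(\mu_N,\inv)\le(1-mh)^NW_2(\mu_0,\inv)\le e^{-mhN}\bigl(W_2(\mu_0,\pi)+W_2(\pi,\inv)\bigr),
\]
and the two bounds above give $\sqrt m\,W_2(\pi,\inv)\le\varepsilon$. With $N$ as in \eqref{eq:N-choice} and $C$ large, $e^{-mhN}\bigl(\sqrt m W_2(\mu_0,\pi)+\varepsilon\bigr)\le\varepsilon/3$.

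For \eqref{eq:N-explicit}, substitute $h_\varepsilon$ into the smallest integer satisfying \eqref{eq:N-choice}, which adds at most $1$. The $\widetilde O(\varepsilon^{-1})$ claim follows because $\ell_\varepsilon$ and the logarithm grow only logarithmically in $1/\varepsilon$.

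The step we expect to be the main obstacle is the logarithm comparison $\ell_{h_\varepsilon,\lambda_\varepsilon}\le C\ell_\varepsilon$. The argument is circular: $h_\varepsilon$ depends on $\ell_\varepsilon$, and $\ell_{h,\lambda}$ involves $1/(mh)$. This is resolved by the elementary bound $\log(x\log^2 x)\le 3\log x$ for $x\ge e$, together with care that $R/\sqrt m$ and $\tau_f/m$ are bounded polynomially by the quantities appearing in $\ell_\varepsilon$.
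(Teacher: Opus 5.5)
Your decomposition is the paper's: a triangle inequality through $\pi_{\lambda_\varepsilon}$ and $\widehat\pi_{\lambda_\varepsilon,h_\varepsilon}$, the bracket comparison (the bracket in \eqref{eq:bias-step} is at most twice the bracket in $h_\varepsilon$), a logarithm comparison, and contraction by synchronous coupling. Your $\varepsilon/3$ split instead of the paper's $\varepsilon/4,\varepsilon/4,\varepsilon/2$ is immaterial. Your hedge on the Moreau term is unnecessary, and if taken seriously it would not prove the stated theorem. The cited bound \eqref{eq:moreau-bias} is exactly $\sqrt m\,W_2(\pi_\lambda,\pi)\le G^2\lambda/4$. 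Hence $\lambda_\varepsilon=\varepsilon/G^2$ gives $\varepsilon/4\le\varepsilon/3$ as stated. A logarithmic factor there could not be ``absorbed'', because $\lambda_\varepsilon$ is fixed in the statement.

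The substantive gap is that the constant in your logarithm comparison depends on $c$. Since $1/(mh_\varepsilon)$ carries the factor $1/c$, what you actually get is $\ell_{h_\varepsilon,\lambda_\varepsilon}\le C\ell_\varepsilon+\log(1/c)$, with $C$ universal and independent of $c$. Indeed $\ell_{h_\varepsilon,\lambda_\varepsilon}\to\infty$ as $c\downarrow0$ with the model parameters fixed. So no bound $\ell_{h_\varepsilon,\lambda_\varepsilon}\le C\ell_\varepsilon$ with $C$ free of $c$ can hold.

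Consequently, using $\ell_\varepsilon\ge2$, the left side of \eqref{eq:bias-step} is at most $2c\,\ell_{h_\varepsilon,\lambda_\varepsilon}^2/\ell_\varepsilon^2\le2c\,(C+\log(1/c))^2$. It is not ``$C'c$'' with a $c$-free $C'$. Likewise, the invariant bias is at most $Cc\,(C+\log(1/c))^2\varepsilon$, not $Cc\,\varepsilon$. Both conclusions still follow, but only because $c\,(1+\log(1/c))^2\to0$ as $c\downarrow0$; the paper states this step explicitly.

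This is also the real content of the ``circularity'' you flagged. The appearance of $\ell_\varepsilon$ inside $h_\varepsilon$ is harmless, since $\log\ell_\varepsilon\le\ell_\varepsilon$. The dependence on $c$ is what must be tracked when choosing $c$ small.
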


The proof of \cref{thm:complexity} is given in \cref{sec:complexity-proof}.
The smoothing step uses the Moreau approximation estimate of \citet[Proposition~5.9]{ActiveTrace}, also used in \citet[Section~6.5]{XinZhang}:
\begin{equation}\label{eq:moreau-bias}
 \sqrt m\,W_2(\pi_\lambda,\pi)\le\frac{G^2\lambda}{4}.
\end{equation}
The choices \eqref{eq:lambda-eps}--\eqref{eq:h-eps} leave this bias budget unchanged.

\begin{remark}[Parameter dependence]\label{rem:parameters}
Using $R\le G+\sqrt{\tau_f}$, the sufficient iteration bound \eqref{eq:N-explicit} implies
\begin{equation}\label{eq:N-trace}
 N=\widetilde O\left[
 \frac{L_f\tau_f}{m^2}+\frac{L_fG}{m^{3/2}}
 +\frac1\varepsilon\left(
 \frac{\tau_f}{m}+\frac{G^3\sqrt{\tau_f}}{m^2}
 +\frac{G^4}{m^2}
 \right)\right],
\end{equation}
where $\widetilde O$ suppresses the logarithmic factors in \eqref{eq:N-explicit}. For fixed $L_f/m$ and $G/\sqrt m$, the inequality $\tau_f\le dL_f$ gives $N=\widetilde O(d/\varepsilon)$.
\end{remark}

\section{Proof of the Main Results}\label{sec:proof}
This section proves \cref{thm:bias,thm:complexity}.
We first use stationarity and a Poisson equation to bound $W_2(\inv,\pi_\lambda)$, using the flux estimates proved in \cref{sec:flux-proofs}.
We then combine this bound with the Moreau approximation error and contraction of the chain to obtain the iteration bound.

\subsection{The analytic estimates}\label{sec:inputs}

\begin{lemma}[Target drift moments]\label{lem:moments}
Under \cref{ass:f,ass:g},
\begin{align}
 \int\norm{\drift}^2\dd\pi_\lambda
 &=\int\tr(\hess)\dd\pi_\lambda\le R^2,\label{eq:target-second}\\
 \Lp{\drift}{s}&\le3\sqrt s\,R,\qquad s\ge2,\label{eq:target-high}\\
 \int\exp\left(\frac{\norm{\drift}^2}{256R^2}\right)\dd\pi_\lambda&\le2.
                                                        \label{eq:target-exp}
\end{align}
\end{lemma}
The second-moment identity and bound follow from \citet[Lemma~7.1 and Proposition~A.4]{XinZhang}.
The higher-moment and exponential estimates are proved in \cref{app:moments}.
These estimates control the mean curvature and the drift moments using $G$ and $\tau_f$, uniformly in $\lambda$.

\begin{lemma}[Finite-order stationary density ratios]\label{lem:ratios}
Let $\alpha\ge2$. Suppose $hL_\lambda\le c$ and
\begin{equation}\label{eq:ratio-step}
 h\frac{L_f\tau_f}{m}\le\frac c{\alpha^2},\qquad
 h\frac{L_fR}{\sqrt m}\le\frac c\alpha,\qquad
 \frac h\lambda\frac{GR}{m}\le\frac c{\alpha^2}.
\end{equation}
Then the stationary Euler interpolation
\begin{equation}\label{eq:Euler-interpolation}
 \nu_t:=\heat_t((T_t)_\#\inv),\qquad 0\le t\le h,
 \qquad \nu_0=\nu_h=\inv,
\end{equation}
satisfies
\begin{equation}\label{eq:ratios}
 \sup_{0\le t\le h}\Lp{\nu_t/\pi_\lambda}{\alpha}\le e^{1/4}.
\end{equation}
\end{lemma}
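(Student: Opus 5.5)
We will prove \eqref{eq:ratios} by a Rényi/$L^\alpha$ energy estimate along one Euler step, started from the stationary law and closed by a fixed-point argument. Write $\Phi(\nu):=\int(\nu/\pi_\lambda)^\alpha\dd\pi_\lambda$. The step is split, as in \eqref{eq:Euler-interpolation}, into the drift pushforward $(T_t)_\#$ and the heat flow $\heat_t$. The plan is to compare one Euler step with the Langevin semigroup, which contracts $\Phi$, and to show that the defect is of relative size $O(h\cdot\text{step parameters})$. Since $\nu_h=\nu_0=\inv$, a one-step inequality of the form
\[
\Phi(\nu_h)\le(1-c_1\alpha mh)\Phi(\nu_0)+\text{defect}
\]
then forces $\Phi(\inv)$ to be bounded. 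To make this rigorous we first work with a truncated or regularized quantity (e.g.\ $\min(\nu/\pi_\lambda,M)$, or initial laws that are nice) and pass to the limit using uniqueness of the invariant law from \cref{prop:invariant}. Alternatively we iterate the kernel from $\pi_\lambda$, for which $\Phi=1$, show inductively that $\Phi(\mu_k)\le e^{(\alpha-1)/4}$, and use $W_2$-convergence with lower semicontinuity of $\Phi$.

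\textbf{Key computation.} Differentiate $t\mapsto\Phi(\nu_t)$. The density $\nu_t$ satisfies a continuity-type equation
\[
\partial_t\nu_t=\Delta\nu_t+\Div(\nu_t b_t),
\]
where $b_t$ is the pushed-forward drift $\drift\circ T_t^{-1}$ transported through the heat flow, i.e.\ the conditional expectation of $\drift(X)$ given $X_t=x$. Writing $w=\nu_t/\pi_\lambda$, we obtain
\[
\tfrac{d}{dt}\Phi=-\alpha(\alpha-1)\int w^{\alpha-2}\norm{\nabla w}^2\dd\pi_\lambda+\alpha(\alpha-1)\int w^{\alpha-1}\ip{\nabla w}{b_t-\drift}\dd\pi_\lambda .
\]
Young's inequality bounds the second term by half the dissipation plus
\[
\tfrac{\alpha(\alpha-1)}{2}\int w^\alpha\norm{b_t-\drift}^2\dd\pi_\lambda .
\]
Strong convexity (LSI \eqref{eq:LSI} applied to $w^{\alpha/2}$) converts the remaining dissipation into a contraction of order $m\alpha$ times $\Phi\log\Phi$, or at least a nonpositive term. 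Thus the whole difficulty is the drift mismatch term.

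\textbf{Main obstacle.} We must control $\int w^\alpha\norm{b_t-\drift}^2\dd\pi_\lambda$ with a coefficient that does not involve $L_\lambda^2$. The mismatch has two parts. The first is $\drift(T_t^{-1}y)-\drift(y)$, which is of size $t\norm{\drift}$ times the local Lipschitz constant. The second is the heat averaging, of size $\sqrt t$ times the local Lipschitz constant. Using global Lipschitz constants $L_f$ and $1/\lambda$ would cost $L_\lambda^2$. Instead:
\begin{itemize}[label={}]
\item For the $f$ part we use $\nabla^2f\preceq L_f$ together with $\tr\nabla^2 f\le\tau_f$. This gives mismatch second moments of order $L_f t\,\tau_f$ and $L_f^2t^2R^2$ via \cref{lem:moments}.
\item For the $g_\lambda$ part we exploit $\norm{\nabla g_\lambda}\le G$. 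The mismatch is then bounded both by $2G$ and by $\lambda^{-1}\times$displacement, giving second moment $\lesssim G\lambda^{-1}\E\|\text{displacement}\|\lesssim G\lambda^{-1}(tR+\sqrt{t\,d})$, i.e.\ of order $tGR/\lambda$ after absorbing $\sqrt{td}$ via $R^2\ge md$ and the step bounds.
\end{itemize}
The moments appear under the weight $w^\alpha$. We handle this by Hölder, passing to $\Phi$ at a slightly larger exponent or via the exponential moment \eqref{eq:target-exp} with the Donsker--Varadhan/entropy inequality, which costs only $\log\Phi$ factors. The time integral over $[0,h]$ then yields $\Phi(\nu_h)-\Phi(\nu_0)\lesssim\alpha^2 h[\,L_f\tau_f/m+\alpha^{-1}L_fR/\sqrt m+GR/(m\lambda)\,]\,m\,\Phi$, balanced against the contraction $-c\alpha m h\,\Phi\log\Phi$. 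Conditions \eqref{eq:ratio-step} make the bracket at most $c/\alpha^2$ (respectively $c/\alpha$), so the fixed point satisfies $\log\Phi\le(\alpha-1)/4$, which is \eqref{eq:ratios}.

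The intermediate times $t\in(0,h)$ are handled by the same differential inequality started from $\nu_0=\inv$. I expect the hardest step to be the localized, $\lambda$-uniform bound on the $g_\lambda$ mismatch under the weight $w^\alpha$.
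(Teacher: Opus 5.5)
Your skeleton is the paper's: differentiate $\mathcal R_\alpha(\nu_t\Vert\pi_\lambda)$ along the Euler interpolation, write the defect as $e_t(y)=\drift(y)-\E[\drift(X)\mid Y=y]$, split off half the dissipation by Young, get contraction from the log-Sobolev inequality, and close with $\nu_0=\nu_h$. The gap is at the step you flag as the main obstacle: your bound on the Moreau part is not of order $t$.

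From $\norm{\nabla g_\lambda(Y)-\nabla g_\lambda(X)}^2\le 2G\lambda^{-1}\norm{Y-X}$ and $Y-X=-t\drift(X)+\sqrt{2t}\,Z$, the Gaussian part contributes $G\lambda^{-1}\sqrt{td}$. The inequality $R^2\ge md$ only gives $\sqrt{td}\le tR\,(mt)^{-1/2}$. This exceeds $tR$ by the unbounded factor $(mt)^{-1/2}$, and smaller steps make it worse, not better. Carried through your closing argument, the stationary R\'enyi divergence picks up a term $\frac{\alpha^2}{m}\cdot\frac{G\sqrt{hd}}{\lambda}$. Under \eqref{eq:ratio-step} this is only bounded by $c\,(mh)^{-1/2}$. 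Making it small would require $h\lesssim m^2\lambda^2/(\alpha^4G^2d)$, which is quadratic in $\lambda$. That is exactly the loss the lemma is designed to avoid: with $\lambda\asymp\varepsilon/G^2$ it gives $\varepsilon^{-2}$ rather than $\varepsilon^{-1}$ complexity. Interpolating with the trivial bound $4G^2$ does not help either, since $\norm{Z}^2$ concentrates at $d$. You would get $\min\{4G^2,2td/\lambda^2\}$, which is either quadratic in $\lambda^{-1}$ or not small.

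The missing idea is a cancellation in the Gaussian direction under the density-ratio weight. Work with the weighted joint law $\E_\alpha[\,\cdot\,]\propto\E[u(Y)^{\alpha-1}\,\cdot\,]$, whose $Y$-marginal is the escort measure $\omega\propto u^\alpha\pi_\lambda$. For $w=\nabla g_\lambda$, use cocoercivity, $\norm{w(Y)-w(X)}^2\le\lambda^{-1}\ip{w(Y)-w(X)}{Y-X}$, instead of Cauchy--Schwarz. Then integrate by parts in $Z$ conditionally on $X$, keeping the weight inside:
\[
 \E_\alpha\ip{w(Y)-w(X)}{\sqrt{2t}\,Z}
 =2t\,\E_\omega\Delta g_\lambda
 +2t(\alpha-1)\,\E_\alpha\ip{w(Y)-w(X)}{s(Y)},
 \qquad s=\nabla\log u .
\]
The fluctuation now enters at order $t$, through the escort-averaged Moreau curvature. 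Integrating by parts under $\omega$, using $\nabla\log\omega=\alpha s-\drift$, gives $\E_\omega\Delta g_\lambda\le 2GR+2\alpha G\sqrt I$ uniformly in $\lambda$, even though $\Delta g_\lambda$ can be of size $d/\lambda$ pointwise. The resulting $\sqrt I$ and score terms are absorbed into the dissipation by Young, using the third condition and $R^2\ge m$. This yields a defect $\lesssim(t/\lambda)(GR+\alpha G\sqrt I)$.

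The same device, with $\Delta f\le\tau_f$, is what produces $tL_f\tau_f$ rather than $tL_f^2d$ for the smooth part. Your proposal asserts that order without giving a mechanism; a frozen-Hessian argument does not give it under the weight.

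Two smaller corrections:
\begin{itemize}
\item H\"older against a higher-order $\Phi$ is circular. Donsker--Varadhan costs $\mathrm{KL}(\omega\Vert\pi_\lambda)\le\frac{\alpha^2}{2m}I$, not a $\log\Phi$ factor. That cost is absorbable under the second condition, but it must be absorbed; the paper instead proves $\norm{\nabla f}_{L^2(\omega)}\le R+\alpha\sqrt I$ by weighted integration by parts.
\item The log-Sobolev contraction rate for $\mathcal R_\alpha$ is of order $m/\alpha$, not $m\alpha$. Your powers of $\alpha$ must be redone to match \eqref{eq:ratio-step}.
\end{itemize}
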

The proof is given in \cref{app:renyi}.
The prior integrability needed for differentiation is established independently in \cref{app:finiteness}.
The equal endpoint laws close a R\'enyi dissipation estimate over one Euler step using the log-Sobolev inequality for $\pi_\lambda$.

\begin{lemma}[Normalized heat and pushforward estimates]\label{lem:operators}
The estimates below apply to measurable scalar, vector, or matrix densities $F$, as indicated, whenever the norm on the right-hand side is finite. For $s\ge2$, $t>0$, and $ts^2R^2\le c$,
\begin{align}
 \Lp{\heat_tF/\pi_\lambda}{s}
   &\le C\Lp{F/\pi_\lambda}{2s},\label{eq:op-zero}\\
 \Lp{\nabla\heat_tF/\pi_\lambda}{s}
   &\le\frac C{\sqrt t}\Lp{F/\pi_\lambda}{2s}
        &&\text{for scalar }F,\label{eq:op-gradient}\\
 \Lp{\nabla\Div\heat_tF/\pi_\lambda}{s}
   &\le\frac{C\sqrt s}{t}\Lp{F/\pi_\lambda}{4s}
        &&\text{for vector }F.\label{eq:op-second}
\end{align}
Moreover, if $st(L_\lambda+R^2)\le c$, then
\begin{equation}\label{eq:op-push}
 \Lp{(T_t)_\#F/\pi_\lambda}{s}\le C\Lp{F/\pi_\lambda}{2s}.
\end{equation}
The zero-order and pushforward estimates apply to scalar, vector, and matrix densities and extend to $t=0$.
\end{lemma}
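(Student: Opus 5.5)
The plan is to treat every estimate as a Hölder inequality against an explicit change-of-measure weight. Write $r:=F/\pi_\lambda$, and let $k_t$ be the Gaussian kernel of $N(0,2tI_d)$. Then
\[
\frac{\heat_tF(x)}{\pi_\lambda(x)}
=\E\bigl[r(x+Z)\,w_x(Z)\bigr],\qquad Z\sim N(0,2tI_d),\qquad w_x(Z):=e^{U_\lambda(x)-U_\lambda(x+Z)}.
\]
Two facts drive everything. First, convexity gives $w_x(Z)\le e^{-\ip{\drift(x)}{Z}}$, so $\E[w_x(Z)]\le e^{t\norm{\drift(x)}^2}$. Second, since $\pi_\lambda(x)w_x(Z)=\pi_\lambda(x+Z)$, we have $\int\pi_\lambda(x)\E[q(x+Z)w_x(Z)]\dx=\int q\dd\pi_\lambda$ for every $q\ge0$.

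\textbf{Zero-order bound \eqref{eq:op-zero}.} Apply Hölder with respect to the measure $w_x(Z)$ and exponents $2s$ and $(2s)'$:
\[
\bigl|\E[r(x+Z)w_x(Z)]\bigr|\le \E\bigl[\norm{r(x+Z)}^{2s}w_x(Z)\bigr]^{1/(2s)}\,\E[w_x(Z)]^{(2s-1)/(2s)}.
\]
Raise this to the power $s$, integrate against $\pi_\lambda$, and apply Cauchy--Schwarz in $x$. By the second fact, the first factor integrates exactly to $\Lp{F/\pi_\lambda}{2s}^{2s}$. By the first fact, the second factor is at most $\int\exp\{(2s-1)t\norm{\drift}^2\}\dd\pi_\lambda$, which is $\le2$ by \eqref{eq:target-exp} once $(2s-1)t\le(256R^2)^{-1}$. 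Hence \eqref{eq:op-zero} holds with $C=2^{1/(2s)}$.

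\textbf{Gradient bound \eqref{eq:op-gradient}.} Here $\nabla k_t(z)=-\tfrac{z}{2t}k_t(z)$, so the representation gains a factor $Z/(2t)$. To keep constants dimension-free, I will not bound $\norm Z$ directly. Instead, write $\norm{\E[rZw]}=\sup_{\norm v=1}\E[r\ip vZ w]$ and use Cauchy--Schwarz in the tilted measure. Under the tilt $e^{-\ip{\drift(x)}Z}$, $Z$ is Gaussian with covariance $2tI$ and mean $-2t\drift(x)$. This gives
\[
\sup_{\norm v=1}\E\bigl[\ip vZ^2 e^{-\ip{\drift(x)}{Z}}\bigr]\le C\bigl(t+t^2\norm{\drift(x)}^2\bigr)e^{t\norm{\drift(x)}^2}.
\]
After dividing by $2t$, the prefactor is $t^{-1/2}(1+\sqrt t\norm{\drift})$ times an exponential weight. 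The same Hölder/Cauchy--Schwarz scheme in $x$ then applies, using \eqref{eq:target-high} and \eqref{eq:target-exp} to absorb the polynomial and exponential weights.

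\textbf{Second-order bound \eqref{eq:op-second}.} I will split $\heat_t=\heat_{t/2}\heat_{t/2}$. The outer gradient is handled by \eqref{eq:op-gradient} with $s$. It then remains to show
\[
\Lp{\Div\heat_{t/2}F/\pi_\lambda}{2s}\le C\sqrt{s/t}\,\Lp{F/\pi_\lambda}{4s},
\]
which explains both the exponent $4s$ and the factor $\sqrt s$. The difficulty is that $\Div\heat F(x)=\E[\ip{Z}{F(x+Z)}]/(2t)$ pairs $Z$ with a field that depends on $Z$, so the sup-over-directions trick does not apply directly. I would try duality first: pair with $\phi\in L^{(2s)'}(\pi_\lambda)$ and move the derivative onto $\heat_{t/2}$ applied to $\phi\pi_\lambda$. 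This reduces the divergence bound to the gradient bound for a scalar density, where the directional trick works. The $\sqrt s$ should come from the $L^{2s}$ Gaussian moment of a one-dimensional projection.

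\textbf{Pushforward bound \eqref{eq:op-push}.} By \eqref{eq:push-density}, with $x=T_t^{-1}y$,
\[
\frac{(T_t)_\#F(y)}{\pi_\lambda(y)}=r(x)\,\rho(x),\qquad \rho:=\frac{\pi_\lambda}{(\pi_\lambda\circ T_t)\det(I-t\hess)}.
\]
Changing variables and applying Cauchy--Schwarz gives
\[
\Lp{(T_t)_\#F/\pi_\lambda}{s}^s\le\Lp{F/\pi_\lambda}{2s}^{s}\Bigl(\int\rho^{2s-2}\dd\pi_\lambda\Bigr)^{1/2}.
\]
The descent lemma gives $U_\lambda(T_tx)-U_\lambda(x)\le -t(1-tL_\lambda/2)\norm{\drift}^2\le0$. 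Moreover, $-\log\det(I-t\hess)\le t\tr\hess+t^2L_\lambda\tr\hess$.

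\textbf{Main obstacle.} The remaining term is $\int\exp\{Cst\tr\hess\}\dd\pi_\lambda$. The trace is not bounded uniformly in $\lambda$: it can be as large as $d/\lambda$ pointwise, and the hypothesis $st(L_\lambda+R^2)\le c$ does not control $std/\lambda$. The observation I would build on is that $1/\rho$ is itself a probability density relative to $\pi_\lambda$, namely the density of $(T_t^{-1})_\#\pi_\lambda$. Thus $\int\rho^{2s-2}\dd\pi_\lambda$ is a Rényi-type divergence between $\pi_\lambda$ and a law near it. I would try to bound it by interpolating in $t$ and differentiating the Rényi moment along the flow of $T_t^{-1}$. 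Integration by parts then trades $\tr\hess$ for $\norm{\drift}^2$, whose exponential moments are controlled by \eqref{eq:target-exp}, since only the averaged curvature $\int\tr\hess\dd\pi_\lambda\le R^2$ of \eqref{eq:target-second} should enter. The expected closing condition for this differential inequality is exactly $st(L_\lambda+R^2)\le c$.
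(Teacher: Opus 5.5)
Your zero-order and gradient arguments are sound. The tilted H\"older step is a valid alternative to the paper's Jensen-plus-Gibbs-swap route, and it even gives a time restriction linear in $s$. Your reduction of \eqref{eq:op-push} is also exactly the paper's: via the descent lemma and Cauchy--Schwarz you reduce to bounding $\int J_t^{-k}\dd\pi_\lambda$, with $J_t=\det(I-t\hess)$ and $k=2(s-1)$. But that integral is the whole content of \eqref{eq:op-push}, and your proposal leaves it open.

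The R\'enyi differential inequality along $T_t^{-1}$ is not a proof. The map $t\mapsto T_t^{-1}$ is not a flow. The density of $(T_t^{-1})_\#\pi_\lambda$ relative to $\pi_\lambda$ contains $J_t$, so any score or dissipation term involves $\nabla\log J_t$, i.e.\ third derivatives of $U_\lambda$. These do not exist for $U_\lambda\in C^{1,1}$ and are not uniformly controlled after mollification. Nothing in the sketch shows the inequality closes under $st(L_\lambda+R^2)\le c$.

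The missing idea is never to pass to the exponential of the trace. For $0\le z\le1/(4k)$ one has $(1-z)^{-k}\le1+2kz$. Applying this to each eigenvalue of $t\hess$, which is allowed once $ktL_\lambda\le1/4$, gives the pointwise bound $J_t^{-k}\le\det(I+u\hess)$ with $u=2kt$. This determinant is the Jacobian of the expanding map $S_u=I+u\drift$, so the area formula gives
\[
 \int\det(I+u\hess)\dd\pi_\lambda
 =\int\pi_\lambda(S_u^{-1}y)\dd y
 =\int e^{U_\lambda(y)-U_\lambda(S_u^{-1}y)}\dd\pi_\lambda(y)
 \le\int e^{u\norm{\drift}^2}\dd\pi_\lambda\le2 .
\]
The first inequality uses convexity together with $\norm{\drift(S_u^{-1}y)}\le\norm{\drift(y)}$, which follows from monotonicity of $\drift$. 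The last step is \eqref{eq:target-exp}, valid once $uR^2\le1/256$. This change of variables is exactly the ``trade of $\tr\hess$ for $\norm{\drift}^2$'' you were looking for, with no flow and no third derivatives.

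The divergence step is also misstated. The $L^2(\pi_\lambda)$-adjoint of $F/\pi_\lambda\mapsto\heat_tF/\pi_\lambda$ is $\phi\mapsto\heat_t\phi$, not $\heat_t(\phi\pi_\lambda)$. Testing gives $\int\phi\,\Div\heat_{t/2}F\dx=-\int\ip{\nabla\heat_{t/2}\phi}{F/\pi_\lambda}\dd\pi_\lambda$. So you need $\Lp{\nabla\heat_{t/2}\phi}{a}\le C\sqrt{s/t}$ whenever $\Lp{\phi}{(2s)'}\le1$, with $a=(4s)'<2$. This is not an instance of \eqref{eq:op-gradient}, which concerns normalized densities and output exponents $s\ge2$. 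It follows from the directional representation: apply H\"older with exponent $4s$ on $\ip{e}{Z}$ (this is where $\sqrt s$ comes from) and exponent $a$ on $\phi$. Then use $\int\heat_{t/2}|\phi|^a\dd\pi_\lambda=\int|\phi|^a(\heat_{t/2}\pi_\lambda/\pi_\lambda)\dd\pi_\lambda$, H\"older, and \eqref{eq:op-zero} applied to the density $\pi_\lambda$ itself.
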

The proof is given in \cref{app:heat,app:heat-derivatives,app:push}.

\subsection{The stationary residual}
\label{sec:flux}\label{sec:identities-proof}

Assume $hL_\lambda<1$ and let $\inv$ be the invariant density from \cref{prop:invariant}.
Starting from $\inv$, a full drift step followed by heat evolution for time $h$ returns to $\inv$.
We compare this endpoint with the average along the heat path.
Define
\begin{equation}\label{eq:J}
 J:=\frac1h\int_0^h(T_s)_\#(\drift\,\inv)\dd s,
\end{equation}
and
\begin{equation}\label{eq:heat-interpolation}
 \rho_t:=\heat_t((T_h)_\#\inv),\qquad
 \bar\rho:=\frac1h\int_0^h\rho_t\dd t,
 \qquad 0\le t\le h.
\end{equation}
By stationarity,
\[
 \rho_0=(T_h)_\#\inv,\qquad \rho_h=\inv.
\]
Unlike the Euler interpolation in \eqref{eq:Euler-interpolation}, this path keeps $T_h$ fixed as $t$ varies.
Define also
\begin{align}
 H&:=\frac1h\int_0^h t\nabla\rho_t\dd t,\label{eq:H}\\
 E&:=-\frac1h\int_0^h(h-s)
      (T_s)_\#\bigl((\drift\otimes\drift)\inv\bigr)\dd s.
 \label{eq:E}
\end{align}
The gradient in \eqref{eq:H} is used only for $t>0$.
The following lemma proves that these time integrals define
$L^1$ fields and identifies their weak derivatives.

\begin{lemma}[Exact stationary identities]\label{lem:exact-identities}
Under $hL_\lambda<1$, the fields in \eqref{eq:J}--\eqref{eq:E}
are well defined in $L^1(\R^d)$ and satisfy
\begin{equation}\label{eq:flux-identities}
 \begin{aligned}
 \inv-\bar\rho&=\Div H,\\
 \drift\,\inv-J&=\Div E,\\
 \Delta\bar\rho+\Div J&=0.
 \end{aligned}
\end{equation}
Consequently,
\begin{equation}\label{eq:exact-residual}
 \gen^*(\bar\rho-\pi_\lambda)
 =\Div\Div(E-\drift\otimes H)+\Div((\hess)H).
\end{equation}
All these identities hold in the sense of distributions.
In particular, $\Delta\bar\rho$ denotes the distributional
Laplacian; it has the $L^1$ representative
$(\inv-\rho_0)/h$.
\end{lemma}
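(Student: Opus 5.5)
We prove each identity by testing against $\phi\in C_c^\infty(\R^d)$ and using the weak formulas \eqref{eq:weak-div}. Then we combine the three identities algebraically.

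\emph{Well-definedness.} By \cref{prop:pushforward}, $(T_s)_\#$ preserves $L^1$ norms for $0\le s\le h$, since $sL_\lambda<1$. The fields $\drift\,\inv$ and $(\drift\otimes\drift)\inv$ are integrable because $\inv\in\mathcal P_2$ and the drift grows linearly. Hence $J$ and $E$ are Bochner integrals of uniformly bounded $L^1$ fields; measurability in $s$ follows by testing against $\phi$. For $H$, the bound \eqref{eq:heat-basic} gives $\norm{t\nabla\rho_t}_{L^1}\le C_d\sqrt t$, which is integrable on $[0,h]$. The field $\bar\rho$ is an average of probability densities.

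\emph{First identity.} From \eqref{eq:heat-weak-integrated} with $a=t$, $b=h$ we get $\int\phi(\inv-\rho_t)=\int_t^h\int\Delta\phi\,\rho_u$. Averaging over $t\in[0,h]$ and applying Fubini gives
\[
\int\phi(\inv-\bar\rho)\dx=\frac1h\int_0^h u\int\Delta\phi\,\rho_u\dx\dd u .
\]
For $u>0$ we integrate by parts once, $\int\Delta\phi\,\rho_u=-\int\ip{\nabla\phi}{\nabla\rho_u}$, so the right side equals $-\int\ip{\nabla\phi}{H}$, which is the weak form of $\Div H$.

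\emph{Second identity.} The chain rule gives $\frac{\dd}{\dd s}\phi(T_sx)=-\ip{\nabla\phi(T_sx)}{\drift(x)}$. Differentiating once more,
\[
\frac{\dd^2}{\dd s^2}\phi(T_sx)=\bigl\langle\nabla^2\phi(T_sx),\drift\otimes\drift(x)\bigr\rangle .
\]
Taylor's formula with integral remainder applied to $s\mapsto\int\ip{\nabla\phi(T_sx)}{\drift}\inv\dx$ on $[0,s]$, together with an average over $s\in[0,h]$, should produce $\int\ip{\nabla\phi}{\drift\inv-J}=-\int\langle\nabla^2\phi,E\rangle$, which is the weak form of $\Div E$. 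The interchanges are justified by Fubini, using boundedness of $\nabla\phi$ and $\nabla^2\phi$ and integrability of $\norm{\drift}^2\inv$. Getting the weight $(h-s)$ right needs care, and I expect this to be the main bookkeeping obstacle. The cleaner route is to write
\[
\ip{\nabla\phi(T_sx)}{\drift}-\ip{\nabla\phi(x)}{\drift}=-\int_0^s\langle\nabla^2\phi(T_ux),\drift\otimes\drift\rangle\dd u ,
\]
average over $s$, and swap the order of integration, which yields the factor $(h-u)/h$.

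\emph{Third identity.} Since $\phi(T_hx)-\phi(x)=-\int_0^h\ip{\nabla\phi(T_sx)}{\drift(x)}\dd s$, integrating against $\inv$ gives $\int\phi(\rho_0-\inv)=-h\int\ip{\nabla\phi}{J}$. By \eqref{eq:heat-weak-integrated} with $a=0$, $b=h$, we also have $\int\phi(\inv-\rho_0)=h\int\Delta\phi\,\bar\rho$. Combining the two, $\int\Delta\phi\,\bar\rho-\int\ip{\nabla\phi}{J}=0$, which is the third identity. It also shows that $\Delta\bar\rho$ has the $L^1$ representative $(\inv-\rho_0)/h$.

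\emph{Residual identity.} Since $\gen^*\pi_\lambda=0$, we have $\gen^*(\bar\rho-\pi_\lambda)=\Delta\bar\rho+\Div(\drift\bar\rho)$. By the third identity $\Delta\bar\rho=-\Div J$, and the second gives $J=\drift\inv-\Div E$. Using the first identity $\bar\rho=\inv-\Div H$, this becomes
\[
\gen^*(\bar\rho-\pi_\lambda)=\Div\Div E-\Div(\drift\,\Div H).
\]
It remains to check distributionally the product rule $\drift\,\Div H=\Div(\drift\otimes H)-(\hess)H$, valid because $\drift$ is Lipschitz and $H\in L^1$. Against a test function $\phi$ this reads
\[
\langle\Div(\drift\,\Div H),\phi\rangle=\int\ip{\nabla(\ip{\nabla\phi}{\drift})}{H},
\]
where $\nabla(\ip{\nabla\phi}{\drift})=\nabla^2\phi\,\drift+\hess\nabla\phi$ a.e. This pairing requires justifying the product of a Lipschitz function with the distribution $\Div H$. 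I would do this by mollifying $\drift$ and using dominated convergence, since $\hess$ is bounded. Substituting the product rule gives \eqref{eq:exact-residual}.
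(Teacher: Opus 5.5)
Your route is essentially the paper's. You use the same drift balance $\rho_0-\inv=h\Div J$, the same heat balance from \eqref{eq:heat-weak-integrated}, the same $(h-u)$ weight from swapping the $s$ and $u$ integrals, and the same mollification argument for the product rule. Your Fubini proof of the first identity is an equivalent substitute for the paper's integration by parts in time. There is, however, one genuine gap, in the second identity. You test $\drift\,\inv-J$ only against gradient fields and conclude that $\int\ip{\nabla\phi}{\drift\,\inv-J}\dx=-\int\ip{\nabla^2\phi}{E}\dx$ ``is the weak form of $\Div E$''. It is not. Since $\langle\Div E,\nabla\phi\rangle=-\int\ip{E}{\nabla^2\phi}\dx$, you have shown only $\Div(\drift\,\inv-J)=\Div\Div E$, that is, the vector identity modulo divergence-free fields. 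For $d\ge2$, gradient tests do not determine a vector distribution: $(\partial_2\psi,-\partial_1\psi,0,\ldots,0)$ with $\psi\in C_c^\infty(\R^d)$ is a nonzero integrable field annihilated by every $\nabla\phi$. The lemma asserts $\drift\,\inv-J=\Div E$ itself.

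The repair is immediate: run your ``cleaner route'' with an arbitrary $\Phi\in C_c^\infty(\R^d;\R^d)$ in place of $\nabla\phi$. The paper does this componentwise, with $\Phi=\phi e_i$. Then $\frac{\dd}{\dd u}\ip{\Phi(T_ux)}{\drift(x)}=-\sum_{i,j}\partial_j\Phi_i(T_ux)\,\partial_iU_\lambda(x)\,\partial_jU_\lambda(x)$. The domination by $\norm{D\Phi}_\infty\norm{\drift}^2\inv$ justifies Fubini exactly as before. The swap $\frac1h\int_0^h\int_0^s(\cdot)\dd u\dd s=\frac1h\int_0^h(h-u)(\cdot)\dd u$ then gives $\int\ip{\Phi}{\drift\,\inv-J}\dx=-\sum_{i,j}\int\partial_j\Phi_i\,E_{ij}\dx$, which is the true weak form. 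Your derivation of \eqref{eq:exact-residual} is unaffected, because there the second identity enters only through $\Div J=\Div(\drift\,\inv)-\Div\Div E$, which is exactly what gradient tests provide. One small precision in the product rule: the dominated-convergence step uses that $\Div H=\inv-\bar\rho$ is an $L^1$ function (from your first identity), not merely that $H\in L^1$. This is also what identifies $\drift\,\Div H$ with the pointwise product $\drift\,(\inv-\bar\rho)$ that you substitute.
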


\begin{proof}
\emph{Integrability of the fields.}
Since $\drift$ has at most linear growth and
$\inv\in\mathcal P_2(\R^d)$,
\begin{equation}\label{eq:invariant-drift-integrability}
 \int\bigl(\norm{\drift}+\norm{\drift}^2\bigr)\inv\dx<\infty.
\end{equation}
The pushforward and heat-kernel formulas give jointly measurable
versions of the integrands in \eqref{eq:J}--\eqref{eq:E}.
Since $\rho_0$ is a probability density, so are $\rho_t$ and
$\bar\rho$.
By \cref{prop:pushforward} and \eqref{eq:heat-basic},
\begin{align*}
 \norm J_{L^1}
 &\le\int\norm{\drift}\inv\dx,\\
 \norm E_{L^1}
 &\le\frac h2\int\norm{\drift}^2\inv\dx,\\
 \norm H_{L^1}
 &\le\frac1h\int_0^h t\norm{\nabla\rho_t}_{L^1}\dd t
 \le\frac{C_d}{h}\int_0^h\sqrt t\dd t
 =\frac{2C_d}{3}\sqrt h.
\end{align*}
All $L^1$ norms here are with respect to Lebesgue measure.
These estimates justify the time integrals and their pairing with bounded test functions.

\emph{The drift balance.}
We first prove $\rho_0-\inv=h\Div J$.
Fix $\phi\in C_c^\infty(\R^d)$.
For $\varepsilon\ne0$ with $s,s+\varepsilon\in[0,h]$, the
mean value theorem gives
\[
 \left|\frac{\phi(T_{s+\varepsilon}x)-\phi(T_sx)}{\varepsilon}\right|
 \inv(x)
 \le\norm{\nabla\phi}_\infty\norm{\drift(x)}\inv(x).
\]
The right-hand side is integrable by \eqref{eq:invariant-drift-integrability} and independent of $s,\varepsilon$.
Dominated convergence therefore permits differentiation under the integral in \eqref{eq:push-def}, giving
\begin{align}\label{eq:drift-weak-time}
 \frac\dd{\dd s}\int\phi(y)(T_s)_\#\inv(y)\dd y
 &=-\int\ip{\nabla\phi(T_sx)}{\drift(x)}\inv(x)\dx\notag\\
 &=-\int\ip{\nabla\phi}{(T_s)_\#(\drift\,\inv)}\dx.
\end{align}
The same domination gives continuity of this derivative in $s$.
Integrating \eqref{eq:drift-weak-time} over $[0,h]$ yields
\[
 \int\phi(\rho_0-\inv)\dx
 =-h\int\ip{\nabla\phi}{J}\dx
 =h\langle\Div J,\phi\rangle.
\]
Thus
\begin{equation}\label{eq:drift-balance}
 \rho_0-\inv=(T_h)_\#\inv-\inv=h\Div J
 \quad\text{in the sense of distributions}.
\end{equation}

\emph{The heat balance and the endpoint difference.}
To prove the third identity in \eqref{eq:flux-identities}, we compute the distributional Laplacian through its definition.
For $\phi\in C_c^\infty(\R^d)$, Fubini's theorem applies because
\[
 \int_0^h\int\rho_t|\Delta\phi|\dx\dd t
 \le h\norm{\Delta\phi}_\infty<\infty.
\]
The definition of the distributional Laplacian and \eqref{eq:heat-weak-integrated} therefore give
\begin{align}\label{eq:average-heat-balance}
 \langle\Delta\bar\rho,\phi\rangle
 &:=\int\bar\rho\Delta\phi\dx
 =\frac1h\int_0^h\int\rho_t\Delta\phi\dx\dd t\notag\\
 &=\frac1h\int\phi(\rho_h-\rho_0)\dx\notag\\
 &=\frac1h\int\phi(\inv-\rho_0)\dx
 =-\langle\Div J,\phi\rangle.
\end{align}
Here \eqref{eq:heat-weak-integrated} is used with $F=\rho_0$, $a=0$, and $b=h$, and the last equality is \eqref{eq:drift-balance}.
This proves $\Delta\bar\rho+\Div J=0$ and the claimed $L^1$ representative of $\Delta\bar\rho$.
No classical Laplacian of $\bar\rho$ was assumed.

We next prove $\inv-\bar\rho=\Div H$.
The integrability already established for $t\nabla\rho_t$ permits Fubini's theorem.
Spatial integration by parts for $t>0$, followed by \eqref{eq:heat-weak-time}, gives
\begin{align}\label{eq:H-weak-proof}
 \langle\Div H,\phi\rangle
 &=-\frac1h\int_0^h t\int\ip{\nabla\phi}{\nabla\rho_t}\dx\dd t
 \notag\\
 &=\frac1h\int_0^h t\int\rho_t\Delta\phi\dx\dd t\notag\\
 &=\frac1h\int_0^h t\frac\dd{\dd t}
                    \left(\int\phi\rho_t\dx\right)\dd t\notag\\
 &=\frac1h\left[t\int\phi\rho_t\dx\right]_0^h
       -\frac1h\int_0^h\int\phi\rho_t\dx\dd t\notag\\
 &=\int\phi(\inv-\bar\rho)\dx.
\end{align}
The time integration by parts is valid because $t\mapsto\int\phi\rho_t\dx$ is absolutely continuous by \eqref{eq:heat-weak-integrated}. Its boundary term at zero vanishes since $\left|t\int\phi\rho_t\dx\right|\le t\norm\phi_\infty$.

\emph{The matrix flux.}
We prove the second identity in \eqref{eq:flux-identities}
componentwise.
For each fixed $i$, the difference quotients for the pairing in \eqref{eq:push-def} with density $(\partial_iU_\lambda)\inv$ are dominated by the integrable function $\norm{\nabla\phi}_\infty\norm{\drift}^2\inv$.
We may therefore differentiate under the integral.
Keeping the initial weight $\partial_iU_\lambda(x)$ fixed gives
\begin{align}\label{eq:vector-push-weak-time}
 &\frac\dd{\dd s}
   \int\phi(y)\bigl[(T_s)_\#(\drift\,\inv)\bigr]_i(y)\dd y
 \notag\\
 &\qquad=-\sum_j\int\partial_j\phi(T_sx)
       \partial_iU_\lambda(x)\partial_jU_\lambda(x)\inv(x)\dx
 \notag\\
 &\qquad=-\sum_j\int\partial_j\phi(y)
       \bigl[(T_s)_\#((\drift\otimes\drift)\inv)\bigr]_{ij}(y)\dd y.
\end{align}
The same second-moment bound justifies Fubini's theorem for the two time integrals below, since
\[
 \int_0^h\int_0^s\int\norm{\drift}^2\inv\dx\dd u\dd s
 =\frac{h^2}{2}\int\norm{\drift}^2\inv\dx<\infty.
\]
Integrating \eqref{eq:vector-push-weak-time} first from $0$ to $s$ and then averaging over $s\in[0,h]$, we obtain
\begin{align*}
 &\int\phi\bigl((\partial_iU_\lambda)\inv-J_i\bigr)\dx\\
 &\quad=-\frac1h\int_0^h\int_0^s\frac\dd{\dd u}
       \left(\int\phi(y)
           \bigl[(T_u)_\#(\drift\,\inv)\bigr]_i(y)\dd y\right)
       \dd u\dd s\\
 &\quad=\frac1h\sum_j\int_0^h(h-u)
       \int\partial_j\phi(y)
       \bigl[(T_u)_\#((\drift\otimes\drift)\inv)\bigr]_{ij}(y)
       \dd y\dd u\\
 &\quad=-\sum_j\int\partial_j\phi\,E_{ij}\dx
       =\langle(\Div E)_i,\phi\rangle.
\end{align*}
This proves $\drift\,\inv-J=\Div E$ and completes
\eqref{eq:flux-identities}.

\emph{The residual equation.}
We use the product rule
\begin{equation}\label{eq:flux-product-rule}
 \Div(\drift\otimes H)
 =\drift\,(\inv-\bar\rho)+(\hess)H.
\end{equation}
To justify it, test $\Div H=\inv-\bar\rho$ against smooth approximations of $(\partial_iU_\lambda)\phi$.
Mollifications of $\partial_iU_\lambda$ converge locally uniformly; their gradients are uniformly bounded and converge almost everywhere to its weak gradient.
Since $H$ and $\inv-\bar\rho$ are locally integrable, dominated convergence gives
\begin{align*}
 -\int(\partial_iU_\lambda)\ip{H}{\nabla\phi}\dx
 =\int(\partial_iU_\lambda)(\inv-\bar\rho)\phi\dx
 +\int\phi\sum_j(\partial_j\partial_iU_\lambda)H_j\dx,
\end{align*}
which is the $i$th component of \eqref{eq:flux-product-rule}.
Now $\gen^*\pi_\lambda=0$, \eqref{eq:flux-identities}, and \eqref{eq:flux-product-rule} give
\begin{align*}
 \gen^*(\bar\rho-\pi_\lambda)
 &=\Delta\bar\rho+\Div(\drift\,\bar\rho)\\
 &=\Div(\drift\,\bar\rho-J)\\
 &=\Div\bigl(\Div E-\drift\,(\inv-\bar\rho)\bigr)\\
 &=\Div\bigl(\Div E-\Div(\drift\otimes H)+(\hess)H\bigr)\\
 &=\Div\Div(E-\drift\otimes H)+\Div((\hess)H).
\end{align*}
All derivatives in this calculation are distributional.
Equivalently, the resulting identity means that
\[
 \int(\bar\rho-\pi_\lambda)\gen\phi\dx
 =\int\ip{E-\drift\otimes H}{\nabla^2\phi}\dx
  -\int\ip{(\hess)H}{\nabla\phi}\dx
\]
for every $\phi\in C_c^\infty(\R^d)$.
This proves \eqref{eq:exact-residual}.
\end{proof}

For the quantitative flux bounds below and in
\cref{sec:flux-proofs}, choose the moment order
\begin{equation}\label{eq:r-local}
 r:=4+2\log(1+L_\lambda/m).
\end{equation}

\begin{lemma}[Stationary flux bounds]\label{lem:flux}
Under \eqref{eq:bias-step}, the fields in \eqref{eq:J}--\eqref{eq:E} satisfy
\begin{align}
 \Lp{H/\pi_\lambda}{r}&\le ChrR\log\frac e{mh},\label{eq:H-bound}\\
 \Lp{E/\pi_\lambda}{2}&\le ChR^2,\label{eq:E-bound}\\
 \Lp{(E-\drift\otimes H)/\pi_\lambda}{2}
       &\le ChrR^2\log\frac e{mh}.\label{eq:A-bound}
\end{align}
\end{lemma}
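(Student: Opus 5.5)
The three bounds follow from Minkowski's integral inequality in the time variable together with the operator estimates of \cref{lem:operators} and the density-ratio bound of \cref{lem:ratios}. Under \eqref{eq:bias-step} with $c$ small enough, the hypotheses \eqref{eq:ratio-step} hold for $\alpha$ up to a fixed multiple of $r$, since $\ell_{h,\lambda}\gtrsim r$. Likewise $hs^2R^2\le c$ and $sh(L_\lambda+R^2)\le c$ hold for $s\le 8r$. Hence \cref{lem:ratios} gives $\Lp{\inv/\pi_\lambda}{\alpha}\le e^{1/4}$ for all $\alpha\le Cr$, and all operator estimates apply with moment orders up to $8r$.

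\emph{The bound \eqref{eq:E-bound}.} Minkowski's inequality and \eqref{eq:op-push} with $s=2$ give
\[
\Lp{E/\pi_\lambda}{2}\le\frac1h\int_0^h(h-s)\,C\,\Lp{(\drift\otimes\drift)\inv/\pi_\lambda}{4}\dd s .
\]
By Hölder's inequality, the norm on the right is at most $\Lp{\drift}{16}^2\Lp{\inv/\pi_\lambda}{8}$. Then \eqref{eq:target-high} and \cref{lem:ratios} bound this by $CR^2$, and the time integral gives $ChR^2$.

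\emph{The bound \eqref{eq:H-bound}.} Write $\rho_t=\heat_t\rho_0$ with $\rho_0=(T_h)_\#\inv$. The plan is to apply \eqref{eq:op-gradient} and \eqref{eq:op-push} to get $\Lp{\nabla\rho_t/\pi_\lambda}{r}\le Ct^{-1/2}$, which yields only $\norm{H}\lesssim\sqrt h$. The constant density part is not small, so the bound must come from cancellation. I would use $\pi_\lambda$ itself: $\nabla\pi_\lambda=-\pi_\lambda\drift$, and $\heat_t\pi_\lambda-\pi_\lambda=\int_0^t\Delta\heat_u\pi_\lambda\dd u$ is small.

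First split $\rho_0=\pi_\lambda+(\rho_0-\inv)+(\inv-\pi_\lambda)$. The first piece contributes $t\nabla\heat_t\pi_\lambda=t\heat_t(-\drift\pi_\lambda)$. By \eqref{eq:op-zero} and \eqref{eq:target-high} its norm is $\lesssim t\sqrt r R$, which averages to $O(h\sqrt rR)$. The second piece is $h\Div J$ by \eqref{eq:drift-balance}. Hence $t\nabla\heat_t(h\Div J)=ht\nabla\Div\heat_tJ$, which by \eqref{eq:op-second} has norm $\le Ch\sqrt r\,\Lp{J/\pi_\lambda}{4r}$. Via \eqref{eq:op-push} and Hölder this is $O(h\sqrt r\cdot\sqrt rR)=O(hrR)$.

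The third piece, $\inv-\pi_\lambda$, is the main obstacle. Its ratio is bounded but not small a priori, and a naive bound gives $\sqrt t$. My first attempt would be to handle it by splitting the time integral at $t_0\sim 1/(mr)$. Because $\heat_t$ is linear, $t\nabla\heat_t(\inv-\pi_\lambda)$ contributes to $H$ only through the average over $t\in[0,h]$. Combining the gradient bound $C/\sqrt t$ on $[0,h]$ with the integrating factor gives a contribution $h^{-1}\int_0^h t\cdot t^{-1/2}\dd t$. This would still give $\sqrt h$, so instead I would reinterpret. Using $\Div H=\inv-\bar\rho$, I would define $H$ via the corrector $\rho_t-\inv$, writing $\rho_t-\inv=\heat_t\rho_0-\heat_h\rho_0=-\int_t^h\Delta\heat_u\rho_0\dd u$. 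This reduces everything to $\Delta\heat_u$ applied to the small flux $h\Div J$ plus $\heat_u$ applied to $\pi_\lambda$. The $\log\frac{e}{mh}$ arises from $\int_h^{1/m}u^{-1}\dd u$ in the resulting dyadic/time integral.

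\emph{The bound \eqref{eq:A-bound}.} By the triangle inequality, $\norm{\drift\otimes H}_F=\norm{\drift}\norm H$. Hölder's inequality then gives $\Lp{(\drift\otimes H)/\pi_\lambda}{2}\le\Lp{\drift}{q}\Lp{H/\pi_\lambda}{r}$ with $1/q=1/2-1/r$. This is bounded by $C\sqrt r R\cdot hrR\log\frac e{mh}$. To reach the stated $hrR^2\log\frac{e}{mh}$, I would instead use the exponential moment \eqref{eq:target-exp}: bounding $\Lp{\drift}{q}$ with $q\le 4$ (since $r\ge4$) gives $CR$. Adding \eqref{eq:E-bound} completes the proof.
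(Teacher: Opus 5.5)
Your bound \eqref{eq:E-bound} follows the paper's argument, and \eqref{eq:A-bound} follows once \eqref{eq:H-bound} is known. For the latter, $q=2r/(r-2)\le4$ when $r\ge4$, so \eqref{eq:target-high} already gives $\Lp{\drift}{q}\le CR$: there is no $\sqrt r$ loss, and the exponential moment is not needed.

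The genuine gap is \eqref{eq:H-bound}. Your split $\rho_0=\pi_\lambda+(\rho_0-\inv)+(\inv-\pi_\lambda)$ handles the first two pieces correctly. The third piece, however, needs $\nabla\heat_t(\inv-\pi_\lambda)/\pi_\lambda$ to be $O(rR\log\frac e{mh})$ in $L^r(\pi_\lambda)$ uniformly for $t\in(0,h]$, rather than $O(t^{-1/2})$. That is a quantitative gradient bound on the invariant density itself, and the proposal never produces one.

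The ``reinterpretation'' does not help. Averaging $\rho_t-\inv=-\int_t^h\Delta\heat_u\rho_0\dd u$ over $t$ just returns $\bar\rho-\inv=-\Div H$ with the same field $H=\frac1h\int_0^hu\nabla\rho_u\dd u$, so it is circular. Also, $\rho_0$ still contains the piece $\inv-\pi_\lambda$, so nothing is reduced to $h\Div J$ and $\pi_\lambda$. Finally, a time integral out to $1/m$ is not available with your tools: \eqref{eq:op-gradient} and \eqref{eq:op-second} at order $2r$ require $t(2r)^2R^2\le c$.

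The missing idea is to iterate stationarity so as to get regularity of $\inv$ at the scale $\tau=c_1/(r^2R^2)$, which is much longer than $h$. Apply $\heat_h$ to \eqref{eq:drift-balance} and use \eqref{eq:stationarity} together with $\heat_h\Div J=\Div\heat_hJ$; this gives $\inv=\heat_h\inv+h\Div\heat_hJ$. Iterating $n=\lfloor\tau/h\rfloor$ times yields
\[
 \inv=\heat_{nh}\inv+h\sum_{j=1}^n\Div\heat_{jh}J .
\]
Now differentiate. Apply \eqref{eq:op-gradient} at $t=nh\ge\tau/2$ and \eqref{eq:op-second} at $t=jh$, together with $\Lp{J/\pi_\lambda}{8r}\le C\sqrt r\,R$ and \eqref{eq:ratios}. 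Since $n\le\tau/h\le1/(mh)$, this gives
\[
 \Lp{\nabla\inv/\pi_\lambda}{2r}
 \le C\tau^{-1/2}+CrR\sum_{j=1}^n\frac1j
 \le CrR\log\frac e{mh}.
\]
This harmonic sum is where the logarithm comes from.

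Then $\nabla\rho_t=\heat_t(\nabla\inv)+h\nabla\Div\heat_tJ$, so \eqref{eq:op-zero} and \eqref{eq:op-second} give
\[
 \Lp{\nabla\rho_t/\pi_\lambda}{r}\le CrR\log\frac e{mh}+\frac{ChrR}{t}.
\]
The weight $t$ in \eqref{eq:H} cancels the $1/t$, which yields \eqref{eq:H-bound}. With this estimate in hand, your $\pi_\lambda$ splitting is unnecessary.
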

The proof is given in \cref{sec:flux-proofs}.

\subsection{A residual-to-transport estimate}\label{sec:residual-lemma}
\begin{proposition}[Second-order residual estimate]\label{prop:residual}
Let $\nu,\rho\in\mathcal P_2(\R^d)$ be densities with
$\nu/\pi_\lambda,\rho/\pi_\lambda\in L^2(\pi_\lambda)$.
Suppose a vector density $H$ and a matrix density $E$ satisfy
\begin{align}
 \nu-\rho&=\Div H,\label{eq:abstract-H}\\
 \gen^*(\rho-\pi_\lambda)
 &=\Div\Div(E-\drift\otimes H)+\Div((\hess)H)\label{eq:abstract-residual}
\end{align}
in the sense of distributions. If the norms on the right below are finite, then
\begin{align}\label{eq:abstract-bound}
 \sqrt m\,W_2(\nu,\pi_\lambda)\le{}&
 2\sqrt m\Lp{H/\pi_\lambda}{2}
 +2\Lp{(E-\drift\otimes H)/\pi_\lambda}{2}\notag\\
 &+2\left(\int\frac{H^\top(\hess)H}{\pi_\lambda}\dx\right)^{1/2}.
\end{align}
\end{proposition}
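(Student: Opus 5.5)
The plan is to go through the negative Sobolev norm. By \eqref{eq:transport-comparison} applied to $\nu$, which is allowed since $\nu/\pi_\lambda\in L^2(\pi_\lambda)$, it suffices to show that $\Nminus{\nu-\pi_\lambda}$ is bounded by one half of the right-hand side of \eqref{eq:abstract-bound} divided by $\sqrt m$. We split
\[
\nu-\pi_\lambda=(\nu-\rho)+(\rho-\pi_\lambda).
\]
Fix a test function $\phi\in C_c^\infty(\R^d)$ with $\Lp{\nabla\phi}{2}\le1$. For the first piece, \eqref{eq:abstract-H} gives
\[
\int\phi(\nu-\rho)\dx=-\int\ip{\nabla\phi}{H}\dx=-\int\ip{\nabla\phi}{H/\pi_\lambda}\dd\pi_\lambda .
\]
Cauchy--Schwarz bounds this by $\Lp{H/\pi_\lambda}{2}$. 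After multiplying by $2\sqrt m$, this is the first term of \eqref{eq:abstract-bound}.

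For the second piece we use the Poisson equation. Let $\psi$ be the mean-zero solution from \cref{lem:poisson-energy} with $\gen\psi=\phi-\int\phi\dd\pi_\lambda$. Because $\rho-\pi_\lambda$ has zero mass,
\[
\int\phi(\rho-\pi_\lambda)\dx=\int(\rho-\pi_\lambda)\gen\psi\dx .
\]
The identity \eqref{eq:abstract-residual} is only stated against $C_c^\infty$ tests, so we first apply it to the approximants $\psi_n$ of \eqref{eq:poisson-approximation}:
\[
\int(\rho-\pi_\lambda)\gen\psi_n\dx=\int\ip{E-\drift\otimes H}{\nabla^2\psi_n}\dx-\int\ip{(\hess)H}{\nabla\psi_n}\dx .
\]
We then pass to the limit $n\to\infty$. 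On the left, write $\int(\rho-\pi_\lambda)\gen\psi_n\dx=\int(\rho/\pi_\lambda-1)\gen\psi_n\dd\pi_\lambda$ and use $\rho/\pi_\lambda\in L^2(\pi_\lambda)$ together with $L^2(\pi_\lambda)$ convergence of $\gen\psi_n$. The first term on the right converges because $(E-\drift\otimes H)/\pi_\lambda\in L^2(\pi_\lambda)$ and $\nabla^2\psi_n\to\nabla^2\psi$ in $L^2(\pi_\lambda)$.

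The curvature term is the delicate one, and I expect it to be the main obstacle. Write $A=\hess$, which is positive semidefinite a.e. by \eqref{eq:potential-hessian-bounds}. Cauchy--Schwarz in the $A$-weighted inner product gives
\[
\left|\int\ip{AH}{\nabla\psi_n-\nabla\psi}\dx\right|\le\Bigl(\int\frac{H^\top AH}{\pi_\lambda}\dx\Bigr)^{1/2}\Bigl(\int(\nabla\psi_n-\nabla\psi)^\top A(\nabla\psi_n-\nabla\psi)\dd\pi_\lambda\Bigr)^{1/2}.
\]
The second factor is at most $\sqrt{L_\lambda}\,\Lp{\nabla\psi_n-\nabla\psi}{2}\to0$. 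So only the stated weighted norm of $H$ is needed, together with the global bound $L_\lambda$ applied to an error that tends to zero. The resulting limit identity is
\[
\int\phi(\rho-\pi_\lambda)\dx=\int\ip{E-\drift\otimes H}{\nabla^2\psi}\dx-\int\ip{AH}{\nabla\psi}\dx .
\]

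Now estimate the limit identity. By Cauchy--Schwarz,
\[
\left|\int\phi(\rho-\pi_\lambda)\dx\right|\le\Lp{(E-\drift\otimes H)/\pi_\lambda}{2}\,\Lp{\nabla^2\psi}{2}+\Bigl(\int\frac{H^\top AH}{\pi_\lambda}\dx\Bigr)^{1/2}\Bigl(\int\nabla\psi^\top A\nabla\psi\dd\pi_\lambda\Bigr)^{1/2}.
\]
The Bochner estimate \eqref{eq:bochner-main} bounds both $\Lp{\nabla^2\psi}{2}^2$ and $\int\nabla\psi^\top A\nabla\psi\dd\pi_\lambda$ by $m^{-1}\Lp{\nabla\phi}{2}^2\le m^{-1}$. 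The Hessian therefore stays a weight rather than being replaced by $L_\lambda$, which is the point of keeping it. Summing the two pieces and taking the supremum over $\phi$ gives
\[
\Nminus{\nu-\pi_\lambda}\le\Lp{H/\pi_\lambda}{2}+m^{-1/2}\Bigl[\Lp{(E-\drift\otimes H)/\pi_\lambda}{2}+\Bigl(\int\frac{H^\top AH}{\pi_\lambda}\dx\Bigr)^{1/2}\Bigr].
\]
Multiplying by $2\sqrt m$ and using \eqref{eq:transport-comparison} yields \eqref{eq:abstract-bound}.
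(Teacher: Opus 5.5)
Your proposal is correct and follows the paper's proof essentially step for step. It uses the same split $\nu-\pi_\lambda=(\nu-\rho)+(\rho-\pi_\lambda)$ and the same Poisson pairing, passed to the limit through the approximants $\psi_n$, with $\sqrt{L_\lambda}$ used only on the vanishing error $\nabla\psi_n-\nabla\psi$; it then applies the same Cauchy--Schwarz and Bochner estimate keeping $\hess$ as a weight, and concludes with \eqref{eq:transport-comparison}.
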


For clarity, \eqref{eq:abstract-H} and \eqref{eq:abstract-residual} mean that, for every $v\in C_c^\infty(\R^d)$,
\begin{align*}
 \int v(\nu-\rho)\dx
 &=-\int\ip{\nabla v}{H}\dx,\\
 \int(\rho-\pi_\lambda)\gen v\dx
 &=\int\ip{E-\drift\otimes H}{\nabla^2v}\dx
   -\int\ip{(\hess)H}{\nabla v}\dx.
\end{align*}
All derivatives in these integral identities act on the test function.
In particular, no classical derivatives of $E$ or of the weak Hessian $\hess$ are required.

\begin{proof}[Proof of \cref{prop:residual}]
Fix $\phi\in C_c^\infty(\R^d)$ and let $\psi$ be its Poisson
solution from \cref{lem:poisson-energy}.
We first prove the pairing identity
\begin{align}\label{eq:weak-pairing}
 \int\phi(\rho-\pi_\lambda)\dx
 &=\int(\rho-\pi_\lambda)\gen\psi\dx\notag\\
 &=\int\ip{E-\drift\otimes H}{\nabla^2\psi}\dx
   -\int\ip{(\hess)H}{\nabla\psi}\dx.
\end{align}
The residual equation \eqref{eq:abstract-residual} is initially
available only for compactly supported smooth tests.
We therefore take the sequence $\psi_n\in C_c^\infty(\R^d)$
from \eqref{eq:poisson-approximation}.
For every $n$, the weak formulation gives
\begin{align}\label{eq:weak-pairing-approx}
 \int(\rho-\pi_\lambda)\gen\psi_n\dx
 =\int\ip{E-\drift\otimes H}{\nabla^2\psi_n}\dx
 -\int\ip{(\hess)H}{\nabla\psi_n}\dx.
\end{align}
We now justify passage to the limit in each integral of
\eqref{eq:weak-pairing-approx}.

For the curvature integral, positivity of $\hess$ and
Cauchy--Schwarz give, for every $v\in H^1(\pi_\lambda)$,
\begin{align}\label{eq:curvature-pairing-bound}
 \left|\int\ip{(\hess)H}{\nabla v}\dx\right|
 \le
 \left(\int\frac{H^\top(\hess)H}{\pi_\lambda}\dx\right)^{1/2}
 \left(\int\nabla v^\top(\hess)\nabla v
                         \dd\pi_\lambda\right)^{1/2}.
\end{align}
Using Cauchy--Schwarz for the density and matrix integrals,
and \eqref{eq:curvature-pairing-bound} with $v=\psi_n-\psi$
for the curvature integral, we obtain
\begin{align}\label{eq:pairing-limits}
 \left|\int(\rho-\pi_\lambda)
                 (\gen\psi_n-\gen\psi)\dx\right|
 &\le
 \Lp{\rho/\pi_\lambda-1}{2}
 \Lp{\gen\psi_n-\gen\psi}{2}
 \longrightarrow0,\notag\\[1mm]
 \left|\int\ip{E-\drift\otimes H}
                 {\nabla^2\psi_n-\nabla^2\psi}\dx\right|
 &\le
 \Lp{(E-\drift\otimes H)/\pi_\lambda}{2}
 \Lp{\nabla^2\psi_n-\nabla^2\psi}{2}
 \longrightarrow0,\notag\\[1mm]
 \left|\int\ip{(\hess)H}{\nabla\psi_n-\nabla\psi}\dx\right|
 &\le
 \sqrt{L_\lambda}
 \left(\int\frac{H^\top(\hess)H}{\pi_\lambda}\dx\right)^{1/2}
 \Lp{\nabla\psi_n-\nabla\psi}{2}
 \longrightarrow0.
\end{align}
The coefficient norms are finite by the hypotheses of the
proposition.
The three approximation errors tend to zero by
\eqref{eq:poisson-approximation}; the last estimate also uses
$\hess\preceq L_\lambda I$.
Thus \eqref{eq:pairing-limits} allows us to pass to the limit in
\eqref{eq:weak-pairing-approx}, proving \eqref{eq:weak-pairing}.

We next estimate the right-hand side of
\eqref{eq:weak-pairing}.
For the matrix integral, Cauchy--Schwarz uses the Frobenius
inner product; for the curvature integral, apply
\eqref{eq:curvature-pairing-bound} with $v=\psi$.
Consequently,
\begin{align}\label{eq:residual-energy-products}
 \left|\int\phi(\rho-\pi_\lambda)\dx\right|
 &\le
 \Lp{(E-\drift\otimes H)/\pi_\lambda}{2}
 \left(\int\norm{\nabla^2\psi}_F^2
                         \dd\pi_\lambda\right)^{1/2}\notag\\
 &\quad+
 \left(\int\frac{H^\top(\hess)H}{\pi_\lambda}\dx\right)^{1/2}
 \left(\int\nabla\psi^\top(\hess)\nabla\psi
                         \dd\pi_\lambda\right)^{1/2}.
\end{align}
From \eqref{eq:bochner-main},
\begin{equation}\label{eq:poisson-factor-bounds}
 \begin{split}
 \left(\int\norm{\nabla^2\psi}_F^2
                         \dd\pi_\lambda\right)^{1/2}
 &\le\frac1{\sqrt m}\Lp{\nabla\phi}{2},\\
 \left(\int\nabla\psi^\top(\hess)\nabla\psi
                         \dd\pi_\lambda\right)^{1/2}
 &\le\frac1{\sqrt m}\Lp{\nabla\phi}{2}.
 \end{split}
\end{equation}
Substituting \eqref{eq:poisson-factor-bounds} into
\eqref{eq:residual-energy-products} yields
\begin{align}\label{eq:rho-negative}
 \left|\int\phi(\rho-\pi_\lambda)\dx\right|
 \le\frac{\Lp{\nabla\phi}{2}}{\sqrt m}
 \left[
 \Lp{(E-\drift\otimes H)/\pi_\lambda}{2}
 +\left(\int\frac{H^\top(\hess)H}{\pi_\lambda}\dx\right)^{1/2}
 \right].
\end{align}

The first residual identity \eqref{eq:abstract-H}, tested
directly with $\phi$, gives
\begin{align}\label{eq:nu-rho-observable}
 \left|\int\phi(\nu-\rho)\dx\right|
 =\left|\int\ip{\nabla\phi}{H}\dx\right|
 \le\Lp{H/\pi_\lambda}{2}\Lp{\nabla\phi}{2}.
\end{align}
Since $\nu-\pi_\lambda=(\nu-\rho)+(\rho-\pi_\lambda)$,
the triangle inequality, \eqref{eq:nu-rho-observable}, and
\eqref{eq:rho-negative} imply
\begin{align}\label{eq:nu-observable-bound}
 \left|\int\phi(\nu-\pi_\lambda)\dx\right|
 &\le
 \left|\int\phi(\nu-\rho)\dx\right|
 +\left|\int\phi(\rho-\pi_\lambda)\dx\right|\notag\\
 &\le
 \Lp{\nabla\phi}{2}
 \biggl[
 \Lp{H/\pi_\lambda}{2}
 +\frac1{\sqrt m}
      \Lp{(E-\drift\otimes H)/\pi_\lambda}{2}\notag\\
 &\hspace{35mm}
 +\frac1{\sqrt m}
 \left(\int\frac{H^\top(\hess)H}{\pi_\lambda}\dx\right)^{1/2}
 \biggr].
\end{align}
Taking the supremum in \eqref{eq:nu-observable-bound} over the
test functions in \eqref{eq:Hminus-def} gives
\begin{align}\label{eq:nu-negative-bound}
 \Nminus{\nu-\pi_\lambda}
 &=
 \sup_{\substack{\phi\in C_c^\infty(\R^d)\\
                  \Lp{\nabla\phi}{2}\le1}}
 \left|\int\phi(\nu-\pi_\lambda)\dx\right|\notag\\
 &\le
 \Lp{H/\pi_\lambda}{2}
 +\frac1{\sqrt m}
      \Lp{(E-\drift\otimes H)/\pi_\lambda}{2}\notag\\
 &\quad+
 \frac1{\sqrt m}
 \left(\int\frac{H^\top(\hess)H}{\pi_\lambda}\dx\right)^{1/2}.
\end{align}
Finally, $\nu/\pi_\lambda\in L^2(\pi_\lambda)$ by assumption,
so \eqref{eq:transport-comparison} applies.
Using that comparison and then \eqref{eq:nu-negative-bound},
we conclude that
\begin{align*}
 \sqrt m\,W_2(\nu,\pi_\lambda)
 &\le2\sqrt m\,\Nminus{\nu-\pi_\lambda}\\
 &\le
 2\sqrt m\,\Lp{H/\pi_\lambda}{2}
 +2\Lp{(E-\drift\otimes H)/\pi_\lambda}{2}\\
 &\quad+
 2\left(\int\frac{H^\top(\hess)H}{\pi_\lambda}\dx\right)^{1/2}.
\end{align*}
This is \eqref{eq:abstract-bound}.
\end{proof}

\subsection{Proof of the fixed-smoothing theorem}\label{sec:bias-proof}
\begin{proof}[Proof of \cref{thm:bias}]
Assume \eqref{eq:bias-step}.
Since $\tau_f\ge md\ge m$ and $L_\lambda=L_f+\lambda^{-1}$, we have
\begin{equation}\label{eq:bias-basic-step}
 h(L_\lambda+R^2)
 \le h\left[\frac{L_f\tau_f}{m}+R^2+\frac1\lambda\right]
 \le\frac{c}{\ell_{h,\lambda}^2}
 \le c.
\end{equation}
Taking the universal constant in \eqref{eq:bias-step} sufficiently small gives $hL_\lambda\le1/2$.
By \cref{prop:invariant}, $Q_{\lambda,h}$ therefore has a unique invariant law $\inv\in\mathcal P_2(\R^d)$ with a strictly positive smooth density.

Let $r$ be as in \eqref{eq:r-local}, and take $\bar\rho,H,E$ from \eqref{eq:heat-interpolation}--\eqref{eq:E}. We will apply \cref{prop:residual} with $\nu=\inv$ and $\rho=\bar\rho$. We first verify its density assumptions and then estimate the three terms in \eqref{eq:abstract-bound}.

\emph{The density assumptions.}
\eqref{eq:bias-step} gives
\begin{align}\label{eq:bias-ratio-conditions}
 64h\frac{L_f\tau_f}{m}
   +8h\frac{L_fR}{\sqrt m}
   +64h\frac{GR}{m\lambda}
 \le
 64h\left[
       \frac{L_f\tau_f}{m}
       +\frac{L_fR}{\sqrt m}
       +\frac{GR}{m\lambda}
      \right]
 \le\frac{64c}{\ell_{h,\lambda}^2}.
\end{align}
Together with \eqref{eq:bias-basic-step}, these bounds imply the conditions of \cref{lem:ratios} with $\alpha=8$, after decreasing the universal constant in \eqref{eq:bias-step}. Evaluating \eqref{eq:ratios} at $t=0$ yields
\begin{equation}\label{eq:bias-invariant-ratio}
 \Lp{\inv/\pi_\lambda}{2}
 \le\Lp{\inv/\pi_\lambda}{8}
 \le e^{1/4}.
\end{equation}

For $0<t\le h$, apply \eqref{eq:op-zero} with output order $s=2$ and \eqref{eq:op-push} with output order $s=4$. Their time restrictions hold because
\[
 4tR^2\le4hR^2\le4c,
 \qquad
 4h(L_\lambda+R^2)\le4c
\]
by \eqref{eq:bias-basic-step}.
Thus, after the same choice of a sufficiently small constant,
\begin{align}\label{eq:bias-heat-ratio}
 \Lp{\rho_t/\pi_\lambda}{2}
 &=\Lp{\heat_t((T_h)_\#\inv)/\pi_\lambda}{2}\notag\\
 &\le C\Lp{(T_h)_\#\inv/\pi_\lambda}{4}\notag\\
 &\le C\Lp{\inv/\pi_\lambda}{8}
 \le C.
\end{align}
Jensen's inequality for the time average, followed by Tonelli's theorem and \eqref{eq:bias-heat-ratio}, gives
\begin{align}\label{eq:bias-average-ratio}
 \Lp{\bar\rho/\pi_\lambda}{2}^2
 &=\int
 \left(\frac1h\int_0^h
             \frac{\rho_t(x)}{\pi_\lambda(x)}\dd t\right)^2
 \dd\pi_\lambda(x)\notag\\
 &\le\frac1h\int_0^h
                 \Lp{\rho_t/\pi_\lambda}{2}^2\dd t
 <\infty.
\end{align}
The heat-kernel representation also gives
\[
 \int\norm y^2\bar\rho(y)\dd y
 =\int\norm{T_h(x)}^2\inv(x)\dx+dh<\infty.
\]
Indeed, the Gaussian increment at time $t$ contributes $2td$ to the second moment, and $T_h$ has at most linear growth. Hence $\bar\rho\in\mathcal P_2(\R^d)$.
Equations \eqref{eq:bias-invariant-ratio} and \eqref{eq:bias-average-ratio} verify the two $L^2$ density-ratio assumptions of \cref{prop:residual}.

\emph{The curvature term.}
The flux bounds \eqref{eq:H-bound} and \eqref{eq:A-bound} are supplied by \cref{lem:flux}.
It remains to estimate the integral involving $\hess$ in \eqref{eq:abstract-bound}.
Since $\hess$ is positive semidefinite, its quadratic form is bounded by its trace times the squared Euclidean norm.
Applying this pointwise and then using H\"older with conjugate exponents $r/2$ and $r/(r-2)$ gives
\begin{align}\label{eq:curvature-holder}
 \int\frac{H^\top(\hess)H}{\pi_\lambda}\dx
 &=\int
 \left(\frac H{\pi_\lambda}\right)^\top
 (\hess)
 \left(\frac H{\pi_\lambda}\right)\dd\pi_\lambda\notag\\
 &\le\int
       \norm{H/\pi_\lambda}^2\tr(\hess)\dd\pi_\lambda\notag\\
 &\le
 \Lp{H/\pi_\lambda}{r}^2
 \left(
   \int[\tr(\hess)]^{r/(r-2)}\dd\pi_\lambda
 \right)^{(r-2)/r}.
\end{align}
To estimate the last integral, use $0\le\tr(\hess)\le dL_\lambda$ almost everywhere and \eqref{eq:target-second}.
Since $r/(r-2)=1+2/(r-2)$,
\begin{align}\label{eq:curvature-trace-moment}
 \int[\tr(\hess)]^{r/(r-2)}\dd\pi_\lambda
 &=\int
 \tr(\hess)[\tr(\hess)]^{2/(r-2)}\dd\pi_\lambda\notag\\
 &\le(dL_\lambda)^{2/(r-2)}
       \int\tr(\hess)\dd\pi_\lambda\notag\\
 &\le(dL_\lambda)^{2/(r-2)}R^2.
\end{align}
Moreover, $R^2\ge md$ and the choice of $r$ in \eqref{eq:r-local} imply
\begin{align}\label{eq:curvature-factor}
 \left(\frac{dL_\lambda}{R^2}\right)^{1/r}
 \le\left(\frac{L_\lambda}{m}\right)^{1/r}
 =\exp\left\{
 \frac{\log(L_\lambda/m)}
      {4+2\log(1+L_\lambda/m)}
 \right\}
 \le\sqrt e.
\end{align}
Substituting \eqref{eq:curvature-trace-moment} into \eqref{eq:curvature-holder}, taking square roots, and using \eqref{eq:curvature-factor}, we obtain
\begin{align}\label{eq:curvature-bound}
 \left(\int\frac{H^\top(\hess)H}{\pi_\lambda}\dx\right)^{1/2}
 &\le
 R\left(\frac{dL_\lambda}{R^2}\right)^{1/r}
 \Lp{H/\pi_\lambda}{r}\notag\\
 &\le\sqrt e\,R\Lp{H/\pi_\lambda}{r}.
\end{align}
In particular, this integral is finite by \eqref{eq:H-bound}.

\emph{Application of the residual estimate.}
The identities \eqref{eq:flux-identities} and \eqref{eq:exact-residual} are precisely the weak identities required in \cref{prop:residual} for $\nu=\inv$ and $\rho=\bar\rho$. Its density assumptions have been verified above, and its weighted norms are finite by \eqref{eq:H-bound}, \eqref{eq:A-bound}, and \eqref{eq:curvature-bound}. Since $r\ge4$, $\Lp{H/\pi_\lambda}{2}\le\Lp{H/\pi_\lambda}{r}$. Applying \cref{prop:residual} therefore gives
\begin{align}\label{eq:bias-from-flux}
 \sqrt m\,W_2(\inv,\pi_\lambda)
 &\le
 2\sqrt m\,\Lp{H/\pi_\lambda}{2}
 +2\Lp{(E-\drift\otimes H)/\pi_\lambda}{2}\notag\\
 &\quad+
 2\left(\int\frac{H^\top(\hess)H}{\pi_\lambda}\dx\right)^{1/2}
 \notag\\
 &\le
 2(\sqrt m+\sqrt e\,R)\Lp{H/\pi_\lambda}{r}
 +2\Lp{(E-\drift\otimes H)/\pi_\lambda}{2}\notag\\
 &\le ChrR^2\log\frac e{mh}.
\end{align}
The last inequality uses \eqref{eq:H-bound}, \eqref{eq:A-bound}, and $\sqrt m\le R$.

Finally, the definitions \eqref{eq:r-local} and \eqref{eq:log-h} give
\begin{equation}\label{eq:bias-log-bounds}
 r\le4\ell_{h,\lambda},
 \qquad
 \log\frac e{mh}
 =1+\log\frac1{mh}
 \le\ell_{h,\lambda}.
\end{equation}
Combining \eqref{eq:bias-from-flux} and
\eqref{eq:bias-log-bounds} yields
\[
 \sqrt m\,W_2(\inv,\pi_\lambda)
 \le ChR^2\ell_{h,\lambda}^2.
\]
This proves \eqref{eq:bias}.
\end{proof}

\subsection{Proof of the end-to-end complexity theorem}\label{sec:complexity-proof}
\begin{proof}[Proof of \cref{thm:complexity}]
We allocate $\varepsilon/4$ to the Moreau bias, $\varepsilon/4$ to the invariant-measure bias, and $\varepsilon/2$ to convergence from the initial law.
The choice \eqref{eq:lambda-eps} gives the first budget by \eqref{eq:moreau-bias}.
We verify that \eqref{eq:h-eps} gives the second and that \eqref{eq:N-choice} gives the third.
Substitute $\lambda=\varepsilon/G^2$ into the bracket in \eqref{eq:bias-step}.
Because $R\ge G$ and $\varepsilon\le1$,
\begin{align}\label{eq:step-comparison}
 \frac{L_f\tau_f}{m}+\frac{L_fR}{\sqrt m}
 +\frac{GR}{m\lambda}+R^2+\frac1\lambda
 \le 2\left[\frac{L_f\tau_f}{m}+\frac{L_fR}{\sqrt m}
            +\frac{R^2+G^3R/m}{\varepsilon}\right].
\end{align}
Also $R/\sqrt m\le G/\sqrt m+\sqrt{\tau_f/m}$.
The reciprocal of $mh_\varepsilon$ in \eqref{eq:h-eps} is consequently bounded above by a fixed polynomial in the dimensionless ratios appearing in \eqref{eq:log-eps}, multiplied by $\ell_\varepsilon^2/c$.
It follows that
\begin{equation}\label{eq:logs-compare}
 \ell_{h_\varepsilon,\lambda_\varepsilon}
 \le C\ell_\varepsilon+\log(1/c).
\end{equation}
Choose the absolute $c$ small enough, using $c(1+\log(1/c))^2\to0$. Equations \eqref{eq:step-comparison} and \eqref{eq:logs-compare} show that \eqref{eq:bias-step} holds and that
\begin{equation}\label{eq:discrete-budget}
 \sqrt m\,W_2(\widehat\pi_{\lambda_\varepsilon,h_\varepsilon},
                    \pi_{\lambda_\varepsilon})\le\varepsilon/4.
\end{equation}
Equation \eqref{eq:moreau-bias} gives another $\varepsilon/4$.

Synchronous coupling of \eqref{eq:MYULA} gives
\begin{equation}\label{eq:contraction}
 W_2(\mu_0Q_{\lambda,h}^N,\inv)
 \le(1-mh)^NW_2(\mu_0,\inv).
\end{equation}
The triangle inequality and the two bias budgets imply
\[
 \sqrt m\,W_2(\mu_0,\widehat\pi_{\lambda_\varepsilon,h_\varepsilon})
 \le\sqrt m\,W_2(\mu_0,\pi)+\varepsilon/2.
\]
Thus \eqref{eq:N-choice} makes the contraction error at most $\varepsilon/2$.
The triangle inequality gives
\begin{align*}
 \sqrt m\,W_2(\mu_0Q_{\lambda_\varepsilon,h_\varepsilon}^N,\pi)
 \le{}&\sqrt m\,W_2(\mu_0Q_{\lambda_\varepsilon,h_\varepsilon}^N,
                     \widehat\pi_{\lambda_\varepsilon,h_\varepsilon})\\
 &+\sqrt m\,W_2(\widehat\pi_{\lambda_\varepsilon,h_\varepsilon},
                         \pi_{\lambda_\varepsilon})
 +\sqrt m\,W_2(\pi_{\lambda_\varepsilon},\pi)
 \le\varepsilon.
\end{align*}
This proves \eqref{eq:accuracy}; substituting \eqref{eq:h-eps} into \eqref{eq:N-choice} proves \eqref{eq:N-explicit}.

This completes the proof of \cref{thm:complexity}.
\end{proof}

\section{Stationary Flux Estimates}\label{sec:flux-proofs}
This section proves the bounds on $H$ and $E$ in \cref{lem:flux}; the fields are defined in \cref{sec:flux}.
We first bound the averaged drift $J$, then use a finite heat expansion to estimate the gradient of the invariant density $\inv$ and obtain the bound on $H$.
The bound on $E$ follows from the drift moment and pushforward estimates.

\subsection{The averaged drift flux}\label{sec:J-bound}
Assume \eqref{eq:bias-step}.
Since $\tau_f\ge m$, this condition implies
$hL_\lambda\le c$, so \cref{prop:invariant} supplies the
invariant density $\inv$.

By \eqref{eq:r-local} and \eqref{eq:log-h}, $r\le4\ell_{h,\lambda}$, and hence \eqref{eq:bias-step} implies
\begin{align*}
 h(32r)^2
   \left[\frac{L_f\tau_f}{m}+\frac{GR}{m\lambda}\right]
   +h(32r)\frac{L_fR}{\sqrt m}
 \le
 C h\ell_{h,\lambda}^2
   \left[
     \frac{L_f\tau_f}{m}
     +\frac{L_fR}{\sqrt m}
     +\frac{GR}{m\lambda}
   \right]
 \le Cc.
\end{align*}
Taking the universal constant in \eqref{eq:bias-step}
sufficiently small therefore ensures \eqref{eq:ratio-step}
with $\alpha=32r$.
By \eqref{eq:ratios} with $\alpha=32r$ and $t=0$,
\begin{equation}\label{eq:stationary-ratio-orders}
 \Lp{\inv/\pi_\lambda}{32r}
 \le e^{1/4}.
\end{equation}
For the next estimate, we use \eqref{eq:op-push} with output
order $s=8r$ and times $0\le t\le h$.
Its restriction is satisfied after decreasing the constant in
\eqref{eq:bias-step}, since
$L_\lambda\le L_f\tau_f/m+\lambda^{-1}$ and
$8rh(L_\lambda+R^2)\le Cc$.
Minkowski's inequality for the time average in \eqref{eq:J},
followed by \eqref{eq:op-push} and H\"older's inequality, gives
\begin{align}\label{eq:J-bound}
 \Lp{J/\pi_\lambda}{8r}
 &\le\frac1h\int_0^h
       \Lp{(T_u)_\#(\drift\,\inv)/\pi_\lambda}{8r}\dd u
       \notag\\
 &\le C\Lp{\drift\,\inv/\pi_\lambda}{16r}\notag\\
 &\le C\Lp{\drift}{32r}
         \Lp{\inv/\pi_\lambda}{32r}
 \le C\sqrt r\,R.
\end{align}
The last step uses \eqref{eq:target-high} at order $32r$
and \eqref{eq:stationary-ratio-orders}.

\subsection{A finite heat expansion}\label{sec:gradient}
We next derive the finite heat expansion.
By \eqref{eq:drift-balance}, the distributional divergence of
$J$ has the $L^1$ representative
\[
 \Div J=\frac{(T_h)_\#\inv-\inv}{h}.
\]
Since $J\in L^1(\R^d;\R^d)$, both $J$ and $\Div J$ can
therefore be convolved with the heat kernel.
For every $t>0$,
\begin{equation}\label{eq:heat-div-commutation}
 \heat_t(\Div J)=\Div(\heat_tJ).
\end{equation}
To verify this identity, take $\phi\in C_c^\infty(\R^d)$.
Symmetry of the heat kernel and
$\nabla\heat_t\phi=\heat_t\nabla\phi$ give
\begin{align*}
 \int\phi\,\heat_t(\Div J)\dx
 &=\int(\heat_t\phi)\Div J\dx\\
 &=-\int\ip{\nabla\heat_t\phi}{J}\dx\\
 &=-\int\ip{\nabla\phi}{\heat_tJ}\dx
 =\int\phi\,\Div(\heat_tJ)\dx.
\end{align*}
For the integration by parts with $\heat_t\phi$, first multiply
it by smooth cutoffs $\chi_k$ equal to one on the ball of
radius $k$, supported in the ball of radius $2k$, and satisfying
$\norm{\nabla\chi_k}_\infty\le C/k$.
The cutoff error is bounded by
\[
 \norm{\heat_t\phi}_\infty
 \norm{\nabla\chi_k}_\infty\norm J_{L^1}
 \longrightarrow0.
\]
The remaining terms converge by dominated convergence, since
$J,\Div J\in L^1$ and $\heat_t\phi,\nabla\heat_t\phi$ are bounded.
This proves \eqref{eq:heat-div-commutation}.

Applying $\heat_h$ to \eqref{eq:drift-balance}, and using
stationarity \eqref{eq:stationarity} and
\eqref{eq:heat-div-commutation}, yields
\begin{align}\label{eq:stationary-heat-step}
 \inv
 &=\heat_h((T_h)_\#\inv)\notag\\
 &=\heat_h\inv+h\heat_h(\Div J)\notag\\
 &=\heat_h\inv+h\Div\heat_hJ.
\end{align}
For $k=0,\ldots,n-1$, the semigroup property and
\eqref{eq:heat-div-commutation} then give
\[
 \heat_{kh}\inv-\heat_{(k+1)h}\inv
 =h\Div\heat_{(k+1)h}J.
\]
Summing these equalities gives the exact finite identity
\begin{equation}\label{eq:finite-heat}
 \inv=\heat_{nh}\inv
       +h\sum_{j=1}^n\Div\heat_{jh}J,
 \qquad n\ge1.
\end{equation}
For positive time, convolution of an $L^1$ function with the
Gaussian kernel is smooth: every spatial derivative can be
placed on the kernel, whose derivatives are bounded and
integrable.
Thus the right-hand side of \eqref{eq:finite-heat} is smooth.
The identity, initially obtained distributionally, consequently
holds pointwise for the smooth representative of $\inv$.

Choose
\[
 \tau:=\frac{c_1}{r^2R^2},
 \qquad n:=\left\lfloor\frac{\tau}{h}\right\rfloor,
\]
where $0<c_1\le1$ is a sufficiently small universal constant.
Since $r\le4\ell_{h,\lambda}$, \eqref{eq:bias-step} gives
$hr^2R^2\le16c$.
Decreasing the constant $c$ in \eqref{eq:bias-step} relative
to $c_1$ therefore ensures
\[
 h\le\frac{\tau}{2},
 \qquad \frac{\tau}{2}\le nh\le\tau.
\]
We will apply \eqref{eq:op-gradient} and \eqref{eq:op-second}
with output order $s=2r$, at times $t=nh$ and $t=jh$,
respectively.
Their common time restriction holds because, for $1\le j\le n$,
\begin{equation}\label{eq:heat-expansion-time-bound}
 (jh)(2r)^2R^2\le\tau(2r)^2R^2=4c_1.
\end{equation}

All terms in \eqref{eq:finite-heat} are smooth, and its sum is
finite.
We may therefore differentiate it to obtain
\begin{equation}\label{eq:finite-heat-gradient}
 \nabla\inv
 =\nabla\heat_{nh}\inv
  +h\sum_{j=1}^n\nabla\Div\heat_{jh}J.
\end{equation}

Taking the weighted $L^{2r}$ norm in
\eqref{eq:finite-heat-gradient}, and using
\eqref{eq:op-gradient}, \eqref{eq:op-second}, and
$nh\ge\tau/2$, gives
\begin{align}\label{eq:density-gradient}
 \Lp{\nabla\inv/\pi_\lambda}{2r}
 &\le
 \Lp{\nabla\heat_{nh}\inv/\pi_\lambda}{2r}
 +h\sum_{j=1}^n
       \Lp{\nabla\Div\heat_{jh}J/\pi_\lambda}{2r}
       \notag\\
 &\le C\tau^{-1/2}\Lp{\inv/\pi_\lambda}{4r}
       +Ch\sum_{j=1}^n
         \frac{\sqrt r}{jh}\Lp{J/\pi_\lambda}{8r}
       \notag\\
 &\le CrR\left(1+\sum_{j=1}^n\frac1j\right)
 \le CrR(1+\log n)
 \le CrR\log\frac e{mh}.
\end{align}
For the third inequality in \eqref{eq:density-gradient},
use \eqref{eq:J-bound},
$\tau^{-1/2}=rR/\sqrt{c_1}$, and
\eqref{eq:stationary-ratio-orders}.
The last two inequalities use
$\sum_{j=1}^n j^{-1}\le1+\log n$ and
$n\le\tau/h\le1/(mh)$, since $R^2\ge m$ and $c_1\le1$.

\subsection{Proof of the flux bounds}\label{sec:flux-bounds-proof}
\begin{proof}[Proof of \cref{lem:flux}]
To estimate $H$, we express $\nabla\rho_t$ in terms of
$\nabla\inv$ and $J$.
We first justify the commutation identity
\begin{equation}\label{eq:heat-gradient-commutation}
 \nabla(\heat_t\inv)=\heat_t(\nabla\inv),
 \qquad t>0.
\end{equation}
By \eqref{eq:density-gradient} and H\"older's inequality,
\[
 \int\norm{\nabla\inv}\dx
 =\int\norm{\nabla\inv/\pi_\lambda}\dd\pi_\lambda
 \le\Lp{\nabla\inv/\pi_\lambda}{2r}<\infty.
\]
Thus both $\inv$ and $\nabla\inv$ belong to $L^1(\R^d)$.
For each $i$, differentiation of the Gaussian kernel and
integration by parts give
\begin{align*}
 \partial_i(\heat_t\inv)(x)
 &=\int
   \partial_{x_i}
   \left[
    \frac{e^{-\norm{x-y}^2/(4t)}}{(4\pi t)^{d/2}}
   \right]\inv(y)\dd y\\
 &=-\int
   \partial_{y_i}
   \left[
    \frac{e^{-\norm{x-y}^2/(4t)}}{(4\pi t)^{d/2}}
   \right]\inv(y)\dd y\\
 &=\int
   \frac{e^{-\norm{x-y}^2/(4t)}}{(4\pi t)^{d/2}}
   \partial_i\inv(y)\dd y
 =\heat_t(\partial_i\inv)(x).
\end{align*}
For fixed $t>0$, the kernel and its first derivatives are
bounded.
Differentiation under the integral therefore follows from
$\inv\in L^1$.
The integration by parts is justified by the cutoff argument
used for \eqref{eq:heat-div-commutation}, now using
$\inv,\nabla\inv\in L^1$.
This proves \eqref{eq:heat-gradient-commutation}.

For $0<t\le h$, the definition of $\rho_t$,
\eqref{eq:drift-balance}, and
\eqref{eq:heat-div-commutation} give
\[
 \rho_t
 =\heat_t\bigl(\inv+h\Div J\bigr)
 =\heat_t\inv+h\Div\heat_tJ.
\]
Differentiating this identity between smooth functions and
using \eqref{eq:heat-gradient-commutation}, we obtain
\begin{align}\label{eq:heat-path-gradient}
 \nabla\rho_t
 &=\nabla\heat_t\inv+h\nabla\Div\heat_tJ\notag\\
 &=\heat_t(\nabla\inv)+h\nabla\Div\heat_tJ,
 \qquad 0<t\le h.
\end{align}
Apply \eqref{eq:op-zero} to the vector field $\nabla\inv$
and \eqref{eq:op-second} to $J$, both with output order $s=r$.
The time restrictions hold because
$tr^2R^2\le hr^2R^2\le c_1/2$, as verified in
\cref{sec:gradient}.
Using \eqref{eq:J-bound}, \eqref{eq:density-gradient}, and
$\Lp{J/\pi_\lambda}{4r}\le\Lp{J/\pi_\lambda}{8r}$, we get
\begin{align*}
 \Lp{\nabla\rho_t/\pi_\lambda}{r}
 &\le
 C\Lp{\nabla\inv/\pi_\lambda}{2r}
 +\frac{Ch\sqrt r}{t}\Lp{J/\pi_\lambda}{4r}\\
 &\le CrR\log\frac e{mh}+\frac{ChrR}{t},
 \qquad 0<t\le h.
\end{align*}
The weight $t$ in \eqref{eq:H} cancels the apparent singularity at zero, proving \eqref{eq:H-bound}.
For \eqref{eq:E-bound}, apply \eqref{eq:op-push} with output
order $s=2$ and time $u\in[0,h]$.
Its time restriction is weaker than the one already verified
with output order $8r$ before \eqref{eq:J-bound}.
Using \eqref{eq:outer} and H\"older's inequality, we obtain
\begin{align}\label{eq:matrix-push-bound}
 &\Lp{(T_u)_\#((\drift\otimes\drift)\inv)/\pi_\lambda}{2}
 \notag\\
 &\qquad\le
 C\Lp{(\drift\otimes\drift)\inv/\pi_\lambda}{4}\notag\\
 &\qquad\le
 C\Lp{\drift}{16}^2\Lp{\inv/\pi_\lambda}{8}
 \le CR^2,
 \qquad 0\le u\le h.
\end{align}
The last inequality follows from \eqref{eq:target-high} at order $16$ and \eqref{eq:stationary-ratio-orders}.
Minkowski's inequality in \eqref{eq:E}, followed by
\eqref{eq:matrix-push-bound}, now gives
\begin{align*}
 \Lp{E/\pi_\lambda}{2}
 &\le\frac1h\int_0^h(h-u)
   \Lp{(T_u)_\#((\drift\otimes\drift)\inv)/\pi_\lambda}{2}
   \dd u\\
 &\le\frac{CR^2}{h}\int_0^h(h-u)\dd u
 \le ChR^2.
\end{align*}
This proves \eqref{eq:E-bound}.
Finally, $r\ge4$ and \eqref{eq:target-high} imply
\[
 \Lp{\drift\otimes H/\pi_\lambda}{2}
 \le\Lp{\drift}{4}\Lp{H/\pi_\lambda}{4}
 \le CR\Lp{H/\pi_\lambda}{r}.
\]
Combining this with \eqref{eq:E-bound} proves \eqref{eq:A-bound}.
This proves \eqref{eq:H-bound}, \eqref{eq:E-bound}, and \eqref{eq:A-bound}, completing the proof of \cref{lem:flux}.
\end{proof}

\section{Conclusion}\label{sec:conclusion}
We established a $\widetilde O(h)$ invariant-measure bias bound
relative to the smoothed target under an explicit step-size condition,
with only logarithmic dependence on $\lambda^{-1}$ in the error
coefficient. Combining this bound with Moreau approximation and
Wasserstein contraction gives $\widetilde O(\varepsilon^{-1})$
iterations to achieve $\sqrt m\,W_2(\mu_N,\pi)\le\varepsilon$
for fixed model parameters and initialization. These guarantees
require no additional higher-order smoothness assumptions and
leave the classical MYULA algorithm unchanged.

\appendix
\section{Target Moments and Normalized Operator Estimates}\label[appendix]{app:operators}
\subsection{Target drift moments}\label{app:moments}
\begin{proof}[Proof of \cref{lem:moments}]
The identity and bound in \eqref{eq:target-second} follow from \citet[Lemma~7.1 and Proposition~A.4]{XinZhang}.
We prove the higher-moment and exponential estimates.

Since $\pi_\lambda$ has Gaussian tails and $\nabla f$ has at most linear growth, all moments of $\nabla f$ are finite.
Integration by parts with polynomially growing vector fields is justified by spatial cutoffs; see \cref{app:approximation}.

For $s\ge2$, apply integration by parts to the $C^1$ test field $\norm{\nabla f}^{s-2}\nabla f$.
Using $\drift=\nabla f+\nabla g_\lambda$, we obtain
\begin{align}\label{eq:moment-ibp}
 \int\norm{\nabla f}^s\dd\pi_\lambda
 &=\int\norm{\nabla f}^{s-2}\Delta f\dd\pi_\lambda
   \notag\\
 &\quad +(s-2)\int\norm{\nabla f}^{s-4}
       (\nabla f)^\top\nabla^2f\,\nabla f\dd\pi_\lambda
   \notag\\
 &\quad -\int\norm{\nabla f}^{s-2}
       \ip{\nabla f}{\nabla g_\lambda}\dd\pi_\lambda
   \notag\\
 &\le [\tau_f+(s-2)\min(L_f,\tau_f)]
       \int\norm{\nabla f}^{s-2}\dd\pi_\lambda
   \notag\\
 &\quad +G\int\norm{\nabla f}^{s-1}\dd\pi_\lambda.
\end{align}
The inequality uses $\Delta f\le\tau_f$, $0\preceq\nabla^2f\preceq\min(L_f,\tau_f)I$, and $\norm{\nabla g_\lambda}\le G$.
The term with coefficient $s-2$ is zero when $s=2$; for $s>2$, its integrand extends continuously by zero at $\nabla f=0$.

H\"older's inequality gives
\[
 \int\norm{\nabla f}^{s-2}\dd\pi_\lambda
 \le \Lp{\nabla f}{s}^{s-2},
 \qquad
 \int\norm{\nabla f}^{s-1}\dd\pi_\lambda
 \le \Lp{\nabla f}{s}^{s-1}.
\]
If $\Lp{\nabla f}{s}>0$, substituting these bounds into \eqref{eq:moment-ibp} and dividing by $\Lp{\nabla f}{s}^{s-2}$ yields
\[
 \Lp{\nabla f}{s}^2
 \le \tau_f+(s-2)\min(L_f,\tau_f)
      +G\Lp{\nabla f}{s}.
\]
Solving this quadratic inequality gives
\begin{equation}\label{eq:v-moment}
 \Lp{\nabla f}{s}
 \le G+\sqrt{\tau_f+(s-2)\min(L_f,\tau_f)}.
\end{equation}
Since $\norm{\nabla g_\lambda}\le G$, $G\le R$, and
$\tau_f\le R^2$, it follows that
\[
 \Lp{\drift}{s}
 \le \Lp{\nabla f}{s}+G
 \le 2G+\sqrt{(s-1)\tau_f}
 \le 3\sqrt{s}\,R,
\]
which proves \eqref{eq:target-high}.

Finally, \eqref{eq:target-high} gives $\int\norm{\drift}^{2k}\dd\pi_\lambda\le(18kR^2)^k$. Expanding the exponential and using $k!\ge(k/e)^k$, we obtain
\[
 \int e^{\norm{\drift}^2/(256R^2)}\dd\pi_\lambda
 \le 1+\sum_{k\ge1}(18e/256)^k
 \le 2.
\]
This proves \eqref{eq:target-exp}.
\end{proof}

\subsection{A Gibbs change of variables for the heat operator}\label{app:heat}
This and the following two subsections prove \cref{lem:operators}.
We use the exponential moment bound \eqref{eq:target-exp}:
\begin{equation}\label{eq:generic-exp}
 \int e^{\norm{\drift}^2/(256R^2)}\dd\pi_\lambda\le2.
\end{equation}
Let $X\sim\pi_\lambda$ and $Z_t\sim N(0,2tI_d)$ be independent, and write $\varphi_{2tI_d}$ for the density of $Z_t$.
The Gibbs density satisfies
\[
 \pi_\lambda(x)
 =\pi_\lambda(y)e^{U_\lambda(y)-U_\lambda(x)}.
\]
For every $a\ge0$, independence and the substitution $y=x+z$ therefore give
\begin{align}\label{eq:Gibbs-swap}
 &\E e^{a[U_\lambda(X+Z_t)-U_\lambda(X)]}\notag\\
 &=\iint\pi_\lambda(x)\varphi_{2tI_d}(y-x)
       e^{a[U_\lambda(y)-U_\lambda(x)]}\dx\dd y
       \notag\\
 &=\iint\pi_\lambda(y)\varphi_{2tI_d}(y-x)
       e^{(a+1)[U_\lambda(y)-U_\lambda(x)]}\dx\dd y
       \notag\\
 &\le\int e^{(a+1)^2t\norm{\drift(y)}^2}
       \pi_\lambda(y)\dd y.
\end{align}
For the last inequality, convexity gives
\[
 U_\lambda(y)-U_\lambda(x)
 \le\ip{\drift(y)}{y-x},
\]
and the Gaussian integral in $x$ is
\[
 \int\varphi_{2tI_d}(y-x)
      e^{(a+1)\ip{\drift(y)}{y-x}}\dx
 =e^{(a+1)^2t\norm{\drift(y)}^2}.
\]
Thus, by \eqref{eq:generic-exp}, the expectation in \eqref{eq:Gibbs-swap} is at most $2$ whenever $(a+1)^2tR^2\le1/256$.

Write $w=F/\pi_\lambda$.
Since the heat kernel has total mass one, Jensen's inequality gives
\[
 \norm{\heat_tF(y)}^s
 =\left\lVert
    \int\varphi_{2tI_d}(y-x)F(x)\dx
   \right\rVert^s
 \le\int\varphi_{2tI_d}(y-x)\norm{F(x)}^s\dx.
\]
Multiply by $\pi_\lambda(y)^{1-s}$ and integrate in $y$.
Using Tonelli's theorem and $F(x)=\pi_\lambda(x)w(x)$, we obtain
\begin{align}\label{eq:heat-zero-proof}
 \Lp{\heat_tF/\pi_\lambda}{s}^s
 &=\int\norm{\heat_tF(y)}^s
          \pi_\lambda(y)^{1-s}\dd y
          \notag\\
 &\le\iint\varphi_{2tI_d}(y-x)
          \norm{F(x)}^s\pi_\lambda(y)^{1-s}\dx\dd y
          \notag\\
 &=\iint\pi_\lambda(x)\varphi_{2tI_d}(y-x)
          \norm{w(x)}^s
          \left(\frac{\pi_\lambda(x)}{\pi_\lambda(y)}
          \right)^{s-1}\dx\dd y
          \notag\\
 &=\E\left[
       \norm{w(X)}^s
       e^{(s-1)[U_\lambda(X+Z_t)-U_\lambda(X)]}
      \right].
\end{align}
Cauchy--Schwarz then yields
\[
 \Lp{\heat_tF/\pi_\lambda}{s}^s
 \le\Lp{w}{2s}^s
    \left(
      \E e^{2(s-1)[U_\lambda(X+Z_t)-U_\lambda(X)]}
    \right)^{1/2}.
\]
Apply \eqref{eq:Gibbs-swap} with $a=2(s-1)$. Together with \eqref{eq:generic-exp}, this bounds the expectation by $2$ whenever $(2s-1)^2tR^2\le1/256$. Since $2s-1\le2s$, this condition follows from $ts^2R^2\le c$ for a sufficiently small universal constant $c$. Taking $s$th roots proves \eqref{eq:op-zero}.
The argument applies to scalar, vector, and matrix densities, using the absolute value, Euclidean norm, and Frobenius norm, respectively.

\subsection{Gaussian derivatives without an extra dimension factor}\label{app:heat-derivatives}
Let $F$ be scalar with $\Lp{F/\pi_\lambda}{2s}<\infty$.
Since $\pi_\lambda$ is a bounded probability density,
\[
 \int |F|\dx
 \le \Lp{F/\pi_\lambda}{2s},
 \qquad
 \int |F|^2\dx
 \le \norm{\pi_\lambda}_\infty
      \Lp{F/\pi_\lambda}{2s}^2.
\]
For fixed $t>0$, every derivative of the Gaussian kernel is bounded. Thus differentiation under the integral is justified by $F\in L^1(\R^d)$, and $\heat_tF$ is smooth.

For a unit vector $e$, differentiating the kernel gives
\begin{align*}
 e\cdot\nabla\heat_tF(x)
 &=\int e\cdot\nabla_x\varphi_{2tI_d}(x-y)F(y)\dd y\\
 &=-\int\frac{e\cdot(x-y)}{2t}
       \varphi_{2tI_d}(x-y)F(y)\dd y\\
 &=-\E\left[\frac{e\cdot Z_t}{2t}F(x-Z_t)\right].
\end{align*}
Using $\E(e\cdot Z_t)^2=2t$, Cauchy--Schwarz yields
\begin{align*}
 \norm{\nabla\heat_tF(x)}
 &=\sup_{\norm e=1}
   \left|\E\left[
     \frac{e\cdot Z_t}{2t}F(x-Z_t)
   \right]\right|\\
 &\le (2t)^{-1/2}
       \bigl(\heat_t|F|^2(x)\bigr)^{1/2}.
\end{align*}
For $s\ge2$, Jensen then gives $\norm{\nabla\heat_tF}^s\le(2t)^{-s/2}\heat_t|F|^s$. Multiply by $\pi_\lambda^{1-s}$, integrate, and use the same endpoint exchange as in \eqref{eq:heat-zero-proof}. This proves the scalar gradient estimate \eqref{eq:op-gradient}.

To estimate the divergence of a vector field, let $s'=s/(s-1)$ and $a=2s/(2s-1)$ within this calculation.
Initially take $\phi\in C_c^\infty(\R^d)$ with $\Lp{\phi}{s'}\le1$. Integration by parts against the compactly supported $\phi$ and symmetry of the Gaussian kernel give
\begin{align*}
 \int\phi\,\Div\heat_tF\dx
 &=-\int\ip{\nabla\phi}{\heat_tF}\dx\\
 &=-\int\ip{\heat_t(\nabla\phi)}{F}\dx\\
 &=-\int\ip{\nabla\heat_t\phi}{F/\pi_\lambda}
             \dd\pi_\lambda.
\end{align*}
The exchange of integrals is justified by $F\in L^1$ and the boundedness of $\nabla\phi$.
The last equality also uses $\heat_t(\nabla\phi)=\nabla\heat_t\phi$.
Since $1/(2s)+1/a=1$, H\"older's inequality with respect to $\pi_\lambda$ yields
\begin{equation}\label{eq:div-dual}
 \left|\int\phi\,\Div\heat_tF\dx\right|
 \le\Lp{F/\pi_\lambda}{2s}
       \Lp{\nabla\heat_t\phi}{a}.
\end{equation}
For every unit vector $e$, the scalar $e\cdot Z_t$ has distribution $N(0,2t)$, and hence
\[
 \left(\E|e\cdot Z_t|^{2s}\right)^{1/(2s)}
 \le C\sqrt{st}.
\]
Since $1/(2s)+1/a=1$, the derivative formula above and H\"older's inequality give
\begin{align*}
 \norm{\nabla\heat_t\phi(x)}
 &=\sup_{\norm e=1}
   \left|\E\left[
     \frac{e\cdot Z_t}{2t}\phi(x-Z_t)
   \right]\right|\\
 &\le \frac1{2t}\sup_{\norm e=1}
   \left(\E|e\cdot Z_t|^{2s}\right)^{1/(2s)}
   \left(\E|\phi(x-Z_t)|^a\right)^{1/a}\\
 &\le C\sqrt{s/t}\,
       \bigl(\heat_t|\phi|^a(x)\bigr)^{1/a}.
\end{align*}

To integrate this bound, use symmetry of the heat kernel and H\"older with conjugate exponents $s'/a=(2s-1)/(2s-2)$ and $2s-1$:
\begin{align*}
 \int\heat_t|\phi|^a\dd\pi_\lambda
 &=\int|\phi|^a
        \frac{\heat_t\pi_\lambda}{\pi_\lambda}
        \dd\pi_\lambda\\
 &\le \Lp{\phi}{s'}^a
       \Lp{\heat_t\pi_\lambda/\pi_\lambda}{2s-1}.
\end{align*}
Consequently,
\[
 \Lp{\nabla\heat_t\phi}{a}
 \le C\sqrt{s/t}\,\Lp{\phi}{s'}
       \Lp{\heat_t\pi_\lambda/\pi_\lambda}{2s-1}^{1/a}
 \le C\sqrt{s/t}.
\]
The last inequality uses $\Lp{\phi}{s'}\le1$ and \eqref{eq:op-zero} with input $\pi_\lambda$ and order $2s-1$.
Since $2s-1\le2s$, its time restriction follows from $ts^2R^2\le c$ after decreasing the universal constant $c$.
Duality and density of $C_c^\infty(\R^d)$ in $L^{s'}(\pi_\lambda)$ therefore yield
\begin{equation}\label{eq:div-heat}
 \Lp{\Div\heat_tF/\pi_\lambda}{s}
 \le C\sqrt{s/t}\Lp{F/\pi_\lambda}{2s}.
\end{equation}
Finally, $\nabla\Div\heat_t=\nabla\heat_{t/2}\circ\Div\heat_{t/2}$. Apply the scalar gradient bound and then \eqref{eq:div-heat}, with order $2s$ in the latter. The result is \eqref{eq:op-second}.

\subsection{Pushforward estimates by an integrated Jacobian}\label{app:push}
Recall that $0\preceq\hess\preceq L_\lambda I$ almost everywhere.
For $u\ge0$, the expansion map $S_u(x)=x+u\drift(x)$ is globally bi-Lipschitz.
Indeed, $S_u^{-1}y$ is the unique minimizer of the strongly convex function
$x\mapsto\frac12\norm{x-y}^2+uU_\lambda(x)$; monotonicity of $\drift$ gives a Lipschitz inverse.
For $u>0$, set $y=S_u(x)=x+u\drift(x)$.
Monotonicity of $\drift$ gives
\[
 \norm{\drift(x)}^2
 \le\ip{\drift(y)}{\drift(x)}
 \le\norm{\drift(y)}\norm{\drift(x)},
\]
so $\norm{\drift(x)}\le\norm{\drift(y)}$.
Convexity therefore implies
\[
 U_\lambda(y)-U_\lambda(x)
 \le\ip{\drift(y)}{y-x}
 =u\ip{\drift(y)}{\drift(x)}
 \le u\norm{\drift(y)}^2.
\]
Changing variables $y=S_u(x)$ and using the Gibbs density ratio,
we obtain
\begin{align}\label{eq:expanding-map}
 \int\det(I+u\hess(x))\pi_\lambda(x)\dx
 &=\int\pi_\lambda(S_u^{-1}y)\dd y\notag\\
 &=\int e^{U_\lambda(y)-U_\lambda(S_u^{-1}y)}
          \dd\pi_\lambda(y)\notag\\
 &\le\int e^{u\norm{\drift(y)}^2}\dd\pi_\lambda(y).
\end{align}
The case $u=0$ is immediate.
By \eqref{eq:generic-exp}, the last integral is at most $2$
whenever $uR^2\le1/256$.

Under $st(L_\lambda+R^2)\le c$, one has $tL_\lambda<1$, so the contraction map $T_t=I-t\drift$ is bi-Lipschitz by \cref{prop:pushforward}.
Write $J_t(x)=\det(I-t\hess(x))>0$ almost everywhere.
Using \eqref{eq:push-density} and changing variables $y=T_tx$, we obtain
\begin{align}\label{eq:push-norm}
 \Lp{(T_t)_\#F/\pi_\lambda}{s}^s
 &=\int
   \norm{\frac{F(x)}{J_t(x)}}^s
   \pi_\lambda(T_tx)^{1-s}J_t(x)\dx\notag\\
 &=\int
   \norm{\frac{F(x)}{\pi_\lambda(x)}}^s
   e^{(s-1)[U_\lambda(T_tx)-U_\lambda(x)]}
   J_t(x)^{1-s}\dd\pi_\lambda(x).
\end{align}
Since $tL_\lambda<1$, the smooth descent inequality gives
\[
 U_\lambda(T_tx)
 \le U_\lambda(x)
      -t\left(1-\frac{tL_\lambda}{2}\right)
       \norm{\drift(x)}^2
 \le U_\lambda(x).
\]
Thus the exponential factor in \eqref{eq:push-norm} is at most one, and Cauchy--Schwarz yields
\[
 \Lp{(T_t)_\#F/\pi_\lambda}{s}^s
 \le \Lp{F/\pi_\lambda}{2s}^s
      \left(\int J_t^{-2(s-1)}\dd\pi_\lambda\right)^{1/2}.
\]
For $k\ge1$ and $0\le z\le1/(4k)$, Bernoulli's inequality gives $(1-z)^k\ge1-kz$, and hence
\[
 (1-z)^{-k}
 \le\frac1{1-kz}
 =1+\frac{kz}{1-kz}
 \le1+2kz.
\]
Set $k=2(s-1)$.
For almost every $x$, let $\mu_1,\ldots,\mu_d$ be the eigenvalues of $\hess(x)$.
If $ktL_\lambda\le1/4$, applying the preceding inequality to $z=t\mu_i$ gives
\[
 J_t(x)^{-k}
 =\prod_{i=1}^d(1-t\mu_i)^{-k}
 \le\prod_{i=1}^d(1+2kt\mu_i)
 =\det(I+2kt\hess(x)).
\]
Since $k\le2s$, the condition $st(L_\lambda+R^2)\le c$ ensures both $ktL_\lambda\le1/4$ and $2ktR^2\le1/256$ when $c$ is sufficiently small.
Applying \eqref{eq:expanding-map} with $u=2kt$ therefore yields
\[
 \int J_t^{-k}\dd\pi_\lambda
 \le\int\det(I+2kt\hess)\dd\pi_\lambda
 \le2.
\]
Combining this with the preceding Cauchy--Schwarz bound gives
\[
 \Lp{(T_t)_\#F/\pi_\lambda}{s}
 \le2^{1/(2s)}\Lp{F/\pi_\lambda}{2s},
\]
which proves \eqref{eq:op-push}.
The changes of variables above are justified by the area formula for bi-Lipschitz maps; see \cref{prop:pushforward}.
This completes the proof of \cref{lem:operators}.

\section{Finite-Order Stationary Density Ratios}\label[appendix]{app:renyi}
This appendix proves the stationary density-ratio estimate in \cref{lem:ratios}.
The goal is to control
\[
 \sup_{0\le t\le h}
 \Lp{\nu_t/\pi_\lambda}{\alpha}
\]
under the step restrictions \eqref{eq:ratio-step}.

The proof has three steps.
In \cref{app:renyi-def}, we establish the weighted drift and curvature moments needed for the calculation.
In \cref{app:renyi-ibp}, these moments give the drift-increment bound \eqref{eq:split-drift}.
In \cref{app:renyi-close}, we insert that bound into the R\'enyi dissipation identity and use the stationary endpoint condition to obtain \eqref{eq:Renyi-final}.

The integrability needed for differentiation and integration by parts is stated in \cref{lem:prior-finiteness} and proved independently in \cref{app:finiteness}.
The regularization and limiting arguments are supplied in \cref{app:regularity}.
The interpolation method follows the standard R\'enyi analysis of Langevin algorithms \citep{ChewiEtAl}.

\subsection{Weighted energy for the density ratio}\label{app:renyi-def}
Fix $\alpha\ge2$ and $\lambda,h$ satisfying the restrictions of \cref{lem:ratios}.
We first work with an auxiliary smooth regularization.
Let $\varrho\in C_c^\infty(\R^d)$ be nonnegative and symmetric, with $\int\varrho\dx=1$, and set
\[
 \varrho_\delta(x):=\delta^{-d}\varrho(x/\delta),\qquad
 f^{(\delta)}:=f*\varrho_\delta,\qquad
 g_\lambda^{(\delta)}:=g_\lambda*\varrho_\delta,
 \qquad \delta>0.
\]
Define
\[
 U_\lambda^{(\delta)}
 :=f^{(\delta)}+g_\lambda^{(\delta)},\qquad
 \pi_\lambda^{(\delta)}
 \propto e^{-U_\lambda^{(\delta)}},\qquad
 T_t^{(\delta)}(x)
 :=x-t\nabla U_\lambda^{(\delta)}(x).
\]
The parameter $\delta$ is an auxiliary convolution scale, distinct from the Moreau parameter $\lambda$. It will tend to zero with $\lambda$ and $h$ fixed.

Let $Q_{\lambda,h}^{(\delta)}$ be the Euler kernel associated with $U_\lambda^{(\delta)}$, and let
$\widehat\pi_{\lambda,h}^{(\delta)}$ be its invariant density.
The corresponding stationary interpolation is
\[
 \nu_t^{(\delta)}
 :=\heat_t\bigl((T_t^{(\delta)})_\#
                 \widehat\pi_{\lambda,h}^{(\delta)}\bigr),
 \qquad
 \nu_0^{(\delta)}=\nu_h^{(\delta)}
 =\widehat\pi_{\lambda,h}^{(\delta)}.
\]
The structural bounds needed below hold with the same parameters $m,L_f,\tau_f,G$, and $\lambda$; their preservation is verified in \cref{app:approximation}.

For the calculation below, fix $\delta>0$ and omit the superscript $(\delta)$ from the potentials and all associated densities, kernels, and maps. In particular, $\inv$ below denotes the invariant density of the regularized Euler kernel.
We first establish the integrability needed for the calculation at this fixed $\delta$.
We then obtain a quantitative estimate independent of $\delta$, and pass to the original potential in \cref{app:approximation-limit}.

Let $X\sim\inv$ and $Z\sim N(0,I_d)$ be independent, and, for $0\le t\le h$, let
\[
 Y=X-t\drift(X)+\sqrt{2t}\,Z.
\]
The law of $Y$ is $\nu_t$.
In this appendix only, suppressing the time argument, define
\begin{gather*}
 u:=\frac{\nu_t}{\pi_\lambda},\qquad s:=\nabla\log u,\\
 e_t(y):=\drift(y)-\E[\drift(X)\mid Y=y].
\end{gather*}
The density ratio $u$ is the quantity whose $\alpha$-moment we seek to control.
Its relative score $s=\nabla\log u$ appears in the
dissipation of that moment.
The field $e_t$ measures the discrepancy between the
current drift $\drift(y)$ and the drift frozen at the
starting point $X$.
The mixed term involving $e_t$ and $s$ will be estimated
in \cref{app:renyi-close} using the drift-increment bound
proved in \cref{app:renyi-ibp}.

Before carrying out these calculations, we record the integrability statement that makes them legitimate.

\begin{lemma}[Integrability before dissipation]
\label{lem:prior-finiteness}
Consider the regularized interpolation defined above, with $\delta>0$ fixed.
Under the restrictions of \cref{lem:ratios}, $\nu_t$ is positive, locally $C^{1,2}$ in $(t,y)$ for $0<t<h$, and pointwise continuous at both time endpoints.
Moreover,
\[
 \sup_{0\le t\le h}\int
 \bigl[u^\alpha(1+\norm s^2+\norm{e_t}^2)
       +u^{3\alpha/2}\bigr]\dd\pi_\lambda<\infty.
\]
For every integer $k\ge0$, one also has
\[
 \sup_{0\le t\le h}\E\!\left[
 u(Y)^{\alpha-1}(1+\norm X+\norm Y)^k
 \right]<\infty.
\]
These bounds are uniform in $t\in[0,h]$.
Their finite values may depend on the fixed parameters and on $\delta$; no uniform bound as $\delta\downarrow0$ is asserted in this lemma.
\end{lemma}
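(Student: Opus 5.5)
The plan is to exploit the fact that, at fixed $\delta>0$, the regularized potential $U_\lambda=f^{(\delta)}+g_\lambda^{(\delta)}$ is $C^\infty$. It has all derivatives bounded beyond first order, its gradient is Lipschitz with constant $L_\lambda$, and it satisfies $U_\lambda(x)\ge U_\lambda(0)+\ip{\drift(0)}{x}+\tfrac m2\norm x^2$. Crude, $\delta$-dependent Gaussian-type bounds then suffice. We do not need any uniformity in $\delta$, so the goal is only to show that every quantity is dominated by a Gaussian with a controlled quadratic exponent.

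\emph{Tails of $\inv$.} First I would bound the tails of the regularized invariant density $\inv$. Synchronous coupling with contraction factor $1-mh$ shows that the Euler chain contracts in $W_2$. More usefully, the map $T_h$ is a contraction about its fixed point. Hence $\E e^{\beta\norm{X}^2}<\infty$ for small $\beta>0$: a one-step Lyapunov computation gives $\E e^{\beta\norm{T_hx+\sqrt{2h}\xi}^2}\le C e^{\beta(1-mh)^2\norm x^2(1+O(\beta h))}$. For small $\beta$ this contracts, which yields a Gaussian moment under stationarity. I would also derive a pointwise upper bound $\inv(y)\le C e^{-\beta'\norm y^2}$ from $\inv=\heat_h((T_h)_\#\inv)$. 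The pushforward has a bounded Jacobian factor $(1-hL_\lambda)^{-d}$, and convolution with the Gaussian kernel turns the $L^1$ Gaussian-moment bound into a pointwise Gaussian bound.

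\emph{Bounds on $\nu_t$.} The same representation gives $\nu_t=\heat_t((T_t)_\#\inv)$. For $0<t<h$, $\nu_t$ is the convolution of a positive measure with a Gaussian, so it is positive and smooth in $y$. Joint $C^{1,2}$ regularity in $(t,y)$ follows from the explicit formula $\nu_t(y)=\int\varphi_{2tI_d}(y-T_tx)\inv(x)\dx$: differentiate under the integral, with domination coming from the Gaussian tails of $\inv$ and the linear growth of $\drift$ and its $t$-derivative. For continuity at $t=0$ I would use continuity of $\inv$ (it is smooth) together with $T_t\to I$; the endpoint $t=h$ follows directly from the formula.

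\emph{Upper bound on $u$.} The key quantitative input is a pointwise bound $u(y)=\nu_t(y)/\pi_\lambda(y)\le C e^{\gamma\norm y^2}$ with $\gamma$ small. Since $\pi_\lambda(y)\ge c\,e^{-L_\lambda\norm y^2/2-C\norm y}$, the ratio could be exponentially large with exponent of order $L_\lambda$. This makes $u^{3\alpha/2}\pi_\lambda$ possibly nonintegrable, and that is the main obstacle. To handle it I would compare $\nu_t$ with $\pi_\lambda$ through the exact Gibbs form, using the regularity of the kernel. The restrictions \eqref{eq:ratio-step}, with $\alpha$ large and $hL_\lambda\le c$, are what keep the exponents compatible.

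The estimate I would aim for is a lower-tail comparison: $\inv(y)\le C e^{-(1-\theta)U_\lambda(y)}$ with $\theta\lesssim \alpha^{-1}$-small. Then $u^{3\alpha/2}\pi_\lambda\lesssim e^{-(1-\tfrac{3\alpha}{2}\theta)U_\lambda}$, which is integrable. I would prove this by induction along the chain. Starting from $\mu_0=\pi_\lambda$, the one-step bound follows as in \eqref{eq:Gibbs-swap} and \eqref{eq:push-norm}. There, $U_\lambda(T_tx)\le U_\lambda(x)$ and the heat step raises the exponent only by $O(tL_\lambda)$ relative to a quadratic. So the class $\{\mu:\mu\le Ke^{-(1-\theta)U_\lambda}\}$ is approximately invariant, and passing to the limit gives the bound for $\inv$ and for each $\nu_t$.

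\emph{The remaining integrals.} With this bound in hand, the other terms are routine:
\begin{itemize}
\item $\norm{e_t}\le 2\E[\norm{\drift(X)}\mid Y]+\norm{\drift(Y)}$ has polynomial growth;
\item $\norm s$ is controlled by $\norm{\nabla\log\nu_t}+\norm{\drift}$, and $\nabla\log\nu_t(y)=-\E[(y-T_tX)/(2t)\mid Y=y]$ is polynomial in conditional moments;
\item the polynomial moments $\E[u^{\alpha-1}(1+\norm X+\norm Y)^k]$ follow by Hölder, using the Gaussian tails of $X$ and $Y$.
\end{itemize}
Uniformity in $t\in[0,h]$ comes from all constants being monotone in $t$. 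Near $t=0$, the factor $1/(2t)$ in $s$ would be the one delicate point, and I would handle it by instead writing $\nabla\nu_t=\heat_t\nabla((T_t)_\#\inv)$, using smoothness of $\inv$ at fixed $\delta$.
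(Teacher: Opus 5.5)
\textbf{Gap in the tail envelope.} The gap is in the step you identify as the main obstacle: the envelope $\inv\le Ce^{-(1-\theta)U_\lambda}$ with $\theta<2/(3\alpha)$. The induction you describe does not produce it.

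With only $U_\lambda(T_tx)\le U_\lambda(x)$, each pushforward multiplies the constant by the Jacobian factor $(1-hL_\lambda)^{-d}>1$, and each heat step lowers the exponent. An ``approximately invariant'' class iterated over infinitely many steps therefore lets $K\to\infty$ or $1-\theta\to0$, and nothing survives the limit. You need a strictly contracting drift inequality, and the scale at which it closes matters.

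The heat step costs order $hL$ in the exponent, and the strict descent gain compensates only if $\theta\gtrsim hL$. This is sharp: for $U=\frac L2\norm x^2$ the invariant law is exactly $\propto e^{-(1-hL/2)U}$. If the loss is measured by $tL_\lambda$, as your sketch says, you need $hL_\lambda\lesssim1/\alpha$. The hypotheses of \cref{lem:ratios} give only $hL_\lambda\le c$, because the condition on $h/\lambda$ carries the factor $GR/m$, which can be small. So $\theta<2/(3\alpha)$ is not reachable along this route.

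\textbf{Missing idea: separate the smooth part.} Run the Lyapunov argument on the smooth part alone and treat $g_\lambda$ as a $G$-Lipschitz perturbation. The paper takes $F=f-f(x_f)$ and $\eta=1-1/(8\alpha)$. It shows
$Q_{\lambda,h}e^{\eta F}\le e^{\eta F}\exp\{-c_\alpha mhF+C_\alpha h\}$
using only $hL_f\le c/\alpha^2$, which the hypotheses do supply. The Moreau part is absorbed through $\norm{\nabla g_\lambda}\le G$ and Young's inequality. Iterating from $x_f$ gives $\int e^{\eta F}\inv\dx<\infty$.

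Rather than propagating a pointwise class, the paper then uses the stationary identity once, together with the comparison
$\beta F(y)-\gamma F(z)\le\frac{\beta\gamma L_f}{2(\gamma-\beta)}\norm{y-z}^2$.
This yields envelopes $Ce^{-(1-1/(2\alpha))F}$ for both $\nu_t$ and $\nabla\nu_t$, uniformly in $t\in[0,h]$. The lower bound $\pi_\lambda\ge C^{-1}e^{-F-G\norm{y-x_f}}$ then makes $u^\alpha\pi_\lambda$ and $u^{3\alpha/2}\pi_\lambda$ integrable; the surpluses $-F/2$ and $-F/4$ absorb the linear term. Note that $L_\lambda$ never enters an exponent. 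Your worry that $u$ could grow at rate $L_\lambda$ disappears once $\pi_\lambda$ is compared with $e^{-F}$ rather than with a quadratic of coefficient $L_\lambda$.

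\textbf{Score near $t=0$.} Your treatment there also needs a tail envelope for $\nabla\inv$, not merely smoothness of $\inv$. It comes from differentiating the same kernel representation.

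\textbf{Remaining steps.} Once these envelopes are in hand, the rest of your plan goes through: Hölder against the $u^{3\alpha/2}$ bound handles the joint moments. For $e_t$, conditional Jensen reducing to the joint moment is simpler than your pointwise growth claim.
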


The proof of \cref{lem:prior-finiteness} is given in
\cref{app:finiteness}.
It establishes the required integrability independently of the dissipation calculation.
The finite bounds from that proof justify the operations below; their numerical values do not enter the quantitative estimate.

We now prepare two moment bounds for the calculation in \cref{app:renyi-ibp}:
an $L^2$ bound on $\nabla f$ and a bound on the mean of $\Delta g_\lambda$ under the appropriate density-ratio weight.
We define
\begin{equation}\label{eq:escort}
 \omega(\mathrm dx)
 :=\frac{u(x)^\alpha\pi_\lambda(x)\dx}
          {\int u^\alpha\dd\pi_\lambda},
 \qquad
 I:=\E_\omega\norm s^2.
\end{equation}

\emph{A weighted moment of $\nabla f$.}
Recall that $\drift=\nabla f+\nabla g_\lambda$ and
$\nabla\log\omega=\alpha s-\drift$.
Integration by parts gives
\begin{align*}
 \E_\omega\Delta f
 &=-\E_\omega\ip{\nabla f}{\nabla\log\omega}\\
 &=\E_\omega\norm{\nabla f}^2
   +\E_\omega\ip{\nabla f}{\nabla g_\lambda}
   -\alpha\E_\omega\ip{\nabla f}{s}.
\end{align*}
The integration by parts is justified by spatial cutoffs
and the moment and score bounds in
\cref{lem:prior-finiteness}.
Rearranging the preceding identity, using
$\Delta f\le\tau_f$ and $\norm{\nabla g_\lambda}\le G$,
and applying Cauchy--Schwarz, we obtain
\begin{align*}
 \norm{\nabla f}_{L^2(\omega)}^2
 &=\E_\omega\Delta f
   -\E_\omega\ip{\nabla f}{\nabla g_\lambda}
   +\alpha\E_\omega\ip{\nabla f}{s}\\
 &\le\tau_f
   +G\E_\omega\norm{\nabla f}
   +\alpha\norm{\nabla f}_{L^2(\omega)}\sqrt I\\
 &\le\tau_f
   +(G+\alpha\sqrt I)\norm{\nabla f}_{L^2(\omega)}.
\end{align*}

Using $R^2=\tau_f+GR$, we solve this inequality and obtain
\begin{equation}\label{eq:escort-drift}
 \norm{\nabla f}_{L^2(\omega)}
 \le R+\alpha\sqrt I.
\end{equation}

\emph{The weighted mean of the Moreau curvature.}
We next estimate $\E_\omega\Delta g_\lambda$. Applying integration by parts directly to
$\nabla g_\lambda$ gives
\begin{align}
 \E_\omega\Delta g_\lambda
 &=-\E_\omega
      \ip{\nabla g_\lambda}{\nabla\log\omega}\notag\\
 &=\E_\omega
      \ip{\nabla g_\lambda}
         {\nabla f+\nabla g_\lambda-\alpha s}\notag\\
 &\le G\norm{\nabla f}_{L^2(\omega)}
       +G^2+\alpha G\sqrt I\notag\\
 &\le GR+G^2+2\alpha G\sqrt I\notag\\
 &\le2GR+2\alpha G\sqrt I.
 \label{eq:escort-curvature}
\end{align}
The first inequality uses $\norm{\nabla g_\lambda}\le G$ and Cauchy--Schwarz.
The next line uses \eqref{eq:escort-drift}, and the last uses $G^2\le GR$.

\subsection{Gaussian integration by parts with the density-ratio weight}\label{app:renyi-ibp}

This subsection proves \eqref{eq:split-drift}.

For an integrable function $\Phi(X,Y)$, define the weighted joint expectation
\[
 \E_\alpha[\Phi(X,Y)]
 :=
 \frac{\E\!\left[u(Y)^{\alpha-1}\Phi(X,Y)\right]}
      {\int u^\alpha\dd\pi_\lambda}.
\]
Since $Y$ has density $\nu_t=u\pi_\lambda$, the denominator normalizes this expectation, and its $Y$ marginal is $\omega$:
\[
 \E_\alpha[\phi(Y)]
 =
 \frac{\int\phi(y)u(y)^\alpha\pi_\lambda(y)\dd y}
      {\int u^\alpha\dd\pi_\lambda}
 =\E_\omega\phi
\]
for every integrable test function $\phi$.

Fix $0<t\le h$.
Let $w$ be the gradient of a convex function with $L$-Lipschitz gradient, and write
\[
 \Delta w:=w(Y)-w(X).
\]
Cocoercivity and
$Y-X=-t\drift(X)+\sqrt{2t}\,Z$ give
\[
 \E_\alpha\norm{\Delta w}^2
 \le
 -tL\,\E_\alpha\ip{\Delta w}{\drift(X)}
 +L\,\E_\alpha\ip{\Delta w}{\sqrt{2t}\,Z}.
\]

To evaluate the Gaussian term, condition on $X$ and integrate by parts under the original standard Gaussian law of $Z$, keeping $u(Y)^{\alpha-1}$ in the integrand. The identities
\[
 \partial_{Z_i}Y_j=\sqrt{2t}\,\delta_{ij},
 \qquad
 \nabla_y u(y)^{\alpha-1}
 =(\alpha-1)u(y)^{\alpha-1}s(y)
\]
yield
\[
 \E_\alpha\ip{\Delta w}{\sqrt{2t}\,Z}
 =
 2t\,\E_\omega\Div w
 +2t(\alpha-1)\E_\alpha\ip{\Delta w}{s(Y)}.
\]
The joint weighted moment bound with $k=2$ in \cref{lem:prior-finiteness} and $\norm{\Delta w}\le L\norm{Y-X}$ give $\E_\alpha\norm{\Delta w}^2<\infty$. The same lemma gives $I<\infty$. Since the $Y$ marginal is $\omega$, Cauchy--Schwarz yields
\[
 \E_\alpha\left|\ip{\Delta w}{s(Y)}\right|
 \le
 \left(\E_\alpha\norm{\Delta w}^2\right)^{1/2}
 \sqrt I
 <\infty.
\]
The remaining terms are integrable by the joint moment bound and the bounded derivative of $w$. These bounds justify removing the cutoffs in the Gaussian integration by parts.

Substitution gives
\begin{align}\label{eq:weighted-cocoercivity}
 \E_\alpha\norm{w(Y)-w(X)}^2
 \le tL\Bigl[&
 -\E_\alpha\ip{w(Y)-w(X)}{\drift(X)}
 +2\E_\omega\Div w\notag\\
 &+2(\alpha-1)
   \E_\alpha\ip{w(Y)-w(X)}{s(Y)}
 \Bigr].
\end{align}

\emph{The smooth component.}
We first estimate the squared increment of $\nabla f$.
Define
\[
 D_f:=\E_\alpha
       \norm{\nabla f(Y)-\nabla f(X)}^2.
\]
To use \eqref{eq:weighted-cocoercivity}, we also need a weighted second moment of $\drift(X)$.
The identity
\[
 \drift(X)
 =\nabla f(Y)
  -\bigl(\nabla f(Y)-\nabla f(X)\bigr)
  +\nabla g_\lambda(X)
\]
and the fact that the $Y$ marginal of $\E_\alpha$ is $\omega$ give
\begin{align*}
 \bigl(\E_\alpha\norm{\drift(X)}^2\bigr)^{1/2}
 \le
 \norm{\nabla f}_{L^2(\omega)}+\sqrt{D_f}+G
 \le2R+\alpha\sqrt I+\sqrt{D_f},
\end{align*}
where the last line uses \eqref{eq:escort-drift}
and $G\le R$.

Apply \eqref{eq:weighted-cocoercivity} with
$w=\nabla f$ and $L=L_f$.
Using $\Div(\nabla f)=\Delta f\le\tau_f$ and
Cauchy--Schwarz, we obtain
\begin{align*}
 D_f
 &\le tL_f\Bigl[
 \sqrt{D_f}(2R+\alpha\sqrt I+\sqrt{D_f})
 +2\tau_f
 +2(\alpha-1)\sqrt{D_f}\sqrt I
 \Bigr]\\
 &\le tL_f\Bigl[
 D_f+(2R+3\alpha\sqrt I)\sqrt{D_f}+2\tau_f
 \Bigr].
\end{align*}
Young's inequality gives
\[
 tL_f(2R+3\alpha\sqrt I)\sqrt{D_f}
 \le\frac14D_f+8t^2L_f^2R^2
                      +18t^2L_f^2\alpha^2I.
\]
For $tL_f\le1/4$, moving the terms involving $D_f$ to the left therefore yields
\[
 \frac12D_f
 \le2tL_f\tau_f+8t^2L_f^2R^2
                    +18t^2L_f^2\alpha^2I.
\]
In particular, we may use
\begin{equation}\label{eq:Df}
 D_f\le6tL_f\tau_f+16t^2L_f^2R^2
                       +36t^2L_f^2\alpha^2I.
\end{equation}
Since $\tau_f\le R^2$ and $tL_f\le1/4$, the preceding estimates also give
\[
 \bigl(\E_\alpha\norm{\drift(X)}^2\bigr)^{1/2}
 \le6R+3\alpha\sqrt I.
\]

\emph{The Moreau component.}
We next estimate
\[
 D_g:=\E_\alpha
       \norm{\nabla g_\lambda(Y)-\nabla g_\lambda(X)}^2.
\]
Apply \eqref{eq:weighted-cocoercivity} with $w=\nabla g_\lambda$ and $L=\lambda^{-1}$.
The bounded-gradient property gives
\[
 \norm{\nabla g_\lambda(Y)-\nabla g_\lambda(X)}
 \le2G.
\]
It follows that
\begin{align*}
 D_g
 &\le\frac t\lambda\Bigl[
 2G\bigl(\E_\alpha\norm{\drift(X)}^2\bigr)^{1/2}
 +2\E_\omega\Delta g_\lambda
 +4(\alpha-1)G\sqrt I
 \Bigr]\\
 &\le\frac t\lambda\Bigl[
 2G(6R+3\alpha\sqrt I)
 +2(2GR+2\alpha G\sqrt I)
 +4(\alpha-1)G\sqrt I
 \Bigr].
\end{align*}
The second line uses the bound above and \eqref{eq:escort-curvature}.
Collecting terms gives
\begin{equation}\label{eq:Da}
 D_g\le\frac t\lambda
       \bigl(16GR+14\alpha G\sqrt I\bigr).
\end{equation}

\emph{Combining the two components.}
Since $\drift=\nabla f+\nabla g_\lambda$,
\[
 \E_\alpha\norm{\drift(Y)-\drift(X)}^2
 \le2D_f+2D_g.
\]
Equations \eqref{eq:Df} and \eqref{eq:Da} therefore yield
\begin{align}\label{eq:split-drift}
 \E_\alpha\norm{\drift(Y)-\drift(X)}^2
 \le{}&12tL_f\tau_f+32t^2L_f^2R^2
       +72t^2L_f^2\alpha^2I\notag\\
 &+32(t/\lambda)GR
       +28(t/\lambda)\alpha G\sqrt I.
\end{align}

\subsection{Proof of the density-ratio bound}
\label{app:renyi-close}

This subsection proves \eqref{eq:Renyi-final}, which implies \eqref{eq:ratios}.

\paragraph{Step 1. Evolution equation for $\nu_t$: \eqref{eq:Euler-PDE}.}

Recall that
\[
 Y=X-t\drift(X)+\sqrt{2t}\,Z.
\]
For $\phi\in C_c^\infty(\R^d)$ and $0<t<h$, differentiation under the expectation gives
\[
 \frac\dd{\dd t}\int\phi(y)\nu_t(y)\dd y
 =\E\ip{\nabla\phi(Y)}
              {-\drift(X)+Z/\sqrt{2t}}.
\]
This differentiation is valid on compact subintervals of $(0,h)$ because $\nabla\phi$ is bounded, the drift grows at most linearly, and $X$ has finite moments by \cref{lem:prior-finiteness}.

For each coordinate $i$, Gaussian integration by parts, first conditional on $X$, gives
\[
 \E[Z_i\partial_i\phi(Y)]
 =\sqrt{2t}\,\E[\partial_{ii}\phi(Y)].
\]
Consequently,
\begin{align*}
 \frac\dd{\dd t}\int\phi(y)\nu_t(y)\dd y
 &=\E\bigl[
       \Delta\phi(Y)
       -\ip{\drift(X)}{\nabla\phi(Y)}
      \bigr]\\
 &=\int
   \bigl[
     \Delta\phi(y)
     -\ip{\drift(y)-e_t(y)}{\nabla\phi(y)}
   \bigr]\nu_t(y)\dd y,
\end{align*}
where the second equality uses
\[
 \E[\drift(X)\mid Y=y]=\drift(y)-e_t(y).
\]
Thus, in the weak sense,
\[
 \partial_t\nu_t
 =\Delta\nu_t+\Div(\nu_t\drift)-\Div(\nu_te_t).
\]
Since $\nabla\log\pi_\lambda=-\drift$, we have
\[
 \nabla\nu_t+\nu_t\drift
 =\nu_t\nabla\log\frac{\nu_t}{\pi_\lambda}.
\]
This proves
\begin{equation}\label{eq:Euler-PDE}
 \partial_t\nu_t
 =\Div\left(
        \nu_t\nabla\log\frac{\nu_t}{\pi_\lambda}
       \right)
  -\Div(\nu_te_t).
\end{equation}

\paragraph{Step 2. Derivative of the R\'enyi divergence: \eqref{eq:Renyi-derivative}.}

By \eqref{eq:renyi-def}, with $u=\nu_t/\pi_\lambda$,
\[
 \mathcal R_\alpha(\nu_t\Vert\pi_\lambda)
 =\frac1{\alpha-1}
       \log\int u^\alpha\dd\pi_\lambda.
\]
We first compute the derivative of the integral on the right-hand side.

Choose $\chi_n\in C_c^\infty(\R^d)$ such that $0\le\chi_n\le1$, $\chi_n=1$ on the ball of radius $n$, $\chi_n=0$ outside the ball of radius $2n$, and $\norm{\nabla\chi_n}_\infty\le C/n$.
Fix $0<t_0<t_1<h$.
For each fixed $t\in[t_0,t_1]$, Step 1 gives
\[
 \int\phi\,\partial_t\nu_t\dx
 =-\int\ip{\nabla\phi}{s-e_t}\nu_t\dx,
 \qquad \phi\in C_c^\infty(\R^d).
\]
By \cref{lem:prior-finiteness}, $u$ is positive and locally $C^{1,2}$, $\partial_t\nu_t$ is locally continuous, and $\nu_t(s-e_t)$ is locally integrable.
Approximation in $C^1$, with support in a fixed compact set, therefore extends this identity to $\phi\in C_c^1(\R^d)$.

At this fixed time, we may take $\phi=\chi_nu^{\alpha-1}$.
Applying the chain rule first, and then the spatial identity above, gives
\begin{align*}
 \frac\dd{\dd t}\int\chi_nu^\alpha\dd\pi_\lambda
 &=\alpha\int
       \chi_nu^{\alpha-1}\partial_t\nu_t\dx\\
 &=-\alpha\int
       \ip{\nabla(\chi_nu^{\alpha-1})}{s-e_t}
       \nu_t\dx.
\end{align*}

Since
\[
 \nabla(\chi_nu^{\alpha-1})
 =u^{\alpha-1}\nabla\chi_n
  +(\alpha-1)\chi_nu^{\alpha-1}s,
\]
we obtain
\begin{align*}
 \frac\dd{\dd t}\int\chi_nu^\alpha\dd\pi_\lambda
 ={}&-\alpha(\alpha-1)
       \int\chi_nu^\alpha\norm s^2\dd\pi_\lambda\\
 &+\alpha(\alpha-1)
       \int\chi_nu^\alpha\ip{e_t}{s}\dd\pi_\lambda\\
 &-\alpha\int
       u^\alpha\ip{s-e_t}{\nabla\chi_n}\dd\pi_\lambda.
\end{align*}
Keep $t_0$ and $t_1$ fixed and integrate the preceding identity in time. Writing $u(t_j)=\nu_{t_j}/\pi_\lambda$ at the endpoints gives
\begin{align*}
 &\int\chi_nu(t_1)^\alpha\dd\pi_\lambda
  -\int\chi_nu(t_0)^\alpha\dd\pi_\lambda\\
 &\quad=-\alpha(\alpha-1)
       \int_{t_0}^{t_1}\int
          \chi_nu^\alpha\norm s^2
          \dd\pi_\lambda\dd t\\
 &\quad+\alpha(\alpha-1)
       \int_{t_0}^{t_1}\int
          \chi_nu^\alpha\ip{e_t}{s}
          \dd\pi_\lambda\dd t\\
 &\quad-\alpha
       \int_{t_0}^{t_1}\int
          u^\alpha\ip{s-e_t}{\nabla\chi_n}
          \dd\pi_\lambda\dd t.
\end{align*}

We now remove the spatial cutoff while keeping $t_0$ and $t_1$ fixed.
The last term satisfies
\begin{align*}
 \alpha\left|
   \int_{t_0}^{t_1}\int
      u^\alpha\ip{s-e_t}{\nabla\chi_n}
      \dd\pi_\lambda\dd t
  \right|
 \le
 \frac{C\alpha}{n}
 \int_{t_0}^{t_1}\int
      u^\alpha
      \bigl(1+\norm s^2+\norm{e_t}^2\bigr)
      \dd\pi_\lambda\dd t.
\end{align*}
The time integral on the right is finite and independent of $n$ by \cref{lem:prior-finiteness}.
Moreover,
\[
 u^\alpha|\ip{e_t}{s}|
 \le\frac12u^\alpha
          \bigl(\norm{e_t}^2+\norm s^2\bigr),
\]
so the same lemma provides an integrable bound for each of the other time integrands.
We may therefore let $n\to\infty$ by dominated convergence, obtaining
\begin{align*}
 &\int u(t_1)^\alpha\dd\pi_\lambda
  -\int u(t_0)^\alpha\dd\pi_\lambda\\
 &\quad=-\alpha(\alpha-1)
       \int_{t_0}^{t_1}\int
          u^\alpha\norm s^2
          \dd\pi_\lambda\dd t\\
 &\qquad\quad+\alpha(\alpha-1)
       \int_{t_0}^{t_1}\int
          u^\alpha\ip{e_t}{s}
          \dd\pi_\lambda\dd t.
\end{align*}

It remains to include the time endpoints.
By \cref{lem:prior-finiteness}, both time integrands in the last identity belong to $L^1(0,h)$.
The endpoint limits \eqref{eq:finite-moment-continuity}, proved in \cref{app:finiteness}, show that $t\mapsto\int u(t)^\alpha\dd\pi_\lambda$ is continuous at $t=0$ and $t=h$.
We can therefore let $t_0\downarrow0$ or $t_1\uparrow h$ in this identity, and then include both endpoints. The same identity consequently holds for every $0\le t_0<t_1\le h$.

Since its right-hand side is the time integral of an $L^1(0,h)$ function, this identity proves that $t\mapsto\int u(t)^\alpha\dd\pi_\lambda$ is absolutely continuous on $[0,h]$.
Its derivative is therefore
\begin{align*}
 \frac\dd{\dd t}\int u^\alpha\dd\pi_\lambda
 ={}-\alpha(\alpha-1)
       \int u^\alpha\norm s^2\dd\pi_\lambda
 +\alpha(\alpha-1)
       \int u^\alpha\ip{e_t}{s}\dd\pi_\lambda
\end{align*}
for almost every $t\in(0,h)$.

Because $\int u^\alpha\dd\pi_\lambda\ge1$, the R\'enyi divergence is also absolutely continuous on $[0,h]$. The chain rule gives
\begin{align*}
 \frac\dd{\dd t}
       \mathcal R_\alpha(\nu_t\Vert\pi_\lambda)
 =\frac{
       \frac\dd{\dd t}\int u^\alpha\dd\pi_\lambda
      }{
       (\alpha-1)\int u^\alpha\dd\pi_\lambda
      }
 =-\alpha I+\alpha\E_\omega\ip{e_t}{s}.
\end{align*}

The weight defining $\E_\alpha$ depends only on $Y$, so it leaves the conditional law of $X$ given $Y$ unchanged. Hence
\[
 e_t(Y)
 =\E_\alpha[
     \drift(Y)-\drift(X)\mid Y
   ].
\]
Conditional Jensen's inequality and the fact that the $Y$ marginal is $\omega$ give
\[
 \E_\omega\norm{e_t}^2
 \le\E_\alpha
      \norm{\drift(Y)-\drift(X)}^2.
\]
Using
\[
 \ip{e_t}{s}
 \le\frac12\norm s^2+\frac12\norm{e_t}^2,
\]
we conclude that, for almost every $t$,
\begin{align}\label{eq:Renyi-derivative}
 \frac\dd{\dd t}
       \mathcal R_\alpha(\nu_t\Vert\pi_\lambda)
 &=-\alpha I+\alpha\E_\omega\ip{e_t}{s}
       \notag\\
 &\le-\frac\alpha2 I
       +\frac\alpha2\E_\omega\norm{e_t}^2
       \notag\\
 &\le-\frac\alpha2 I
       +\frac\alpha2\E_\alpha
          \norm{\drift(Y)-\drift(X)}^2.
\end{align}

\paragraph{Step 3. Applying the drift bound: \eqref{eq:Renyi-ode}.}

Substituting \eqref{eq:split-drift} into
\eqref{eq:Renyi-derivative} gives
\begin{align*}
 \frac\dd{\dd t}
       \mathcal R_\alpha(\nu_t\Vert\pi_\lambda)
 \le{}&-\frac\alpha2 I
       +6\alpha tL_f\tau_f
       +16\alpha t^2L_f^2R^2\\
 &+36\alpha^3t^2L_f^2I
       +16\alpha(t/\lambda)GR
       +14\alpha^2(t/\lambda)G\sqrt I.
\end{align*}
Since $\tau_f\ge m$, the first condition in \eqref{eq:ratio-step} implies $hL_f\le c/\alpha^2$.
Taking the universal constant $c$ small enough gives, for $0\le t\le h$,
\[
 36\alpha^3t^2L_f^2I\le\frac\alpha8 I.
\]
Also, Young's inequality gives
\[
 14\alpha^2(t/\lambda)G\sqrt I
 \le\frac\alpha8 I
       +392\alpha^3(t/\lambda)^2G^2.
\]
Combining these estimates proves
\begin{align}\label{eq:Renyi-ode}
 \frac\dd{\dd t}
       \mathcal R_\alpha(\nu_t\Vert\pi_\lambda)
 \le{}-\frac\alpha4 I
       +C\alpha\Bigl[
          tL_f\tau_f+t^2L_f^2R^2
          +(t/\lambda)GR
          +\alpha^2(t/\lambda)^2G^2
       \Bigr].
\end{align}

\paragraph{Step 4. A lower bound for $I$: \eqref{eq:Renyi-energy}.}

By \cref{lem:prior-finiteness},
\[
 \int u^\alpha\dd\pi_\lambda<\infty,
 \qquad
 \int\norm{\nabla u^{\alpha/2}}^2\dd\pi_\lambda
 =\frac{\alpha^2}{4}
       \int u^\alpha\norm s^2\dd\pi_\lambda<\infty.
\]
We may therefore apply \eqref{eq:LSI} to $w=u^{\alpha/2}$. Dividing by $\int u^\alpha\dd\pi_\lambda$ gives
\[
 \frac{\Ent_{\pi_\lambda}(u^\alpha)}
      {\int u^\alpha\dd\pi_\lambda}
 \le\frac2m
       \frac{
         \int\norm{\nabla u^{\alpha/2}}^2\dd\pi_\lambda
       }{
         \int u^\alpha\dd\pi_\lambda
       }
 =\frac{\alpha^2}{2m}I.
\]

We next bound the left-hand side from below.
Since $\int u\dd\pi_\lambda=1$, the definition of
$\omega$ gives
\[
 \E_\omega[u^{1-\alpha}]
 =\frac{\int u\dd\pi_\lambda}
        {\int u^\alpha\dd\pi_\lambda}
 =\frac1{\int u^\alpha\dd\pi_\lambda}.
\]
The $3\alpha/2$ moment in
\cref{lem:prior-finiteness} ensures that
$\E_\omega|\log u|<\infty$:
$u^\alpha|\log u|$ is bounded when $0<u\le1$,
and is at most $C_\alpha u^{3\alpha/2}$ when $u\ge1$.
Jensen's inequality for the logarithm therefore gives
\begin{align*}
 (1-\alpha)\E_\omega\log u
 &=\E_\omega[\log(u^{1-\alpha})]\\
 &\le\log\E_\omega[u^{1-\alpha}]
 =-\log\int u^\alpha\dd\pi_\lambda.
\end{align*}
Because $\alpha>1$, this implies
\[
 \E_\omega\log u
 \ge\frac1{\alpha-1}
        \log\int u^\alpha\dd\pi_\lambda
 =\mathcal R_\alpha(\nu_t\Vert\pi_\lambda).
\]
Consequently,
\begin{align*}
 \frac{\Ent_{\pi_\lambda}(u^\alpha)}
      {\int u^\alpha\dd\pi_\lambda}
 &=\alpha\E_\omega\log u
       -\log\int u^\alpha\dd\pi_\lambda\\
 &\ge\alpha\mathcal R_\alpha(\nu_t\Vert\pi_\lambda)
       -(\alpha-1)
          \mathcal R_\alpha(\nu_t\Vert\pi_\lambda)\\
 &=\mathcal R_\alpha(\nu_t\Vert\pi_\lambda).
\end{align*}
Combining the upper and lower bounds proves
\begin{equation}\label{eq:Renyi-energy}
 I\ge\frac{2m}{\alpha^2}
          \mathcal R_\alpha(\nu_t\Vert\pi_\lambda).
\end{equation}

\paragraph{Step 5. Uniform $L^\alpha$ bound for the density ratio: \eqref{eq:Renyi-final} and \eqref{eq:ratios}.}

In \eqref{eq:Renyi-ode}, use \eqref{eq:Renyi-energy} and bound $t$ by $h$ in the four nonnegative terms.
Multiplying the resulting inequality by $e^{mt/(2\alpha)}$ and integrating from $0$ to $t$ gives, after enlarging the universal constant $C$,
\begin{align*}
 \mathcal R_\alpha(\nu_t\Vert\pi_\lambda)
 \le{}&e^{-mt/(2\alpha)}
          \mathcal R_\alpha(\nu_0\Vert\pi_\lambda)\\
 &+\frac{C\alpha^2}{m}
       \bigl(1-e^{-mt/(2\alpha)}\bigr)
       \Bigl[
          hL_f\tau_f+h^2L_f^2R^2
          +(h/\lambda)GR
          +\alpha^2(h/\lambda)^2G^2
       \Bigr].
\end{align*}
Taking $t=h$ and using $\nu_h=\nu_0$, we move $e^{-mh/(2\alpha)}\mathcal R_\alpha(\nu_0\Vert\pi_\lambda)$ to the left-hand side and cancel the positive factor $1-e^{-mh/(2\alpha)}$.
This bounds $\mathcal R_\alpha(\nu_0\Vert\pi_\lambda)$ by $C\alpha^2/m$ times the bracket above.
Substituting this bound back into the inequality for general $t$ proves
\begin{align}\label{eq:Renyi-final}
 \sup_{0\le t\le h}
       \mathcal R_\alpha(\nu_t\Vert\pi_\lambda)
 \le{}\frac{C\alpha^2}{m}
       \Bigl[
          hL_f\tau_f+h^2L_f^2R^2
          +(h/\lambda)GR
          +\alpha^2(h/\lambda)^2G^2
       \Bigr].
\end{align}

The conditions in \eqref{eq:ratio-step}, together with $R^2\ge m$, give
\begin{align*}
 \frac{\alpha^2}{m}hL_f\tau_f
 &\le c,\\
 \frac{\alpha^2}{m}h^2L_f^2R^2
 &=\left(
     \frac{\alpha hL_fR}{\sqrt m}
   \right)^2
 \le c^2,\\
 \frac{\alpha^2}{m}(h/\lambda)GR
 &\le c,\\
 \frac{\alpha^4}{m}(h/\lambda)^2G^2
 &=\frac m{R^2}
   \left(
     \frac{\alpha^2}{m}(h/\lambda)GR
   \right)^2
 \le c^2.
\end{align*}
Taking the universal constant $c$ small enough makes the right-hand side of \eqref{eq:Renyi-final} at most $1/4$.
Finally, \eqref{eq:renyi-def} gives
\[
 \sup_{0\le t\le h}
       \Lp{\nu_t/\pi_\lambda}{\alpha}
 =
 \exp\left\{
   \frac{\alpha-1}{\alpha}
   \sup_{0\le t\le h}
       \mathcal R_\alpha(\nu_t\Vert\pi_\lambda)
 \right\}
 \le e^{1/4}.
\]
This is \eqref{eq:ratios}.

All quantitative constants above are independent of the auxiliary smoothing scale. Passing to the limit as described in \cref{app:regularity} gives \eqref{eq:ratios} for the original potential. This completes the proof of \cref{lem:ratios}.

\section{Poisson Regularity and Negative Sobolev Norms}
\label[appendix]{app:poisson}
This appendix proves \cref{lem:poisson-energy} and the
test-function equivalence \eqref{eq:Hminus-test-classes}.
Throughout, $\lambda$ is fixed, $U_\lambda\in C^{1,1}(\R^d)$, and
$mI\preceq\hess\preceq L_\lambda I$ almost everywhere.

\subsection{Existence and second-order energy}\label{app:poisson-energy}
\begin{proof}[Proof of the existence and energy assertions in \cref{lem:poisson-energy}]
By \eqref{eq:Poincare}, the gradient norm is a complete Hilbert norm on the mean-zero subspace of $H^1(\pi_\lambda)$. For $\phi\in L^2(\pi_\lambda)$, the functional
\[
 v\longmapsto-\int\left(\phi-\int\phi\dd\pi_\lambda\right)v
                          \dd\pi_\lambda
\]
is continuous in that norm, with norm at most $m^{-1/2}\Lp{\phi-\int\phi\dd\pi_\lambda}{2}$. The Riesz representation theorem gives a unique mean-zero $\psi\in H^1(\pi_\lambda)$ satisfying \eqref{eq:poisson-weak}.
The identity extends from mean-zero tests to every $v\in H^1(\pi_\lambda)$ because both sides are unchanged when a constant is added to $v$.

On compact sets, $\pi_\lambda$ is bounded above and below by positive constants and $\drift$ is bounded.
The weak equation therefore gives, in the sense of distributions,
\[
 \Delta\psi=\phi-\int\phi\dd\pi_\lambda
                   +\ip{\drift}{\nabla\psi}
 \in L^2_{\mathrm{loc}}(\R^d).
\]
For $\eta\in C_c^\infty(\R^d)$, the function $\eta\psi$ belongs to ordinary $H^1(\R^d)$ and
\[
 \Delta(\eta\psi)
 =\eta\Delta\psi+2\ip{\nabla\eta}{\nabla\psi}+\psi\Delta\eta
 \in L^2(\R^d).
\]
Set $u=\eta\psi$ and let $u_\varepsilon=u*\varrho_\varepsilon$, where $\varrho_\varepsilon(x)=\varepsilon^{-d}\varrho(x/\varepsilon)$, $\varrho\in C_c^\infty(B(0,1))$, $\varrho\ge0$, and $\int\varrho\dx=1$.
Then $u_\varepsilon\in C_c^\infty(\R^d)$, $u_\varepsilon\to u$ in $H^1(\R^d)$, and
\[
 \Delta u_\varepsilon
 =(\Delta u)*\varrho_\varepsilon
 \longrightarrow\Delta u
 \quad\text{in }L^2(\R^d).
\]
For every $w\in C_c^\infty(\R^d)$, two integrations by parts give
\[
 \int\norm{\nabla^2w}_F^2\dx
 =\sum_{i,j}\int(\partial_{ii}w)(\partial_{jj}w)\dx
 =\int|\Delta w|^2\dx.
\]
Applying this identity to $w=u_\varepsilon-u_{\varepsilon'}$ shows that $\nabla^2u_\varepsilon$ is Cauchy in $L^2(\R^d)$. Its limit is the weak Hessian of $u$, as follows by passing to the limit in
\[
 \int\partial_{ij}u_\varepsilon\,\zeta\dx
 =\int u_\varepsilon\,\partial_{ij}\zeta\dx,
 \qquad \zeta\in C_c^\infty(\R^d).
\]
Thus $u\in H^2(\R^d)$. Since $\eta$ was arbitrary, $\psi\in H^2_{\mathrm{loc}}(\R^d)$.

Now let $\phi\in C_c^\infty(\R^d)$.
For $u\in C_c^\infty(\R^d)$, differentiation of the generator in the weak
sense gives
\[
 \nabla\gen u=\gen(\nabla u)-(\hess)\nabla u,
\]
where $\gen$ acts componentwise on $\nabla u$.
Weighted integration by parts then yields
\begin{align}\label{eq:Bochner-compact}
 \int(\gen u)^2\dd\pi_\lambda
 &=-\int\ip{\nabla u}{\nabla\gen u}\dd\pi_\lambda\notag\\
 &=\int\norm{\nabla^2u}_F^2\dd\pi_\lambda
   +\int\nabla u^\top(\hess)\nabla u\dd\pi_\lambda.
\end{align}
Only the bounded weak derivative of $\drift$ is used.
For a compactly supported $u\in H^2(\R^d)$, let $u_\varepsilon=u*\varrho_\varepsilon$. Mollification commutes with weak derivatives through order two, so $u_\varepsilon\to u$ in $H^2(\R^d)$. For $0<\varepsilon<1$, these functions are supported in a fixed compact set, on which $\pi_\lambda$ and $\drift$ are bounded. Consequently,
\[
 \Lp{\gen u_\varepsilon-\gen u}{2}
 +\Lp{\nabla^2u_\varepsilon-\nabla^2u}{2}
 +\Lp{\nabla u_\varepsilon-\nabla u}{2}
 \longrightarrow0.
\]
Together with the boundedness of $\hess$, these convergences justify passage to the limit in \eqref{eq:Bochner-compact}. The identity therefore holds for compactly supported $H^2(\R^d)$ functions.

Choose smooth cutoffs $0\le\chi_n\le1$, equal to one for $\norm x\le n$ and zero for $\norm x\ge2n$, such that $\norm{\nabla\chi_n}\le C/n$ and $\norm{\nabla^2\chi_n}_F\le C_d/n^2$. Local regularity implies that $\chi_n\psi$ is a compactly supported $H^2$ function.
Since $\drift$ has at most linear growth, $\gen\chi_n$ is bounded uniformly in $n$ on its annular support, with the model parameters fixed.
Consequently,
\begin{align*}
 \gen(\chi_n\psi)
 &=\chi_n\left(\phi-\int\phi\dd\pi_\lambda\right)
   +2\ip{\nabla\chi_n}{\nabla\psi}+\psi\gen\chi_n\\
 &\longrightarrow\phi-\int\phi\dd\pi_\lambda
   =\gen\psi
 \qquad\text{in }L^2(\pi_\lambda).
\end{align*}
The first term converges by dominated convergence; the other two tend to zero because $\psi,\nabla\psi\in L^2(\pi_\lambda)$ and the derivatives of $\chi_n$ are supported where $\norm x\ge n$.

Apply \eqref{eq:Bochner-compact} to $\chi_n\psi$. Since $\hess\succeq0$, both energy terms are nonnegative. Moreover, $\chi_n\psi=\psi$ on $B(0,n)$, so their derivatives agree there. Fatou's lemma and the convergence of $\gen(\chi_n\psi)$ give
\begin{align*}
 &\int\norm{\nabla^2\psi}_F^2\dd\pi_\lambda
   +\int\nabla\psi^\top\hess\nabla\psi\dd\pi_\lambda\\
 &\qquad\le
   \liminf_{n\to\infty}
   \int|\gen(\chi_n\psi)|^2\dd\pi_\lambda\\
 &\qquad=
   \int\left|\phi-\int\phi\dd\pi_\lambda\right|^2
       \dd\pi_\lambda
   <\infty.
\end{align*}
In particular, $\nabla^2\psi\in L^2(\pi_\lambda)$. This proves the first inequality in \eqref{eq:bochner-main}; the second follows from \eqref{eq:Poincare} applied to $\phi$.
\end{proof}

\subsection{Approximation by compactly supported tests}
\label{app:residual-pairing}
\begin{proof}[Proof of the approximation assertion in
\cref{lem:poisson-energy}]
Let $\phi\in C_c^\infty(\R^d)$ and use the cutoffs from the preceding proof. The convergence of $\gen(\chi_n\psi)$ to $\gen\psi$ in $L^2(\pi_\lambda)$ has already been established. Now that \eqref{eq:bochner-main} gives global weighted integrability of $\nabla^2\psi$, the product formula
\begin{align*}
 \nabla^2(\chi_n\psi)-\nabla^2\psi
 &=(\chi_n-1)\nabla^2\psi
   +\nabla\chi_n\otimes\nabla\psi\\
 &\quad+\nabla\psi\otimes\nabla\chi_n
   +\psi\nabla^2\chi_n
\end{align*}
implies convergence to zero in $L^2(\pi_\lambda)$.
Likewise,
\[
 \nabla(\chi_n\psi)-\nabla\psi
 =(\chi_n-1)\nabla\psi+\psi\nabla\chi_n
 \longrightarrow0
 \quad\text{in }L^2(\pi_\lambda).
\]

For each fixed $n$, mollify $\chi_n\psi$. The mollifications converge in ordinary $H^2(\R^d)$ and remain supported in a fixed compact set. On that set, $\pi_\lambda$ is bounded above and below by positive constants and $\drift$ is bounded. Thus they also converge in each of the three norms in \eqref{eq:poisson-approximation}. Choose the mollification scale so that the sum of these three errors relative to $\chi_n\psi$ is at most $1/n$. The resulting sequence $\psi_n\in C_c^\infty(\R^d)$ satisfies \eqref{eq:poisson-approximation}.
\end{proof}

\subsection{Equivalent test classes for the negative Sobolev norm}
\label{app:negative-norm}

\begin{proof}[Proof of \eqref{eq:Hminus-test-classes}]
Let $\sigma$ be an integrable signed density with $\int\sigma\dx=0$ and $\sigma/\pi_\lambda\in L^2(\pi_\lambda)$.
We first compare compactly supported smooth tests with $H^1(\pi_\lambda)$ tests.

For $v\in H^1(\pi_\lambda)$, choose smooth cutoffs $0\le\chi_n\le1$, equal to one on the ball of radius $n$, supported in the ball of radius $2n$, and satisfying $\norm{\nabla\chi_n}\le C/n$.
Then
\begin{align*}
 \Lp{\chi_n v-v}{2}&\longrightarrow0,\\
 \Lp{\nabla(\chi_n v)-\nabla v}{2}
 &\le
 \Lp{(\chi_n-1)\nabla v}{2}
 +\frac Cn\Lp{v}{2}
 \longrightarrow0.
\end{align*}
On each compact set, $\pi_\lambda$ is bounded above and below by positive constants.
Mollifying $\chi_n v$ and choosing the smoothing scales diagonally therefore gives $v_n\in C_c^\infty(\R^d)$ with
\[
 \Lp{v_n-v}{2}+\Lp{\nabla v_n-\nabla v}{2}
 \longrightarrow0.
\]
The pairing is continuous under this convergence, since
\[
 \left|\int(v_n-v)\sigma\dx\right|
 \le
 \Lp{\sigma/\pi_\lambda}{2}\Lp{v_n-v}{2}
 \longrightarrow0.
\]
If $\Lp{\nabla v}{2}\le1$, replace $v_n$ by
\[
 \frac{v_n}{\max\{1,\Lp{\nabla v_n}{2}\}}.
\]
This preserves convergence to $v$ in $H^1(\pi_\lambda)$ and makes every approximation admissible in \eqref{eq:Hminus-def}. Thus the supremum over $H^1(\pi_\lambda)$ equals the supremum over $C_c^\infty(\R^d)$.

It remains to compare with $C^1$ tests.
Let $v\in C^1(\R^d)$ satisfy $\Lp{\nabla v}{2}<\infty$. To verify that $v\in H^1(\pi_\lambda)$, define the bounded truncations
\[
 v_k:=\max\{-k,\min\{v,k\}\}.
\]
They belong to $H^1(\pi_\lambda)$ and satisfy $\norm{\nabla v_k}\le\norm{\nabla v}$ almost everywhere.
The Poincar\'e inequality gives
\[
 \int\left|v_k-\int v_k\dd\pi_\lambda\right|^2
                       \dd\pi_\lambda
 \le\frac1m\Lp{\nabla v}{2}^2.
\]
For $k>\sup_{\norm{x}\le1}|v(x)|$, one has $v_k=v$ on the unit ball. Restricting the preceding variance bound to that ball gives
\[
 \left|\int v_k\dd\pi_\lambda\right|
 \le
 \sup_{\norm{x}\le1}|v(x)|
 +\frac{\Lp{\nabla v}{2}}
 {\sqrt{m\,\pi_\lambda(\{x:\norm{x}\le1\})}}.
\]
The denominator is positive. Hence the means of $v_k$, and therefore their $L^2(\pi_\lambda)$ norms, are uniformly bounded. Since $v_k\to v$ pointwise, Fatou's lemma yields $v\in L^2(\pi_\lambda)$. Together with the assumed gradient bound, this proves $v\in H^1(\pi_\lambda)$.

Consequently, every admissible $C^1$ test is an admissible $H^1(\pi_\lambda)$ test. Since $C_c^\infty(\R^d)\subset C^1(\R^d)$ and the $C_c^\infty$ and $H^1$ suprema have already been shown equal, the $C^1$ supremum has the same value.
This proves \eqref{eq:Hminus-test-classes}.
\end{proof}

\section{Weak Regularity and Integrability Before Dissipation}\label[appendix]{app:regularity}
This appendix supplies the technical arguments used in
\cref{app:renyi}.
We first verify the properties of the auxiliary mollification.
We then prove the integrability statement
\cref{lem:prior-finiteness} independently of the dissipation
calculation.
Finally, \cref{app:approximation-limit} passes the quantitative
estimate to the original potential.
\subsection{Integration by parts and smooth approximation}\label{app:approximation}
We first justify the integrations by parts in
\cref{app:moments}.
Let $\Phi$ be a locally Lipschitz vector field such
that $\Phi$ and its weak divergence have at most
polynomial growth.
Choose $\chi_n\in C_c^\infty(\R^d)$ with
$0\le\chi_n\le1$, equal to one on the ball of radius
$n$, supported on the ball of radius $2n$, and
satisfying $\norm{\nabla\chi_n}_\infty\le C/n$.
Integration by parts with compact support gives
\[
 \int\chi_n\Div\Phi\dd\pi_\lambda
 =\int\chi_n\ip{\Phi}{\drift}\dd\pi_\lambda
  -\int\ip{\nabla\chi_n}{\Phi}\dd\pi_\lambda.
\]
Strong convexity gives Gaussian tails for
$\pi_\lambda$, and $\drift$ grows at most linearly.
The first two terms therefore converge by dominated
convergence, while
\[
 \left|
   \int\ip{\nabla\chi_n}{\Phi}\dd\pi_\lambda
 \right|
 \le\frac Cn\int\norm{\Phi}\dd\pi_\lambda
 \longrightarrow0.
\]
Thus
\[
 \int\Div\Phi\dd\pi_\lambda
 =\int\ip{\Phi}{\drift}\dd\pi_\lambda.
\]
The vector fields used in \cref{app:moments}
satisfy these conditions.

Consider the mollified potentials defined in
\cref{app:renyi-def}.
Convolution with the nonnegative kernel $\varrho_\delta$
preserves the structural bounds:
\[
 \begin{gathered}
 mI\preceq\nabla^2f^{(\delta)}\preceq L_fI,
 \qquad
 \tr\bigl(\nabla^2f^{(\delta)}\bigr)\le\tau_f,\\
 \norm{\nabla g_\lambda^{(\delta)}}\le G,
 \qquad
 0\preceq\nabla^2g_\lambda^{(\delta)}
       \preceq\lambda^{-1}I.
 \end{gathered}
\]
Thus the same parameters and step restrictions apply for
every $\delta>0$.
For each fixed $\delta$, the mollified potentials are smooth
and all their derivatives of order at least two are bounded.
These derivative bounds may depend on $\delta$.

The function $g_\lambda^{(\delta)}$ need not be the Moreau
envelope of the original $g$.
The estimates in \cref{app:renyi} use the convexity, gradient,
and Hessian bounds displayed above.
The convergence of the regularized potentials and their
associated laws is treated in \cref{app:approximation-limit}.

\subsection{Finiteness before the R\'enyi calculation}\label{app:finiteness}
\begin{proof}[Proof of \cref{lem:prior-finiteness}]
Fix $\alpha\ge2$, $\lambda$, and $h$ satisfying the restrictions
of \cref{lem:ratios}, and fix an auxiliary scale $\delta>0$.
Throughout this proof, the potentials, densities, kernels,
and maps are the regularized objects defined in
\cref{app:renyi-def}; their superscripts $(\delta)$ are omitted.
In particular, the function $f$ and its minimizer introduced
below belong to this regularized problem.
In this subsection only, $C$ and its subscripted versions may depend on the fixed problem data and on $\alpha,\lambda,h,\delta$. The step-size constant $c$ remains universal.

We first establish an exponential moment of the invariant law,
then derive density and gradient bounds uniform in
$t\in[0,h]$, and finally control the weighted score and joint
moments.
The finite bounds obtained in this proof may depend on the
fixed parameters and on $\delta$.
Their values are used only to justify the analytic operations
in \cref{app:renyi} and do not enter the quantitative
step restrictions.
For $hL_\lambda<c<1$, the map $T_h$ is $(1-mh)$-Lipschitz.
The Euler kernel therefore contracts $\mathcal P_2(\R^d)$ in $W_2$ and has a unique invariant law.

\paragraph{An exponential moment of the invariant law.}
We prove the exponential moment bound \eqref{eq:finite-exp} for $\inv$.
Let $x_f=\arg\min f$, $F=f-f(x_f)$, and $\eta=1-1/(8\alpha)$, only within this subsection.
Strong convexity and smoothness give
\begin{equation}\label{eq:finite-F-bounds}
 F(x)\ge\frac m2\norm{x-x_f}^2,
 \qquad
 2mF(x)\le\norm{\nabla F(x)}^2\le2L_fF(x).
\end{equation}
For
\[
 Y=x-h\drift(x)+\sqrt{2h}\,Z,
\]
the $L_f$-smoothness of $f$ gives
\[
 F(Y)-F(x)
 \le\ip{\nabla f(x)}{Y-x}
       +\frac{L_f}{2}\norm{Y-x}^2.
\]
Substituting $Y-x=-h\drift(x)+\sqrt{2h}\,Z$
and expanding the square yields
\begin{align*}
 F(Y)-F(x)
 \le{}&-h\ip{\nabla f(x)}{\drift(x)}
       +\frac{h^2L_f}{2}\norm{\drift(x)}^2\\
 &+\sqrt{2h}\,
       \ip{\nabla f(x)-hL_f\drift(x)}{Z}
       +hL_f\norm Z^2.
\end{align*}

For $\xi\in\R^d$ and $\theta<1/2$, completing
the square gives
\[
 -\frac12\norm z^2+\ip{\xi}{z}+\theta\norm z^2
 =
 -\frac{1-2\theta}{2}
       \left\|z-\frac{\xi}{1-2\theta}\right\|^2
 +\frac{\norm\xi^2}{2(1-2\theta)}.
\]
Integrating against the standard Gaussian density
therefore gives
\begin{equation}\label{eq:finite-gaussian}
 \E e^{\ip{\xi}{Z}+\theta\norm Z^2}
 =(1-2\theta)^{-d/2}
   \exp\left\{
      \frac{\norm\xi^2}{2(1-2\theta)}
   \right\}.
\end{equation}
The step-size restrictions ensure $2\eta hL_f<1$.
Apply \eqref{eq:finite-gaussian} with
\[
 \theta=\eta hL_f,
 \qquad
 \xi=\eta\sqrt{2h}
          \bigl(\nabla f(x)-hL_f\drift(x)\bigr).
\]
Since
\[
 \frac{Q_{\lambda,h}e^{\eta F}(x)}
      {e^{\eta F(x)}}
 =\E e^{\eta(F(Y)-F(x))},
\]
we obtain
\begin{align*}
 \log\frac{Q_{\lambda,h}e^{\eta F}(x)}
               {e^{\eta F(x)}}
 \le{}&-\frac d2\log(1-2\eta hL_f)
       -\eta h\ip{\nabla f(x)}{\drift(x)}\\
 &+\frac{\eta h^2L_f}{2}\norm{\drift(x)}^2
       +\frac{\eta^2h}{1-2\eta hL_f}
          \norm{\nabla f(x)-hL_f\drift(x)}^2\\
 ={}&-\frac d2\log(1-2\eta hL_f)\\
 &+\frac{\eta h}{1-2\eta hL_f}
   \left[
     \eta\norm{\nabla f(x)}^2
     -\ip{\nabla f(x)}{\drift(x)}
     +\frac{hL_f}{2}\norm{\drift(x)}^2
   \right].
\end{align*}
Finally, substitute $\drift=\nabla f+\nabla g_\lambda$ and use $\norm{\nabla g_\lambda}\le G$ to obtain
\begin{align}\label{eq:Lyap-exact}
 \log\frac{Q_{\lambda,h}e^{\eta F}(x)}{e^{\eta F(x)}}
 \le{}&-\frac d2\log(1-2\eta hL_f)\notag\\
 &+\frac{\eta h}{1-2\eta hL_f}
 \Bigl[(-(1-\eta)+hL_f/2)\norm{\nabla f(x)}^2\notag\\
 &\hspace{28mm}-(1-hL_f)\ip{\nabla f(x)}{\nabla g_\lambda(x)}
                    +(hL_f/2)G^2\Bigr].
\end{align}
The first condition in \eqref{eq:ratio-step}, together with $\tau_f\ge m$, gives $hL_f\le c/\alpha^2$.
Since $1-\eta=1/(8\alpha)$, taking the universal constant $c$ small enough ensures
\[
 hL_f\le\frac{1-\eta}{2},
 \qquad
 1-2\eta hL_f\ge\frac12.
\]
Hence
\[
 -(1-\eta)+\frac{hL_f}{2}
 \le-\frac34(1-\eta).
\]
For the mixed term, Young's inequality gives
\begin{align*}
 -(1-hL_f)
       \ip{\nabla f(x)}{\nabla g_\lambda(x)}
 \le G\norm{\nabla f(x)}
 \le\frac{1-\eta}{4}\norm{\nabla f(x)}^2
       +\frac{G^2}{1-\eta}.
\end{align*}
The bracket in \eqref{eq:Lyap-exact} is therefore at most
\[
 -\frac{1-\eta}{2}\norm{\nabla f(x)}^2
 +\left(\frac1{1-\eta}+\frac{hL_f}{2}\right)G^2.
\]
Also, $-\log(1-z)\le2z$ for $0\le z\le1/2$ gives
\[
 -\frac d2\log(1-2\eta hL_f)
 \le2\eta dhL_f.
\]
Using $\norm{\nabla f(x)}^2\ge2mF(x)$, we conclude that
\[
 \log\frac{Q_{\lambda,h}e^{\eta F}(x)}
               {e^{\eta F(x)}}
 \le-\eta(1-\eta)mhF(x)+C_\alpha h.
\]
Thus, with $c_\alpha=\eta(1-\eta)>0$,
\begin{equation}\label{eq:Lyap}
 Q_{\lambda,h}e^{\eta F}
 \le e^{\eta F}\exp\{-c_\alpha mhF+C_\alpha h\}.
\end{equation}
Choose a finite $K>0$ such that $c_\alpha mK\ge C_\alpha+1$.
For $F(x)\ge K$, \eqref{eq:Lyap} gives
\[
 Q_{\lambda,h}e^{\eta F}(x)
 \le e^{-h}e^{\eta F(x)}.
\]
For $F(x)<K$, it gives
\[
 Q_{\lambda,h}e^{\eta F}(x)
 \le e^{\eta K+C_\alpha h}.
\]
Combining the two bounds, we obtain, for every $x$,
\[
 Q_{\lambda,h}e^{\eta F}(x)
 \le e^{-h}e^{\eta F(x)}
       +e^{\eta K+C_\alpha h}.
\]

Start the Euler chain at $X_0=x_f$. Since $F(x_f)=0$, taking expectations and iterating the preceding inequality gives
\[
 \E e^{\eta F(X_n)}
 \le e^{-nh}
       +e^{\eta K+C_\alpha h}
          \sum_{j=0}^{n-1}e^{-jh}
 \le1+\frac{e^{\eta K+C_\alpha h}}{1-e^{-h}}.
\]
Each expectation is finite by induction, and the bound is independent of $n$.
The $W_2$ contraction gives convergence of the law of $X_n$ to $\inv$.
Since $e^{\eta F}$ is continuous and nonnegative, lower semicontinuity yields
\begin{equation}\label{eq:finite-exp}
 \int e^{\eta F}\inv\dx
 \le\liminf_{n\to\infty}\E e^{\eta F(X_n)}
 <\infty.
\end{equation}

\paragraph{Bounds for the density and its gradient.}
We bound $\nu_t$ and its gradient uniformly for $0\le t\le h$, as stated in \eqref{eq:finite-envelope}.

We first prove the comparison \eqref{eq:finite-weight} between $F(y)$ and $F(z)$.
For $0<\beta<\gamma$, Taylor's upper bound gives
\begin{align*}
 \beta F(y)-\gamma F(z)
 \le{}&-(\gamma-\beta)F(z)
       +\beta\ip{\nabla F(z)}{y-z}
       +\frac{\beta L_f}{2}\norm{y-z}^2.
\end{align*}
By Young's inequality,
\begin{align*}
 \beta\ip{\nabla F(z)}{y-z}
 &\le
 \frac{\gamma-\beta}{2L_f}\norm{\nabla F(z)}^2
 +\frac{\beta^2L_f}{2(\gamma-\beta)}\norm{y-z}^2\\
 &\le
 (\gamma-\beta)F(z)
 +\frac{\beta^2L_f}{2(\gamma-\beta)}\norm{y-z}^2,
\end{align*}
where the second inequality uses $\norm{\nabla F(z)}^2\le2L_fF(z)$.
Substitution gives
\begin{equation}\label{eq:finite-weight}
 \beta F(y)-\gamma F(z)
 \le\frac{\beta\gamma L_f}{2(\gamma-\beta)}
          \norm{y-z}^2.
\end{equation}

We also need an upper bound for $F(T_tx)$.
For $0\le t\le h$, Taylor's upper bound gives
\begin{align*}
 F(T_tx)-F(x)
 &\le
 -t\ip{\nabla f(x)}{\drift(x)}
 +\frac{t^2L_f}{2}\norm{\drift(x)}^2\\
 &=
 -\frac t2\norm{\nabla f(x)}^2
 +\frac t2\norm{\nabla g_\lambda(x)}^2
 -\frac{t(1-tL_f)}2\norm{\drift(x)}^2\\
 &\le
 -\frac t2\norm{\nabla f(x)}^2+\frac t2G^2
 \le\frac t2G^2,
\end{align*}
using $tL_f\le1$ and $\norm{\nabla g_\lambda}\le G$.
Thus
\begin{equation}\label{eq:finite-descent}
 F(T_tx)\le F(x)+\frac t2G^2,
 \qquad 0\le t\le h.
\end{equation}
Choose the two intermediate exponents
\[
 \beta_1:=1-\frac1{4\alpha},\qquad
 \beta_2:=1-\frac1{2\alpha},
\]
and decrease the universal step constant so that $h\alpha L_f\le1/32$.

The stationary identity and \eqref{eq:heat-def} give
\[
 \inv(y)
 =(4\pi h)^{-d/2}
   \int e^{-\norm{y-T_hx}^2/(4h)}\inv(x)\dx.
\]
Differentiating the Gaussian kernel gives
\[
 \nabla\inv(y)
 =-(4\pi h)^{-d/2}
   \int
      \frac{y-T_hx}{2h}
      e^{-\norm{y-T_hx}^2/(4h)}
      \inv(x)\dx.
\]
This differentiation is justified because the kernel and its spatial derivatives are bounded for fixed $h>0$.
Since
\[
 (4\pi h)^{-d/2}
 \left(1+\frac{\norm{y-T_hx}}{2h}\right)
 e^{-\norm{y-T_hx}^2/(4h)}
 \le C_h e^{-\norm{y-T_hx}^2/(8h)},
\]
the density and its gradient can be estimated using the same Gaussian bound.
In \eqref{eq:finite-weight}, the choice $(\gamma,\beta)=(\eta,\beta_1)$ gives a coefficient at most $4\alpha L_f$.
Thus
\begin{align*}
 &e^{\beta_1F(y)}\bigl(\inv(y)+\norm{\nabla\inv(y)}\bigr)\\
 &\quad\le C_h\int e^{\eta F(T_hx)}
 \exp\!\left\{-\left(\frac1{8h}-4\alpha L_f\right)
                     \norm{y-T_hx}^2\right\}\inv(x)\dx\\
 &\quad\le C_he^{\eta hG^2/2}\int e^{\eta F(x)}\inv(x)\dx<\infty.
\end{align*}
The last inequality uses $h\alpha L_f\le1/32$, \eqref{eq:finite-descent} with $t=h$, and \eqref{eq:finite-exp}.
The bound is independent of $y$.

The change-of-variables formula gives
\[
 ((T_t)_\#\inv)(y)
 =\inv(T_t^{-1}y)\,|\det DT_t^{-1}(y)|.
\]
By \cref{prop:pushforward},
\[
 DT_t^{-1}(y)
 =
 \bigl(
   I-t\nabla^2U_\lambda(T_t^{-1}y)
 \bigr)^{-1},
 \qquad
 \norm{DT_t^{-1}(y)}_{\mathrm{op}}
 \le(1-hL_\lambda)^{-1}.
\]
The determinant is positive.
Differentiating the density formula gives
\begin{align*}
 \nabla((T_t)_\#\inv)(y)
 ={}&
 \det DT_t^{-1}(y)\,
 [DT_t^{-1}(y)]^{\mathsf T}
 \nabla\inv(T_t^{-1}y)\\
 &+\inv(T_t^{-1}y)\,
       \nabla\det DT_t^{-1}(y).
\end{align*}
At the fixed smoothing scale, the third derivatives
of $U_\lambda$ are bounded.
Differentiating the inverse-matrix formula above
therefore shows that
$\det DT_t^{-1}$ and
$\nabla\det DT_t^{-1}$
are uniformly bounded for $0\le t\le h$.

The preceding bounds on $\inv$ and $\nabla\inv$
now imply
\[
 ((T_t)_\#\inv)(y)
 +\norm{\nabla((T_t)_\#\inv)(y)}
 \le C e^{-\beta_1F(T_t^{-1}y)}.
\]
Applying \eqref{eq:finite-descent} with $x=T_t^{-1}y$ gives
\[
 F(y)\le F(T_t^{-1}y)+tG^2/2.
\]
Consequently,
\begin{equation}\label{eq:finite-push-envelope}
 ((T_t)_\#\inv)(y)
 +\norm{\nabla((T_t)_\#\inv)(y)}
 \le C e^{-\beta_1F(y)},
 \qquad 0\le t\le h.
\end{equation}
with one constant for the whole time interval.

For the final heat convolution, apply \eqref{eq:finite-weight} with $(\gamma,\beta)=(\beta_1,\beta_2)$, whose coefficient is at most $2\alpha L_f$.
By \eqref{eq:finite-push-envelope} and \eqref{eq:finite-F-bounds}, both the pushed density and its gradient are integrable. Spatial integration by parts therefore gives
\[
 \nabla\heat_t((T_t)_\#\inv)
 =
 \heat_t\nabla((T_t)_\#\inv).
\]
For $t>0$ it follows that
\begin{align*}
 &e^{\beta_2F(y)}\bigl(\nu_t(y)+\norm{\nabla\nu_t(y)}\bigr)\\
 &\quad\le C(4\pi t)^{-d/2}
 \int\exp\!\left\{-\left(\frac1{4t}-2\alpha L_f\right)
                         \norm{y-z}^2\right\}\dd z\\
 &\quad=C(1-8\alpha tL_f)^{-d/2}\le C'.
\end{align*}
The case $t=0$ follows from \eqref{eq:finite-push-envelope}, since $T_0$ is the identity, $F\ge0$, and $\beta_1>\beta_2$.
Consequently, with a constant uniform for $0\le t\le h$,
\begin{equation}\label{eq:finite-envelope}
 \nu_t(y)+\norm{\nabla\nu_t(y)}
 \le C_{\alpha,h}\exp\{-(1-1/(2\alpha))F(y)\}.
\end{equation}
\paragraph{Regularity and endpoint continuity.}

We show that the first time derivative and the
spatial derivatives up to order two of $\nu_t$
exist and are continuous for $0<t<h$.
We then check continuity at the two time endpoints.

For $t>0$, write the Gaussian representation as
\[
 \nu_t(y)=\int\kappa_t(y,x)\inv(x)\dx,
 \qquad
 \kappa_t(y,x)
 =(4\pi t)^{-d/2}
   \exp\left\{-\frac{\norm{y-T_tx}^2}{4t}\right\}.
\]
Here $T_tx=x-t\drift(x)$, while $\inv$ is fixed
throughout the interpolation.

The spatial derivatives of the kernel are
\begin{align*}
 \nabla_y\kappa_t(y,x)
 &=-\frac{y-T_tx}{2t}\,\kappa_t(y,x),\\
 \nabla_y^2\kappa_t(y,x)
 &=
 \left[
   \frac{(y-T_tx)(y-T_tx)^{\mathsf T}}{4t^2}
   -\frac{I_d}{2t}
 \right]\kappa_t(y,x).
\end{align*}
For the time derivative, note that $\partial_t(y-T_tx)=\drift(x)$.
Differentiating both the prefactor and the exponent gives
\[
 \partial_t\kappa_t(y,x)
 =
 \left[
   -\frac d{2t}
   +\frac{\norm{y-T_tx}^2}{4t^2}
   -\frac{\ip{y-T_tx}{\drift(x)}}{2t}
 \right]\kappa_t(y,x).
\]

Fix $0<\varepsilon<h$.
The functions
\[
 z\longmapsto
 (1+\norm z+\norm z^2)e^{-\norm z^2/4}
\]
are bounded on $\R^d$.
Applying this observation with $z=(y-T_tx)/\sqrt t$ to the displayed derivatives gives, for $\varepsilon\le t\le h$ and all $x,y$,
\begin{align*}
 &|\kappa_t(y,x)|
   +\norm{\nabla_y\kappa_t(y,x)}
   +\norm{\nabla_y^2\kappa_t(y,x)}
   +|\partial_t\kappa_t(y,x)|\\
 &\qquad\le
 C_\varepsilon\bigl(1+\norm{\drift(x)}\bigr).
\end{align*}
This bound is independent of $t$ and $y$ in the stated range.

The drift grows at most linearly, and the quadratic lower bound in \eqref{eq:finite-F-bounds} implies
\[
 1+\norm{\drift(x)}
 \le C(1+\norm{x-x_f})
 \le C'e^{\eta F(x)}.
\]
Thus \eqref{eq:finite-exp} makes the common bound integrable against $\inv(x)\dx$.

We may therefore differentiate the integral once in time and twice in space:
\begin{align*}
 \partial_t\nu_t(y)
 &=\int\partial_t\kappa_t(y,x)\inv(x)\dx,\\
 \nabla_y^j\nu_t(y)
 &=\int\nabla_y^j\kappa_t(y,x)\inv(x)\dx,
 \qquad j=1,2.
\end{align*}
For each fixed $x$, the kernel and all the displayed derivatives are continuous in $(t,y)$.
Dominated convergence, using the same common bound, shows that their integrals are also continuous in $(t,y)$.
Since $\varepsilon>0$ is arbitrary, $\nu$ is locally $C^{1,2}$ on $(0,h)\times\R^d$.

Moreover, $\kappa_t(y,x)>0$ for every $t>0$ and every $x,y$.
Since $\inv$ is a probability density, this gives $\nu_t(y)>0$.
The same dominated-convergence argument applies as $t\uparrow h$, so
\[
 \nu_t(y)\longrightarrow\nu_h(y)=\inv(y).
\]

We now prove continuity at $t=0$.
First, the inverse-map bound in \cref{prop:pushforward} gives
\begin{align*}
 \norm{T_t^{-1}y-y}
 &=\norm{T_t^{-1}y-T_t^{-1}(T_ty)}\\
 &\le\frac{1}{1-tL_\lambda}\norm{y-T_ty}\\
 &=\frac{t}{1-tL_\lambda}\norm{\drift(y)}
 \longrightarrow0.
\end{align*}
The inverse derivative also satisfies
\[
 DT_t^{-1}(y)
 =
 \bigl(I-t\nabla^2U_\lambda(T_t^{-1}y)\bigr)^{-1},
 \qquad
 \norm{DT_t^{-1}(y)-I_d}_{\mathrm{op}}
 \le\frac{tL_\lambda}{1-tL_\lambda}
 \longrightarrow0.
\]
Using the change-of-variables formula \eqref{eq:push-density} and the continuity of $\inv=\nu_h$, we obtain
\[
 ((T_t)_\#\inv)(y)
 =
 \inv(T_t^{-1}y)\det DT_t^{-1}(y)
 \longrightarrow\inv(y).
\]

We next estimate the effect of the Gaussian increment.
Equation \eqref{eq:finite-push-envelope} and $F\ge0$ give
\[
 \sup_{0\le r\le h}
 \norm{\nabla((T_r)_\#\inv)}_\infty\le C.
\]
Thus these densities are Lipschitz with the same constant for all $0\le r\le h$.
By \eqref{eq:heat-def} and $\nu_t=\heat_t((T_t)_\#\inv)$,
\begin{align*}
 &|\nu_t(y)-((T_t)_\#\inv)(y)|\\
 &\quad=
 \left|
 \E\left[
   ((T_t)_\#\inv)(y+\sqrt{2t}\,Z)
   -((T_t)_\#\inv)(y)
 \right]
 \right|\\
 &\quad\le C\sqrt{2t}\,\E\norm Z
 \longrightarrow0.
\end{align*}
Together with
$((T_t)_\#\inv)(y)\to\inv(y)$, this proves
\[
 \nu_t(y)\longrightarrow\inv(y)=\nu_0(y)
 \qquad\text{as }t\downarrow0.
\]

\paragraph{Density-ratio and score integrals.}

Since $g_\lambda$ is $G$-Lipschitz,
\[
 \pi_\lambda(y)
 \ge C^{-1}
       \exp\{-F(y)-G\norm{y-x_f}\}.
\]
Combining this with \eqref{eq:finite-envelope} gives
\begin{align*}
 u(y)^\alpha\pi_\lambda(y)
 &=\nu_t(y)^\alpha\pi_\lambda(y)^{1-\alpha}\\
 &\le C\exp\left\{
   -\alpha\left(1-\frac1{2\alpha}\right)F(y)
   +(\alpha-1)\bigl(F(y)+G\norm{y-x_f}\bigr)
 \right\}\\
 &=C\exp\left\{
      -\frac12F(y)+(\alpha-1)G\norm{y-x_f}
    \right\}.
\end{align*}
Similarly,
\[
 u(y)^{3\alpha/2}\pi_\lambda(y)
 \le C\exp\left\{
      -\frac14F(y)
      +\left(\frac{3\alpha}{2}-1\right)
             G\norm{y-x_f}
    \right\}.
\]
Both bounds are integrable because
\[
 F(y)\ge\frac m2\norm{y-x_f}^2.
\]
Their constants are uniform for $0\le t\le h$.

For the score, use $s=\nabla\log\nu_t+\drift$ to obtain
\begin{align*}
 u^\alpha\pi_\lambda\norm s^2
 \le{}
 2\pi_\lambda^{1-\alpha}\nu_t^{\alpha-2}
       \norm{\nabla\nu_t}^2
 +2\pi_\lambda^{1-\alpha}\nu_t^\alpha
       \norm{\drift}^2.
\end{align*}
Since $\alpha\ge2$, the factor $\nu_t^{\alpha-2}$ can be bounded using \eqref{eq:finite-envelope}.
The same equation bounds $\nabla\nu_t$, and $\drift$ grows at most linearly.
Thus the sum of the two terms is bounded by
\[
 C\bigl(1+\norm{y-x_f}^2\bigr)
 \exp\left\{
      -\frac12F(y)+(\alpha-1)G\norm{y-x_f}
     \right\}.
\]
This bound is integrable and uniform in time.
We have therefore proved
\[
 \sup_{0\le t\le h}
 \int\bigl[
      u^\alpha(1+\norm s^2)+u^{3\alpha/2}
     \bigr]\dd\pi_\lambda<\infty.
\]

The pointwise endpoint continuity of $\nu_t$ and the integrable bound for $u^\alpha\pi_\lambda$ give the following limits by dominated convergence:
\begin{equation}\label{eq:finite-moment-continuity}
 \begin{aligned}
 \lim_{t\downarrow0}
       \int u(t)^\alpha\dd\pi_\lambda
 &=
 \int\left(\frac{\inv}{\pi_\lambda}\right)^\alpha
       \dd\pi_\lambda,\\
 \lim_{t\uparrow h}
       \int u(t)^\alpha\dd\pi_\lambda
 &=
 \int\left(\frac{\inv}{\pi_\lambda}\right)^\alpha
       \dd\pi_\lambda.
 \end{aligned}
\end{equation}
Since $\nu_0=\nu_h=\inv$, this proves continuity of $t\mapsto\int u(t)^\alpha\dd\pi_\lambda$ at both endpoints.

\paragraph{Joint weighted moments.}

We prove the joint moment bound in \cref{lem:prior-finiteness}.
The upper bound for $\nu_t$ and the lower bound for $\pi_\lambda$ give
\[
 u(y)^{\alpha-1}
 \le C\exp\left\{
       \frac{\alpha-1}{2\alpha}F(y)
       +(\alpha-1)G\norm{y-x_f}
     \right\}.
\]
Strong convexity and Young's inequality give
\[
 (\alpha-1)G\norm{y-x_f}
 \le\frac18F(y)
       +\frac{4(\alpha-1)^2G^2}{m}.
\]
Since $(\alpha-1)/(2\alpha)\le1/2$, it follows that
\[
 u(y)^{\alpha-1}\le C e^{5F(y)/8}.
\]

For every integer $k\ge0$, the quadratic lower bound in \eqref{eq:finite-F-bounds} gives
\[
 (1+\norm Y)^k\le C_k e^{F(Y)/32}.
\]
Consequently,
\begin{align*}
 &u(Y)^{\alpha-1}(1+\norm X+\norm Y)^k\\
 &\qquad\le
 C_k(1+\norm X)^k
       e^{(5/8+1/32)F(Y)}\\
 &\qquad=
 C_k(1+\norm X)^k e^{21F(Y)/32}.
\end{align*}

Conditional on $X$, Taylor's upper bound gives
\[
 F(Y)
 \le F(T_tX)
       +\sqrt{2t}\ip{\nabla F(T_tX)}{Z}
       +tL_f\norm Z^2.
\]
Applying \eqref{eq:finite-gaussian} yields
\begin{align*}
 \E\!\left[e^{21F(Y)/32}\mid X\right]
 \le{}&
 \left(1-\frac{21}{16}tL_f\right)^{-d/2}
 \exp\left\{
   \frac{21}{32}F(T_tX)
   +\frac{(21/32)^2t}
          {1-\frac{21}{16}tL_f}
       \norm{\nabla F(T_tX)}^2
 \right\}\\
 \le{}&
 \left(1-\frac{21}{16}tL_f\right)^{-d/2}
 \exp\left\{
   \frac{21F(T_tX)}
        {32(1-\frac{21}{16}tL_f)}
 \right\},
\end{align*}
where the second inequality uses
$\norm{\nabla F}^2\le2L_fF$.

The condition $h\alpha L_f\le1/32$ and $\alpha\ge2$ imply $tL_f\le1/64$.
In particular,
\[
 \frac{21}{32(1-\frac{21}{16}tL_f)}
 \le\frac34,
\]
and the Gaussian prefactor is bounded uniformly for $0\le t\le h$.
Using \eqref{eq:finite-descent} with $x=X$, we obtain
\[
 \E\!\left[e^{21F(Y)/32}\mid X\right]
 \le C e^{3F(X)/4},
 \qquad 0\le t\le h.
\]

Finally, $\eta=1-1/(8\alpha)\ge15/16$ gives $\eta-3/4\ge3/16$.
The quadratic lower bound in \eqref{eq:finite-F-bounds} therefore gives
\[
 (1+\norm X)^k e^{3F(X)/4}
 \le C_k e^{\eta F(X)}.
\]
Combining the preceding estimates gives
\begin{align*}
 &\E\!\left[
      u(Y)^{\alpha-1}(1+\norm X+\norm Y)^k
     \right]\\
 &\qquad\le
 C_k\E\!\left[
      (1+\norm X)^k e^{3F(X)/4}
     \right]\\
 &\qquad\le
 C_k'\int e^{\eta F(x)}\inv(x)\dx
 <\infty.
\end{align*}
The last inequality is \eqref{eq:finite-exp}, and the bound is uniform for $0\le t\le h$.

It remains to bound the integral containing $e_t$.
By its definition,
\[
 e_t(Y)
 =\E[\drift(Y)-\drift(X)\mid Y].
\]
Conditional Jensen's inequality gives
\begin{align*}
 \int u^\alpha\norm{e_t}^2\dd\pi_\lambda
 &=\E\!\left[
       u(Y)^{\alpha-1}\norm{e_t(Y)}^2
      \right]\\
 &\le\E\!\left[
       u(Y)^{\alpha-1}
       \norm{\drift(Y)-\drift(X)}^2
      \right]\\
 &\le L_\lambda^2
       \E\!\left[
        u(Y)^{\alpha-1}\norm{Y-X}^2
       \right].
\end{align*}
The last expression is uniformly finite by the joint moment bound with $k=2$.

This completes the proof of \cref{lem:prior-finiteness}.
\end{proof}

\subsection{Removing the auxiliary mollification}
\label{app:approximation-limit}

We now restore the superscripts $(\delta)$ and let $\delta\downarrow0$, with $\alpha,\lambda$, and $h$ fixed.
The global Lipschitz bound on $\drift$ gives
\[
 \varepsilon_\delta
 :=\norm{\nabla U_\lambda^{(\delta)}-\drift}_\infty
 \longrightarrow0.
\]
By the definitions of $f^{(\delta)}$ and $g_\lambda^{(\delta)}$,
\[
 U_\lambda^{(\delta)}(x)
 =(U_\lambda*\varrho_\delta)(x)
 =\int U_\lambda(x-\delta z)\varrho(z)\dd z.
\]
The symmetry of $\varrho$ gives
\[
 \int z\varrho(z)\dd z=0.
\]
Since $\nabla U_\lambda$ is $L_\lambda$-Lipschitz, Taylor's remainder satisfies
\[
 \left|
 U_\lambda(x-\delta z)-U_\lambda(x)
 +\delta\ip{\nabla U_\lambda(x)}{z}
 \right|
 \le\frac{L_\lambda\delta^2}{2}\norm z^2.
\]
Using the zero first moment, we can write
\begin{align*}
 U_\lambda^{(\delta)}(x)-U_\lambda(x)
 =\int\Bigl[
 U_\lambda(x-\delta z)-U_\lambda(x)
 +\delta\ip{\nabla U_\lambda(x)}{z}
 \Bigr]\varrho(z)\dd z.
\end{align*}
Integrating the remainder bound and taking the supremum over $x$ therefore gives
\[
 \norm{U_\lambda^{(\delta)}-U_\lambda}_\infty
 \le\frac{L_\lambda\delta^2}{2}
       \int\norm z^2\varrho(z)\dd z
 =O(L_\lambda\delta^2).
\]
The implicit constant depends only on the fixed kernel $\varrho$.
Consequently, $\pi_\lambda^{(\delta)}/\pi_\lambda$ converges uniformly to one.
In particular, $\pi_\lambda^{(\delta)}\le2\pi_\lambda$ for all sufficiently small $\delta$. Since $\pi_\lambda$ has a finite second moment, dominated convergence gives both weak convergence and convergence of the second moments. By the standard characterization of $W_2$ convergence,
\[
 W_2(\pi_\lambda^{(\delta)},\pi_\lambda)
 \longrightarrow0.
\]

Recall that
\[
 \varepsilon_\delta
 :=\norm{\nabla U_\lambda^{(\delta)}-\drift}_\infty,
 \qquad
 T_h^{(\delta)}x
 :=x-h\nabla U_\lambda^{(\delta)}(x).
\]
The smoothed and original potentials satisfy the same lower and upper Hessian bounds. Hence the contraction argument in \cref{prop:invariant} gives
\[
 \norm{T_h^{(\delta)}x-T_h^{(\delta)}y}
 \le(1-mh)\norm{x-y}.
\]
Also,
\[
 \norm{T_h^{(\delta)}x-T_hx}
 \le h\varepsilon_\delta.
\]

Choose an optimal coupling $(X_\delta,X)$ of $\widehat\pi_{\lambda,h}^{(\delta)}$ and $\inv$, so that
\[
 \left(\E\norm{X_\delta-X}^2\right)^{1/2}
 =
 W_2\bigl(
      \widehat\pi_{\lambda,h}^{(\delta)},\inv
     \bigr).
\]
Let $Z\sim N(0,I_d)$ be independent of this pair.
By invariance,
\[
 T_h^{(\delta)}X_\delta+\sqrt{2h}\,Z
 \sim\widehat\pi_{\lambda,h}^{(\delta)},
 \qquad
 T_hX+\sqrt{2h}\,Z\sim\inv.
\]
These two updated random variables therefore give another coupling of the same invariant laws.
Their Gaussian increments cancel in the difference. The definition of $W_2$ and the $L^2$ triangle inequality give
\begin{align*}
 W_2\bigl(
      \widehat\pi_{\lambda,h}^{(\delta)},\inv
     \bigr)
 &\quad\le
 \left(
   \E\norm{T_h^{(\delta)}X_\delta-T_hX}^2
 \right)^{1/2}\\
 &\quad\le
 \left(
   \E\norm{
       T_h^{(\delta)}X_\delta-T_h^{(\delta)}X
      }^2
 \right)^{1/2}
 +
 \left(
   \E\norm{T_h^{(\delta)}X-T_hX}^2
 \right)^{1/2}\\
 &\quad\le
 (1-mh)
 \left(\E\norm{X_\delta-X}^2\right)^{1/2}
 +h\varepsilon_\delta\\
 &\quad=
 (1-mh)
 W_2\bigl(
      \widehat\pi_{\lambda,h}^{(\delta)},\inv
     \bigr)
 +h\varepsilon_\delta.
\end{align*}
Rearranging and using $h>0$ gives
\[
 W_2\bigl(
      \widehat\pi_{\lambda,h}^{(\delta)},\inv
     \bigr)
 \le\frac{\varepsilon_\delta}{m}
 \longrightarrow0
 \qquad\text{as }\delta\downarrow0.
\]
The same coupling for the interpolations, together with $\operatorname{Lip}(T_t^{(\delta)})\le1-mt$, yields
\[
 \begin{aligned}
 W_2(\nu_t^{(\delta)},\nu_t)
 &\le
 (1-mt)
 W_2\bigl(\widehat\pi_{\lambda,h}^{(\delta)},
          \widehat\pi_{\lambda,h}\bigr)
 +t\varepsilon_\delta\\
 &\le\frac{\varepsilon_\delta}{m},
 \qquad 0\le t\le h.
 \end{aligned}
\]
Thus the regularized interpolations converge to the original interpolation uniformly in time in $W_2$.

Under the restrictions of \cref{lem:ratios}, the argument in \cref{app:renyi} gives
\[
 \mathcal R_\alpha
   (\nu_t^{(\delta)}\Vert\pi_\lambda^{(\delta)})
 \le\frac14
 \qquad
 \text{for every }\delta>0
 \text{ and }0\le t\le h.
\]
For each fixed $t\in[0,h]$, the $W_2$ convergence established above implies the weak convergence
\[
 \nu_t^{(\delta)}\Rightarrow\nu_t,
 \qquad
 \pi_\lambda^{(\delta)}\Rightarrow\pi_\lambda.
\]
Joint lower semicontinuity of the R\'enyi divergence, together with the uniform bound for the regularized problem, therefore gives
\[
 \mathcal R_\alpha(\nu_t\Vert\pi_\lambda)
 \le
 \liminf_{\delta\downarrow0}
 \mathcal R_\alpha
   (\nu_t^{(\delta)}\Vert\pi_\lambda^{(\delta)})
 \le\frac14.
\]
The same bound holds for every $t\in[0,h]$, so taking the supremum in time and using \eqref{eq:renyi-def} proves \eqref{eq:ratios}.
The same argument also preserves the full quantitative bound \eqref{eq:Renyi-final}.

\phantomsection
\addcontentsline{toc}{section}{References}
\bibliographystyle{plainnat}
\bibliography{reference}

\begin{thebibliography}{9}
\providecommand{\natexlab}[1]{#1}
\providecommand{\url}[1]{\texttt{#1}}
\expandafter\ifx\csname urlstyle\endcsname\relax
  \providecommand{\doi}[1]{doi: #1}\else
  \providecommand{\doi}{doi: \begingroup \urlstyle{rm}\Url}\fi

\bibitem[Chewi(2026)]{Chewi}
Sinho Chewi.
\newblock {Log-Concave Sampling}.
\newblock Book manuscript, 2026.
\newblock URL \url{https://chewisinho.github.io/main.pdf}.
\newblock Final draft, August 20, 2026.

\bibitem[Chewi et~al.(2024)Chewi, Erdogdu, Li, Shen, and Zhang]{ChewiEtAl}
Sinho Chewi, Murat~A. Erdogdu, Mufan~Bill Li, Ruoqi Shen, and Matthew Zhang.
\newblock Analysis of {Langevin Monte Carlo} from {Poincar\'e} to {Log-Sobolev}.
\newblock arXiv preprint arXiv:2112.12662v2, 2024.
\newblock URL \url{https://arxiv.org/abs/2112.12662v2}.

\bibitem[Durmus et~al.(2018)Durmus, Moulines, and Pereyra]{DMP2018}
Alain Durmus, {\'E}ric Moulines, and Marcelo Pereyra.
\newblock Efficient {Bayesian} computation by proximal {Markov} chain {Monte Carlo}: When {Langevin} meets {Moreau}.
\newblock \emph{SIAM Journal on Imaging Sciences}, 11\penalty0 (1):\penalty0 473--506, 2018.
\newblock \doi{10.1137/16M1108340}.

\bibitem[Fan et~al.(2023)Fan, Yuan, and Chen]{FanYuanChen2023}
Jiaojiao Fan, Bo~Yuan, and Yongxin Chen.
\newblock Improved dimension dependence of a proximal algorithm for sampling.
\newblock In \emph{Proceedings of Thirty Sixth Conference on Learning Theory}, volume 195 of \emph{Proceedings of Machine Learning Research}, pages 1473--1521. PMLR, 2023.
\newblock URL \url{https://proceedings.mlr.press/v195/fan23a.html}.

\bibitem[Pedrotti and Whalley(2026)]{PedrottiWhalley2026}
Francesco Pedrotti and Peter~A. Whalley.
\newblock {Wasserstein} mixing time of the unadjusted {Langevin} algorithm.
\newblock arXiv preprint arXiv:2608.02430, 2026.

\bibitem[Peyre(2016)]{Peyre}
R{\'e}mi Peyre.
\newblock Comparison between {$W_2$} distance and {$\dot{H}^{-1}$} norm, and localisation of {Wasserstein} distance.
\newblock arXiv preprint arXiv:1104.4631v2, 2016.
\newblock URL \url{https://arxiv.org/abs/1104.4631v2}.

\bibitem[Simon(2018)]{SimonGMT}
Leon Simon.
\newblock Introduction to geometric measure theory.
\newblock Lecture notes, 2018.
\newblock URL \url{https://math.stanford.edu/~lms/ntu-gmt-text.pdf}.
\newblock {NTU} lectures, revised March 7, 2018.

\bibitem[Xin and Zhang(2026{\natexlab{a}})]{ActiveTrace}
Yuchen Xin and Zhihua Zhang.
\newblock Active-trace complexity bounds for {Moreau--Yosida} unadjusted {Langevin} sampling.
\newblock arXiv preprint arXiv:2608.13467v1, 2026{\natexlab{a}}.
\newblock URL \url{https://arxiv.org/abs/2608.13467v1}.

\bibitem[Xin and Zhang(2026{\natexlab{b}})]{XinZhang}
Yuchen Xin and Zhihua Zhang.
\newblock Poisson-corrector complexity bounds for {Moreau--Yosida} unadjusted {Langevin} sampling.
\newblock arXiv preprint arXiv:2609.12594v1, 2026{\natexlab{b}}.
\newblock URL \url{https://arxiv.org/abs/2609.12594v1}.

\end{thebibliography}

\end{document}